\documentclass{article}

\usepackage[utf8]{inputenc}
\usepackage[margin=1in]{geometry}

\usepackage{amsmath,amsthm,amsfonts,amssymb,mathdots,array,epstopdf,mathrsfs,bm,stmaryrd,graphicx,subfigure,xcolor}
\usepackage{algpseudocode,algorithm,algorithmicx}

\usepackage[T1]{fontenc}
\usepackage{enumerate}
\usepackage{inputenc}

\usepackage{graphicx} 
\usepackage{subfigure}

\usepackage{booktabs,balance}

\newtheorem{theorem}{Theorem}[section]

\newtheorem{lemma}[theorem]{Lemma}
\newtheorem{proposition}[theorem]{Proposition}

\usepackage{yhmath}

\usepackage[colorlinks=true,linkcolor=blue,urlcolor=blue,citecolor=purple]{hyperref}

\theoremstyle{definition}

\theoremstyle{remark}
\newtheorem{remark}{Remark}

\newcommand{\wtd}{\widetilde}
\newcommand{\wht}{\widehat}

\DeclareMathOperator{\diag}{diag}
\DeclareMathOperator{\rank}{rank}
\DeclareMathOperator{\opt}{opt}
\DeclareMathOperator{\argmin}{argmin}

\newcommand{\J}{\mathcal{J}}

\newcommand{\PO}{\Pi_{\Omega}}

\newcommand{\tol}{\texttt{tol}}
\newcommand{\T}{\rm T}
\newcommand{\R}{\mathbb{R}}
\newcommand{\ve}[1]{\mathbf{#1}}
\newcommand{\ma}[1]{\mathbf{#1}}
\newcommand{\te}[1]{\bm{\mathcal{#1}}}

\newcommand{\bsmat}{\left[\begin{smallmatrix} }
\newcommand{\esmat}{\end{smallmatrix}\right] }

\renewcommand{\vec}{\text{vec}}

\begin{document}

\title{Tensor Completion via Tensor Networks with a Tucker Wrapper}

\author{\textbf{Yunfeng Cai, \ Ping Li} \\\\
Cognitive Computing Lab\\
Baidu Research\\
  No.10 Xibeiwang East Road, Beijing 100193, China\\
  10900 NE 8th St. Bellevue, Washington 98004, USA\\
  \texttt{\{caiyunfeng,\  liping11\}@baidu.com}
}

\date{}
\maketitle

\begin{abstract}
In recent years, low-rank tensor completion (LRTC) has received considerable attention
due to its applications in image/video inpainting, hyperspectral data recovery, etc.
With different notions of tensor rank (e.g., CP, Tucker, tensor train/ring, etc.), various optimization based numerical methods are proposed to LRTC.
However, tensor network based methods have not been proposed yet.
In this paper, we propose to solve LRTC via tensor networks with a Tucker wrapper.
Here by ``Tucker wrapper'' we mean that the outermost factor matrices of the tensor network are all orthonormal.
We formulate LRTC as
a problem of solving a system of nonlinear equations,
rather than a constrained optimization problem.
A two-level alternative least square method is then employed to update the unknown factors.
The computation of the method is dominated by tensor matrix multiplications and can be efficiently performed.
Also, under proper assumptions, it is shown that with high probability, the method converges to the exact solution at a linear rate.
Numerical simulations show that the proposed algorithm is comparable with state-of-the-art methods.
\end{abstract}

\section{Introduction}\label{sec:intro}

Tensors are multi-dimensional arrays, which are generalizations of vectors and matrices.
Tensors are natural tools for the representation of high dimensional data.
For example, EEG signal is a third tensor (time $\times$ frequency $\times$ electrodes);
a color video is a fourth-order tensor (width $\times$ height $\times$ 3 $\times$ time).
Tensors and their decompositions nowadays become increasingly popular
and become fundamental tools to deal with high dimensional data.
We refer the readers to~\cite{Article:Acar_TDDE09,Article:Kolda_SR09,Artice:Cichocki_SPM15,Article:Papalexakis_TIST17,Article:Sidiropoulos_TSP17} for tensors, their decompositions and applications.

Due to the data acquisition process and/or outliers, values can be missing in data.
People have great interests in inferring the missing values (e.g., recommender system).
However, there are infinitely many ways to fill in the missing values without further assumptions.
It is commonly assumed that the high dimensional data lie in a low dimensional manifold.
(For example, in a recommendation system, it is commonly believed that users' behaviors are dictated by a few common factors.)
Upon such an assumption, people may learn the low dimensional manifold from the observed data,
then infer the missing values.
In current literature for tensor completion,
the low dimensional manifold is represented by a ``low rank'' tensor decomposition.
Tensor rank differs from matrix rank dramatically
(e.g., a real-valued tensor may have different tensor ranks over $\mathbb{R}$ and $\mathbb{C}$;
the best low rank approximation of a high order tensor may not exist),
and it's the cornerstone of all methods for LRTC (low-rank tensor completion).

Mathematically, the LRTC problem can be formulated as the following optimization problem:
\begin{align}\label{main}
\min_{\te{X}} \rank_*(\te{X}),\quad \mbox{subject to} \quad \PO(\te{X})=\PO(\te{T}),
\end{align}
where $\rank_*(\cdot)$ denotes a specific type of tensor rank, $\Omega$ stores the indices of the observed entries,
and $\PO$ picks the entries of a tensor with entries' indices in $\Omega$.
With different notations of tensor rank, various methods are proposed to solve LRTC.
First, tensor has a CP decomposition (CPD)~\cite{carroll1970analysis,harshman1970foundations,hitchcock1927expression},
and the tensor CP rank. However, the determination of the CP rank is NP-hard~\cite{Article:Hastad_JOA90}.
Thus in practice the CP rank is usually treated as a parameter that can be tuned.
Several CP rank based methods are proposed in last two decades, e.g.,
INDAFAC~\cite{Article:Tomasi_2005}, CP-WOPT~\cite{Article:Acar_2011}, BPTF~\cite{Proc:Xiong_SDM10}, STC~\cite{Proc:Krishnamurthy_NIPS13}.
Second, tensors has Tucker decomposition/high-order SVD (HOSVD)~\cite{tucker1963implications,tucker1966some}
and the Tucker/multi-linear rank.
Based on such a decomposition, many numerical methods are proposed.
To name a few, pTucker~\cite{Proc:Chu_AISTATS09}, MRTF~\cite{Proc:Karatzoglou_Recsys10},
geomCG~\cite{Article:Kressner_BNM14}, Tmac~\cite{xu2013parallel}, FaLRTC/HaLRTC~\cite{Article:Liu_PAMI13}, Square Deal~\cite{Proc:Mu_ICML14}.
Tensors also have other decompositions and related rank definitions, e.g.,
tensor train (TT)~\cite{Article:Oseledets_SJSC11} and TT rank~\cite{Proc:Imaizumi_NIPS17},
tensor ring (TR) and TR rank~\cite{zhao2016tensor},
t-SVD and tubal-rank~\cite{kilmer2011factorization},
the related methods includes~\cite{Proc:Bengua_TIP17,ding2020tensor,Article:Grasedyck_SJSC15,Article:Ko_TIP20,Proc:Yuan_ICONIP17,Article:Yuan_SPIC19,Proc:Zhang_CVPR14}, etc.
Among the various numerical methods for LRTC, some of them have theoretical guarantees for exact recovery under proper assumptions, e.g., Jain and Oh~\cite{Proc:Jain_NIPS14}, Liu and Moitra~\cite{Proc:Liu_NeurIPS20},
Yuan and Zhang~\cite{Article:Yuan_IT17},
Mu {\em el al.}~\cite{Proc:Mu_ICML14}, Xia and Yuan~\cite{Article:Xia_FCSM19}, Zhang and Aeron~\cite{Article:Zhang_TSP17}.
We refer the readers to a recent survey~\cite{Article:Song_TKDD19} for a comprehensive overview of LRTC.

\begin{figure}[!hbt]
\centering
\subfigure[CP]{\includegraphics[width=2in]{./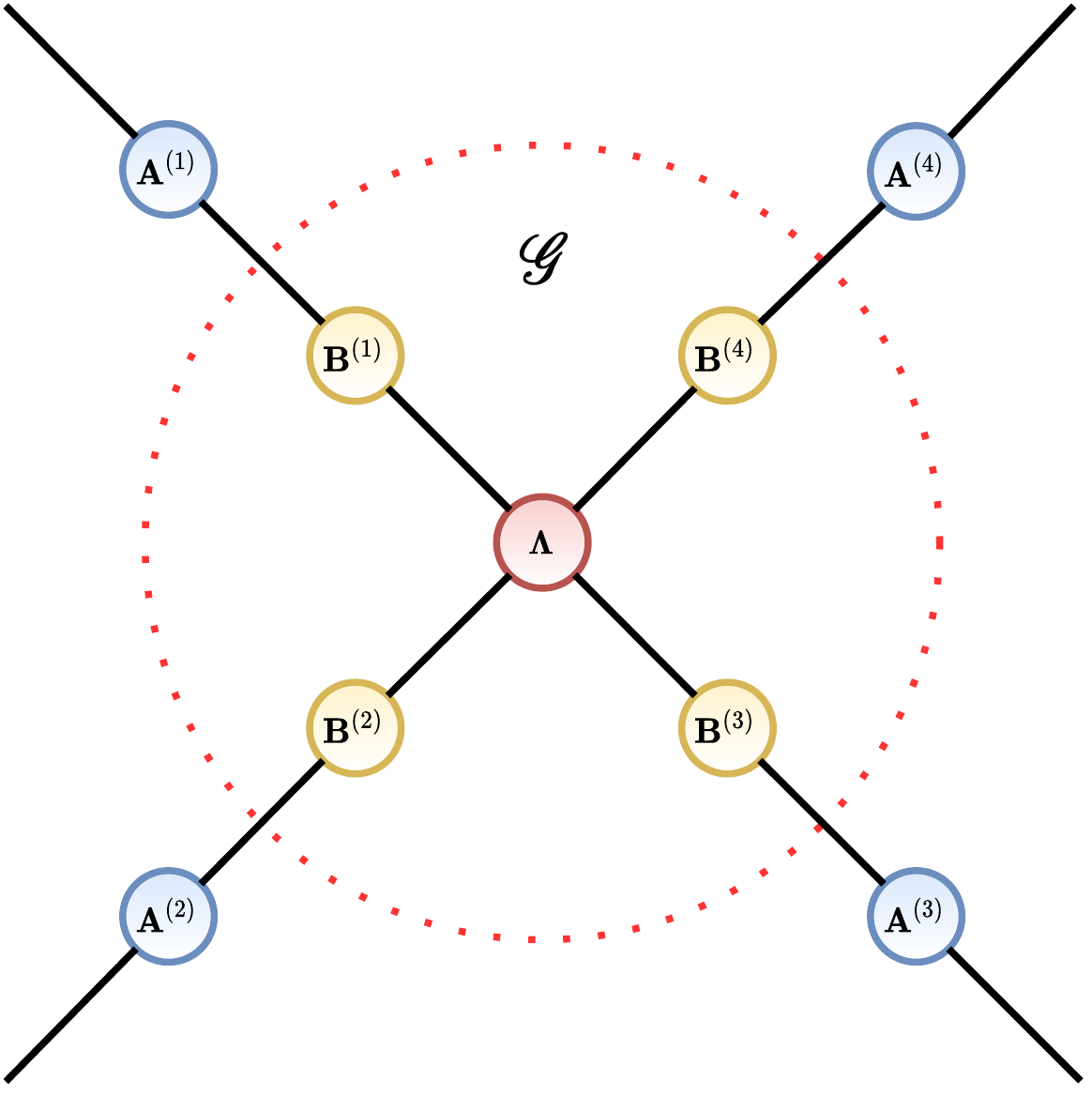}}\hspace{0.3in}
\subfigure[HT]{\includegraphics[width=2in]{./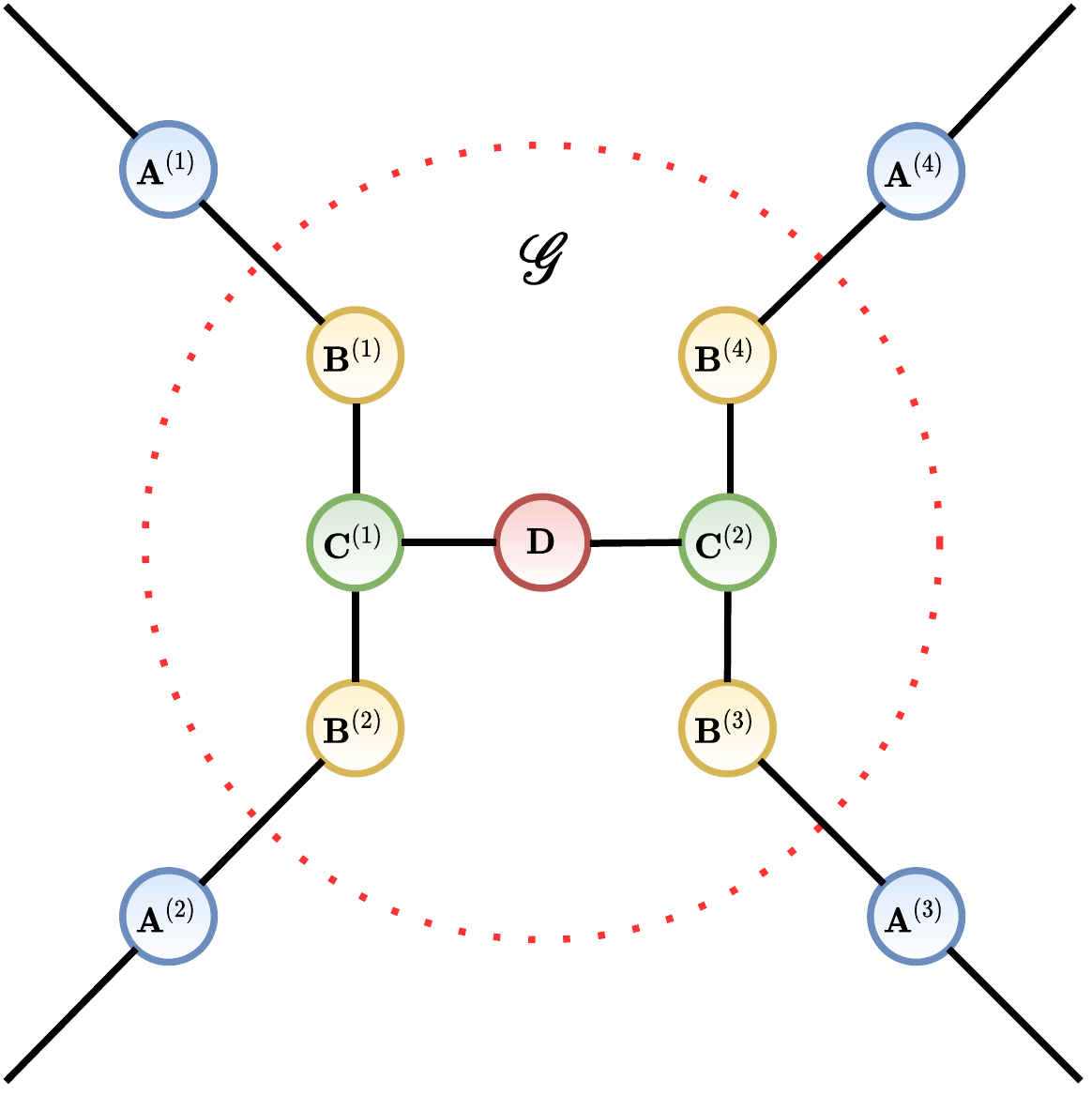}}

\subfigure[TT]{\includegraphics[width=2in]{./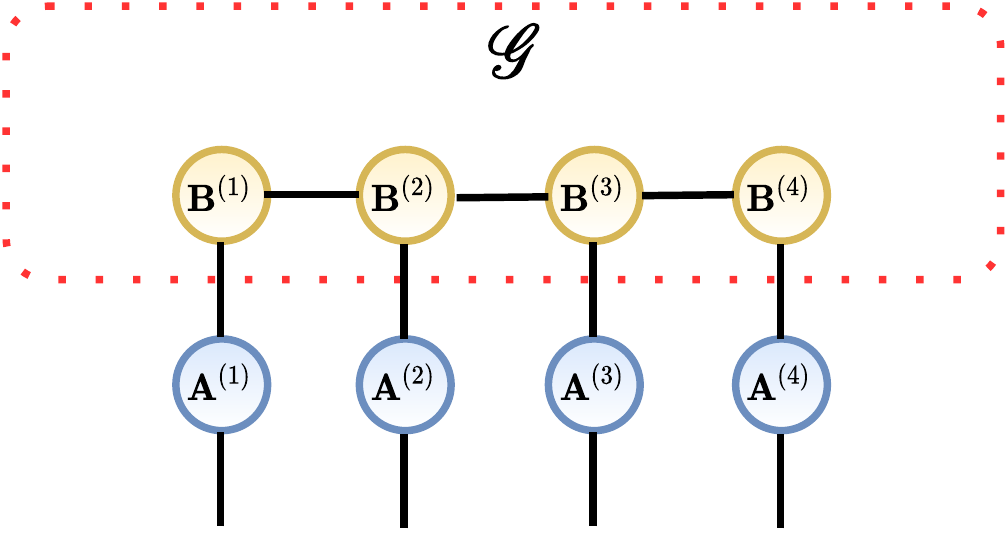}}\hspace{0.3in}
\subfigure[TR]{\includegraphics[width=2in]{./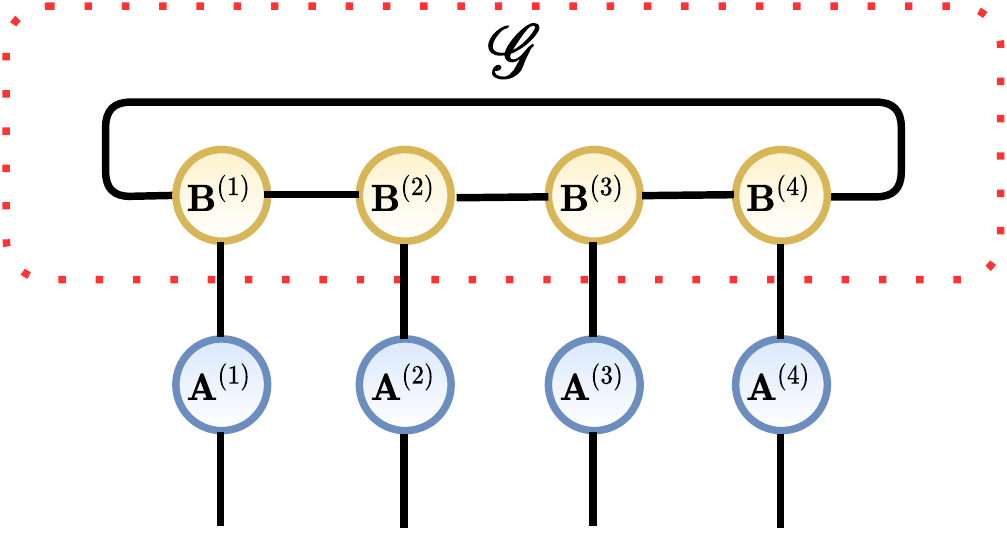}}
\vspace{-0.1in}
\caption{A fourth-order tensor network with a Tucker wrapper: canonical polyadic (CP), Hierarchical Tucker (HT), tensor train (TT) and tensor ring (TR), the weights of all edges are omitted for simplicity.}
\label{fig_1}
\end{figure}

To the best of the authors' knowledge,
 existing (popular) methods for LRTC are all optimization based.
And perhaps, due to the difficulty for finding an appropriate $\rank_*(\cdot)$ for tensor networks (TN), (which plays the role of the nuclear norm for matrix), optimization based methods have not been proposed for LRTC based on TN.
In this paper, we adopt a TN model with a Tucker wrapper,
or equivalently, a Tucker/HOSVD model with the core tensor having a TN structure, see Figure~\ref{fig_1} for illustrations, where the tensor diagram notation~\cite{bridgeman2017hand} is adopted. Unfamiliar readers may refer to Figure~\ref{fig_2} first.
What's more, we formulate the LRTC problem as
a problem solving a system of nonlinear equations (SNLE),
rather than a constrained optimization problem as in \eqref{main}.
Then we propose a two-level alternative least square method to solve the SNLE.
Under a ``low rank'' assumption, that is, the outermost factor matrices are low rank and the core tensor can be represented by a TN with a small number of parameters,
we show that with high probability, the method converges to the exact solution at a linear rate.
Finally, numerical simulations show the merits of the method.

The rest of this paper is organized as follows.
In Section~\ref{sec:pre}, we present some preliminary results.
In Section~\ref{sec:alg}, we formulate the LRTC problem as a problem of solving a system of nonlinear equations
and present an algorithm to solve it.
The convergence analysis of the algorithm is then presented in Section~\ref{sec:convergence}.
Numerical simulations are provided in Section~\ref{sec:numer}.
Concluding remarks are given in Section~\ref{sec:conclusion}.

\vspace{0.1in}
\noindent{\bf Notations}\; In this paper, we use lowercase letters to denote scalars (e.g., $a,b$),
boldface lowercase letters to denote column vectors (e.g., $\ve{a, b}$),
boldface uppercase letters to denote matrices (e.g., $\ma{A, B}$),
and boldface calligraphic letters to denote tensors (e.g., $\te{A, B}$).
The symbol $\otimes$ denotes the Kronecker product.
The operation $\vec(\ma{X})$ denotes the vectorization of the matrix $\ma{X}$ formed by stacking the columns of $\ma{X}$ into a single column vector.
The identity matrix of order $N$ is denoted by $\ma{I}_{N}$.
For a (rectangular) matrix $\ma{A}\in\R^{m\times n}$, its singular values are denoted by
$\sigma_1(\ma{A})\ge \dots\ge \sigma_{\min\{m, n\}}(\ma{A})\ge 0$,
and $\sigma_{\min\{m, n\}}(\ma{A})$ is usually denoted by $\sigma_{\min}(\ma{A})$.
The rank of $\ma{A}$ is denoted by $\rank(\ma{A})$.
The 2-norm and Frobenius norm are denoted by $\|\cdot\|$ and $\|\cdot\|_F$, respectively.
The range space of $\ma{A}$, which is the subspace spanned by the column vectors of $\ma{A}$, is denoted by $\mathcal{R}(A)$.

\section{Preliminary}\label{sec:pre}

In this section, we present some notations and preliminary results for facilitating of our following discussions.

\vspace{0.1in}
\noindent{\bf Canonical Angles}\;
Let $\mathcal{X}, \mathcal{Y}$ be two $k$-dimensional subspaces of $\R^n$.
Let $\ma{X}, \ma{Y} \in\R^{n\times k}$ be the orthonormal basis matrices of
$\mathcal{X}$ and $\mathcal{Y}$, respectively, i.e.,
\[
\mathcal{R}(\ma{X}) = \mathcal{X},\ \ma{X}^{\T}\ma{X}=\ma{I}_k,
\ \mbox{ and } \
\mathcal{R}(\ma{Y}) = \mathcal{Y},\ \ma{Y}^{\T}\ma{Y}=\ma{I}_k.
\]
Denote $\omega_j$ for $1\le j\le k$ the singular values of $\ma{Y}^{\T}\ma{X}$
in ascending order, i.e., $\omega_1\le\dots\le\omega_k$.
The $k$ {\em canonical angles $\theta_j(\mathcal{X},\mathcal{Y})$
between $\mathcal{X}$ and $\mathcal{Y}$ } are defined by
\begin{equation*}
0\le\theta_j(\mathcal{X},\mathcal{Y}):=\arccos\omega_j\le\frac {\pi}2,\quad\mbox{for $1\le j\le k$}.
\end{equation*}
They are in descending order, i.e., $\theta_1(\mathcal{X},\mathcal{Y})\ge\cdots\ge\theta_k(\mathcal{X},\mathcal{Y})$.
Set
\begin{equation*}
\Theta(\mathcal{X},\mathcal{Y})=\diag(\theta_1(\mathcal{X},\mathcal{Y}),\ldots,\theta_k(\mathcal{X},\mathcal{Y})).
\end{equation*}
Notice that if $k=1$, the canonical angle is nothing but the
angle between two vectors.
In what follows, we sometimes place a vector or matrix in one or both
arguments of $\theta_j(\,\cdot\,,\,\cdot\,)$ and $\Theta(\,\cdot\,,\,\cdot\,)$ with the meaning that
it is about the subspace spanned by the vector or the column vectors of the matrix argument.


\vspace{0.1in}
\noindent{\bf Modal Unfolding}\;
Given a tensor $\te{T}\in\R^{I_1\times\dots\times I_N}$, its mode-$n$ unfolding is an $I_n$-by-$\prod_{k\ne n}I_k$ matrix, its columns are the mode-$n$ fibers of $\te{T}$, denoted by $\te{T}_{(n)}$.

\vspace{0.1in}
\noindent{\bf Modal Product} \
Given a tensor $\te{T}\in\R^{I_1\times\dots\times I_N}$ and a matrix $\ma{U}\in\R^{J\times I_n}$, the mode-$n$ product of $\te{T}$ and $\ma{U}$ is an $I_1$-by-$\cdots$-by-$I_{n-1}$-by-$J$-by-$I_{n+1}$-by-$\cdots$-by-$I_N$ tensor, denoted by $\te{T}\times _n \ma{U}$. Let $\te{S}=\te{T}\times_n\ma{U}$. The mode-$n$ product can be defined via modal unfolding as
$\te{S}_{(n)}=\ma{U}\te{T}_{(n)}$.

The operation $\te{S}=\te{T}\times_1\ma{A}_1\dots\times_N \ma{A}_N$ is usually denoted as
$\llbracket \te{T}; \ma{A}_1,\dots, \ma{A}_N\rrbracket$. Furthermore, in such case, it holds the following important equality:
\begin{align*}
\te{S}_{(n)}= \ma{A}_n \te{T}_{(n)} (\ma{A}_N\otimes \dots \ma{A}_{n+1} \otimes \ma{A}_n \otimes \dots\otimes \ma{A}_1)^{\T}.
\end{align*}

\vspace{0.1in}
\noindent{\bf High Order SVD (HOSVD)}\; Given an $N$th tensor $\te{T}\in\R^{I_1\times\dots\times I_N}$. Let $\te{T}_n=\ma{U}_n\ma{\Sigma}_n\ma{V}_n^{\T}$ for $n=1,\dots, N$ be the SVDs of the modal unfoldings of $\te{T}$. Then the HOSVD of $\te{T}$ can be given by
\begin{align*}
\te{T}=\te{S}\times_1\ma{U}_1\dots\times_N\ma{U}_N=\llbracket \te{S}; \ma{U}_1,\dots, \ma{U}_N\rrbracket,
\end{align*}
where $\te{S}=\llbracket \te{S}; \ma{U}_1^{\T},\dots, \ma{U}_N^{\T}\rrbracket$ is the core tensor.
Furthermore, if the SVDs of $\te{T}_{(n)}$'s are economic (zero singular values of $\ma{\Sigma}_n$ are removed, the corresponding left and right singular vectors are also removed from the column vectors of $\ma{U}_n$ and $\ma{V}_n$, respectively),
then $\te{S}$ will be $r_1$-by-$\cdots$-by-$r_N$, where $r_n=\rank(\te{T}_{(n)})$.
In such case, the HOSVD will be referred to as an economic HOSVD.
The vector $(r_1,\dots,r_N)$ is the multi-linear rank of $\te{T}$, denoted by $\rank_n(\te{T})$.
In addition, if small singular values of $\ma{\Sigma}_n$ are also removed, then one would obtain the truncated HOSVD.

\vspace{0.1in}
\noindent{\bf Tucker Decomposition}\;
Given an $N$th tensor $\te{T}\in\R^{I_1\times\dots\times I_N}$ and a vector $\mathbf{r}\le \rank_n(\te{T})$ with inequality in at least one component.
The Tucker decomposition tries to find a tensor $\te{X}$
such that
\begin{equation}\label{tucker}
\min_{\te{X}}\|\te{T}-\te{X}\|_F, \; \mbox{s.t.}\; \rank_n(\te{X})=\mathbf{r}.
\end{equation}
The truncated HOSVD does not solve \eqref{tucker}, but provides a good approximation.

\vspace{0.1in}
\noindent{\bf Tensor Network and Graph}\;
A tensor network aims to represent a high order tensor into a set of lower order (usually 2 or 3) tensors, which are connected sparsely.
In such a way, the curse of dimensionality can be greatly alleviated or even  avoided.
Tensor diagram notation is a simple yet effective way to represent tensor networks,
in which a node represents a tensor,
an edge between two nodes indicates a contraction between the two connected node tensors in the associated pair of modes,
each outgoing edge represents a mode, and the weight above it indicates the size of the mode.
For example, in Figure~\ref{fig_2}, the node with none/one/two/three weighted edges stands for a scalar/a length $I$ vector/an $I_1$-by-$I_2$ matrix/an $I_1$-by-$I_2$-by-$I_3$ tensor,
two nodes connected by an edge with weight $I$ stands for the inner product of two length $I$ vectors,
two nodes with one outgoing edge each and one common edge stand for the multiplication between an $I_1$-by-$I_2$ matrix and an $I_2$-by-$I_3$ matrix.

\begin{figure}[!hbt]
\centering
\includegraphics[width=3.0in]{./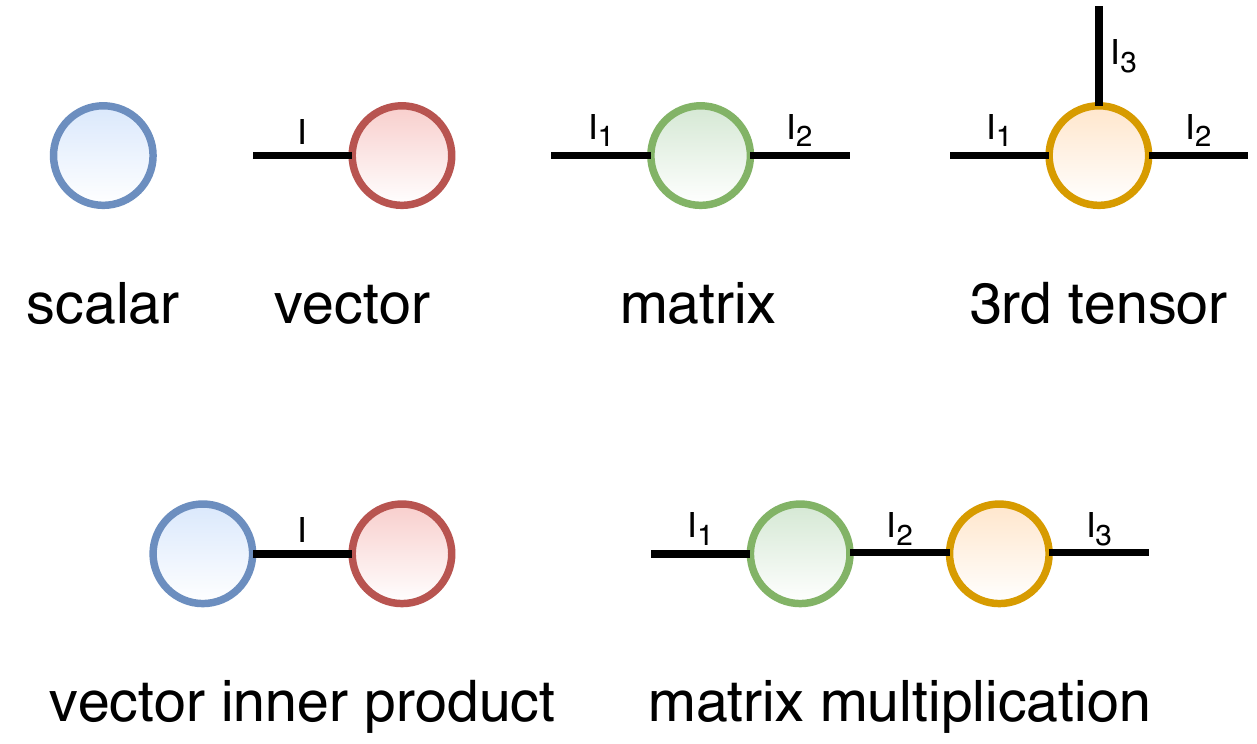}
\caption{Basic symbols for tensor diagram notations}
\label{fig_2}
\end{figure}

Simply speaking, a graph is a structure that consists of nodes that may or may not be connected with one another.
A graph is undirected if the edge is undirected.
A graph is weighted if each of its edges is assigned with a number (weight).

Now we may embed a tensor
$\te{T}\in\R^{I_1\times\dots\times I_N}$ into a weighted and undirected graph $\texttt{G}$
as follows: for each mode, pick a node from $\texttt{G}$ and assign the node an outgoing edge with weight $I_n$.
Denote the resultant graph-like structure as $\texttt{G}^+$.
We can uniquely construct a TN for a tensor $\te{T}$ from $\texttt{G}^+$
since $\texttt{G}^+$ is essentially a tensor diagram.
So, for any tensor with a TN decomposition, we can rewrite it as:
\begin{equation}\label{tng}
\te{T}\triangleq\te{T}(\texttt{G}^+(\ve{w},\ve{d}),\mathscr{B})
\end{equation}
where $\texttt{G}^+$ is a tensor diagram constructed
from a graph $\texttt{G}$,
$\ve{w}$ is the weight vector for $\texttt{G}$ (the edges of $\texttt{G}$ need to be numbered),
$\ve{d}$ is the weight vector of all outgoing edges, i.e., the dimension of $\te{T}$,
$\mathscr{B}$ is the collection of the node tensors in $\texttt{G}$. For example, in Figure~\ref{fig_3}, $\te{T}$ is a tensor of dimension $10\times20\times30\times40$,
$\ve{w}=(10,8,12,18,20,15)$, $\ve{d}=(10,20,30,40)$,
$\mathscr{B}$ consists of 5 matrices and 2 order-3 tensors.
Note here that modes 3 and 4 are assigned to the same node.

\begin{figure}[!hbt]
\centering
\includegraphics[width=3.3in]{./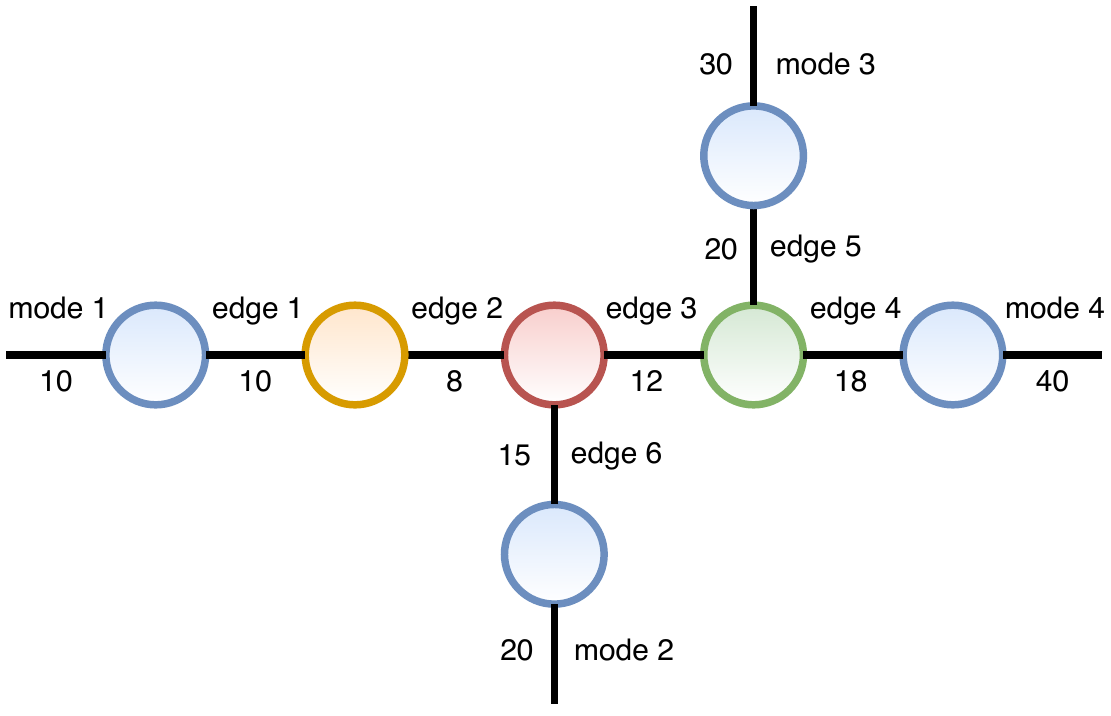}
\caption{An example for $\texttt{G}^{+}(\ve{w},\ve{d})$}
\label{fig_3}
\end{figure}

\section{Algorithm}\label{sec:alg}

In this section, we first motivate the method in Section~\ref{sec:snle},
then summarize it in Section~\ref{sec:alg2}.

\subsection{Solving a System of Nonlinear Equations}\label{sec:snle}

Let $\te{X}$ have the following TN decomposition with a Tucker wrapper:
\begin{align}\label{hosvd}
\te{X}=
\llbracket \te{G}(\texttt{G}^+(\ve{w},\ve{d}),\mathscr{B}); \ma{A}^{(1)},\dots,\ma{A}^{(N)}\rrbracket,
\end{align}
where $\te{G}=\te{G}(\texttt{G}^+(\ve{w},\ve{d}),\mathscr{B})\in\R^{r_1\times\dots\times r_N}$ is the core tensor with $\rank_n(\te{G})=(r_1,\dots,r_N)$,
$\texttt{G}^+(\ve{w},\ve{d})$ and $\mathscr{B}$ are similarly defined as in \eqref{tng},
$\ma{A}^{(n)}\in\R^{I_n\times r_n}$ has orthonormal columns for all $n$.
The reason why we add a Tucker wrapper is that the size of the core tensor $\te{G}$ is smaller (in practice much smaller) than that of $\te{T}$,
by instinct, the structure of TNs for $\te{G}$ should be simpler than that for $\te{T}$, and the parameters of TNs for $\te{G}$ should be much less than that for $\te{T}$.

Given $\texttt{G}^+(\ve{w},\ve{d})$ and assume $\te{X}$ has the decomposition \eqref{hosvd}, then
\begin{equation}\label{snle}
\PO(\te{X})=\PO(\te{T})
\end{equation}
is a system of nonlinear equations (SNLE) in $\ma{A}^{(n)}$'s and the node tensors in $\mathscr{B}$.
One may use the alternative least square (ALS) method to update $\ma{A}^{(n)}$'s and the node tensors in $\mathscr{B}$
in some prescribed order.
That is, fix all $\ma{A}^{(n)}$'s and all node tensors in $\mathscr{B}$ except one,
update the one via minimizing $\|\PO(\te{X}) - \PO(\te{T})\|_F$;
update all $\ma{A}^{(n)}$'s and the node tensors in $\mathscr{B}$ in some prescribed order until convergence.

In order to update the factor matrices $\ma{A}^{(n)}$'s and the node tensors in $\mathscr{B}$,
we need to know $\texttt{G}^+(\ve{w},\ve{d})$, i.e., the tensor diagram (including the weights of all edges).
Without knowing any information of $\texttt{G}^+(\ve{w},\ve{d})$, the problem NP-hard.
To the best of the authors' knowledge, such a problem is only recently discussed for tensor network decomposition~\cite{Proc:Li_ICML20},
where a genetic meta-algorithm was used, and only the simplest case was treated.

In this paper, to simplify the problem,
we assume $\texttt{G}^+$ and $\ve{w}$ to be known,
and we only have a good initial guess for $\ve{d}$.
Furthermore, for the sake of a convergence guarantee,
we propose to solve the SNLE via a two-level ALS  method.
Simply put,
in the first level, we update $\ma{A}^{(n)}$ for $n=1,2,\cdots,N$;
in the second level, we update the node tensors in $\mathscr{B}$ until the core tensor $\te{G}$ converges.
Next, we show how to update $\ma{A}^{(n)}$'s, the node tensors in $\mathscr{B}$ and the weight vector $\ve{d}$ in detail.

\paragraph{Updating the factor matrices $\ma{A}^{(n)}$'s}
Let $\te{G}_{t-1}$ be the current estimation for the core tensor.
Assume that we update $\ma{A}^{(n)}$ for $n=1,\dots,N$ one by one,
and we are updating $\ma{A}^{(n)}$ with $\ma{A}^{(1)}$, $\dots$, $\ma{A}^{(n-1)}$ updated,
$\ma{A}^{(n+1)}$, $\dots$, $\ma{A}^{(N)}$ to be updated.
Then we denote the current estimations for $\ma{A}^{(1)}$, $\dots$, $\ma{A}^{(n-1)}$ and $\ma{A}^{(n+1)}$, $\dots$, $\ma{A}^{(N)}$
by $\ma{A}_t^{(1)}$, $\dots$, $\ma{A}_t^{(n-1)}$ and $\ma{A}_{t-1}^{(n+1)}$, $\dots$, $\ma{A}_{t-1}^{(N)}$, respectively.
In order to update $\ma{A}^{(n)}$, we need to solve the following optimization problem:
\begin{align}
\|\PO(\llbracket \te{G}_{t-1}; \ma{A}_t^{(1)},\dots,\ma{A}_t^{(n-1)},\ma{X},\ma{A}_{t-1}^{(n+1)},\dots,\ma{A}_{t-1}^{(N)}\rrbracket )
-\PO(\te{T})\|_F=\min. \label{anx}
\end{align}
Notice that \eqref{anx} is nothing but a linear least square (LLS) problem,
since theoretically it can be rewritten in the form $\|\ma{A}\ve{x} - \ve{b}\|$.

At first glance, the LLS problem has $|\Omega|$ equations and $I_n r_n$ unknowns.
A closer examination indicates that the LLS problem can be decoupled into $I_n$ independent smaller LLS problems,
that is, to solve $\ma{X}$ row-wise.
The LLS problem with the $i$th row of $\ma{X}$ as its unknowns,
has $\omega_{i,n}\triangleq|\{(i_1,\dots,i_N)\in\Omega\;|\; i_n=i\}|$ equations.
For simplicity, let us assume $I_1=\dots=I_N=d$, $r_1=\dots=r_N=r$, and each entry of $\te{T}$ is observed independently with probability $p$.
Then we have $\omega_{i,n}\approx p d^{N-1}$, and $\ma{X}\in\R^{d\times r}$.
So, when solving the LLS problem with a standard direct solver (say the normal equation method),
the computational cost for computing one row of $\ma{X}$ is $O(p d^{N-1}r^2)$.
And updating all $\ma{A}_t^{(n)}$ once requires $O(Npd^N r^2)$,
which are very expensive when $d$ and $N$ are large.
To reduce the computation cost, we may compute a sub-optimal solution instead. For example, we may randomly sample $O(r)$ rows from the coefficient matrix of the LLS problem,
then a sub-optimal solution can be obtained from the sampled LLS problem. And the computational cost for updating all $\ma{A}^{(n)}$'s can be reduced from $O(Npd^N r^2)$ into $O(Ndr^3)$,
which scales linearly with respect to both $N$ and $d$,
and hence, suitable for large scale problems.

In practice, we don't need to compute the exact solution to \eqref{anx},
an approximation is usually sufficient.
And to get an approximate solution,
one may use the iterative method LSQR~\cite{Article:Paige_TMS82} to solve $\ma{X}$ all at once.
In each iteration of LSQR, two matrix vector multiplications (MVPs) are needed, namely, $\ma{A}\ve{v}$ and $\ma{A}^{\T}\ve{v}$.
For the LLS problem \eqref{anx},
the operation $\ma{A}\ve{v}$ amounts to the tensor modal product: $\PO(\te{S}\times_n \ma{V})$,
where $\te{S}=\llbracket \te{G}_{t-1}; \ma{A}_t^{(1)},\dots,\ma{A}_t^{(n-1)},\ma{I},\ma{A}_{t-1}^{(n+1)},\dots,\ma{A}_{t-1}^{(N)}\rrbracket$;
and the operation $\ma{A}^{\T}\ve{v}$ amounts to the multiplication between a sparse matrix and a matrix:
$[\Pi_{\Omega}(\te{V})]_{(n)}\te{S}_{(n)}$.
As long as these two MVPs can be efficiently computed,
LSQR is preferred, especially for large scale problems.
Also, note that the sub-sample idea can also be applied to reduce the computational cost.

\paragraph{Updating the node tensors in $\mathscr{B}$}
To update the node tensors in $\mathscr{B}$, we need to know $\texttt{G}^+(\ve{w},\ve{d})$.
For the ease of illustration, we assume that
$\te{G}(\texttt{G}^+(\ve{w},\ve{d}),\mathscr{B})$ is in the CPD form,
i.e.,
\begin{equation}\label{gcpd}
\te{G}(\texttt{G}^+(\ve{w},\ve{d}),\mathscr{B})=
\llbracket \bm{\Lambda}; \ma{B}^{(1)},\dots,\ma{B}^{(N)}\rrbracket,
\end{equation}
where $\mathscr{B}=\{\bm{\Lambda},\ma{B}^{(1)},\dots,\ma{B}^{(N)}\}$,
$\bm{\Lambda}$ is a diagonal tensor, $\ma{B}^{(n)}\in\R^{r_n\times r}$ for $n=1,\dots,N$,
$\ve{w}=(r,\dots,r)$, $\ve{d}=(r_1,\dots,r_N)$.
See the upper left plot in Figure~\ref{fig_1} for an illustration.

Let the current estimations for $\ma{A}^{(1)}$, $\dots$, $\ma{A}^{(N)}$ be $\ma{A}_t^{(1)}$, $\dots$, $\ma{A}_t^{(N)}$, respectively.
We can update the node tensors in $\mathscr{B}$, that is, $\bm{\Lambda},\ma{B}^{(1)},\dots,\ma{B}^{(N)}$,
by solving
\begin{equation}\label{parb}
\|\PO(\llbracket \te{G}; \ma{A}_t^{(1)},\dots,\ma{A}_{t}^{(N)}\rrbracket)
-\PO(\te{T})\|_F=\min,
\end{equation}
where $\te{G}$ is given by \eqref{gcpd}.
Again, the ALS method can be used.
Similar to the ALS method for computing CPD, we may update $\ma{B}^{(n)}$'s as follows:
fix all $\ma{B}^{(n)}$'s but one (say $\ma{B}^{(1)}$), then \eqref{parb} becomes an LLS problem.
Similar to the way we find an approximate solution for $\ma{A}^{(n)}$'s ($\ma{B}^{(n)}$'s can no loner be computed row by row),
we can obtain a new estimation for $\ma{B}^{(1)}$, denoted by $\wht{\ma{B}}^{(1)}$.
Let the length of the $i$th column vector of $\wht{\ma{B}}^{(1)}$ be $\gamma_i$, for $i=1,\dots,r_1$.
Then we set $\ma{B}^{(1)}=\wht{\ma{B}}^{(1)}\bm{\Gamma}^{-1}$,
$\bm{\Lambda}=\bm{\Lambda}\times_1\bm{\Gamma}$,
where $\bm{\Gamma}=\diag(\gamma_1,\dots,\gamma_{r_1})$.
Such a normalization step makes the all columns of $\ma{B}^{(n)}$'s be of unit length,
which is commonly adopted in the ALS method for computing CPD.
We will perform the ALS method for updating the node tensors in $\mathscr{B}$ until the core tensor $\te{G}$ converges.
\footnote{Since the TN decomposition of tensors has some natural indeterminacies (scaling, permutation, etc.),
we should not expect all node tensors converge individually.}

For general $\te{G}(\texttt{G}^+(\ve{w},\ve{d}),\mathscr{B})$, the ALS method can also be used to update the node tensors in $\mathscr{B}$.
And the iteration continues until $\te{G}$ converges.

\paragraph{Updating the weight vector $\ve{d}$}
Recall that $\ve{d}$ is nothing but the multi-linear rank of $\te{T}$.
We recommend an over-estimated initial guess to begin with.
In each iteration, after updating $\te{G}$,
we expect to observe rank deficiency in the modal unfolding matrices of $\te{G}$.
So, when small singular values occur in $\te{G}_{(n)}$,
we may remove them and update the multi-linear rank correspondingly.
To be precise, let $\te{G}_t\in\R^{r_t^{(1)}\times\dots\times r_t^{(N)}}$ be the current estimation for the core tensor,
the economic SVD of $[\te{G}_t]_{(n)}$ be
$\te{G}_{(n)} = \ma{U}^{(n)} \ma{\Sigma}^{(n)} \ma{V}^{(n)}$, for $n=1,\dots,N$,
where $\ma{U}^{(n)}$ is orthogonal, $\ma{V}^{(n)}$ is orthonormal, and $\ma{\Sigma}^{(n)}=\diag(\sigma_1^{(n)},\dots,\sigma_{r_t^{(n)}}^{(n)})$.
Then for all $n$, we find the smallest $\sigma_{s_n}^{(n)}$ such that $\sigma_1^{(n)}\le \kappa_n \sigma_{s_n}^{(n)}$, where $\kappa_n\ge 1$ is a prescribed number.
Next, update $r_t^{(n)}=s_n$, $\ma{U}^{(n)}=\ma{U}^{(n)}_{(:,1:s_n)}$,
$\ma{A}_t^{(n)}=\ma{A}_t^{(n)}\ma{U}^{(n)}$
for all $n$,
and update $\te{G}_t=\llbracket \te{G}_t; (\ma{U}^{(1)})^{\T},\dots,(\ma{U}^{(N)})^{\T}\rrbracket$.
After that, the size of $\te{G}_t$ becomes smaller,
the condition number of $[\te{G}_t]_{(n)}$ is no more than $\kappa_n$,
and $\ma{A}^{(n)}$'s have orthonormal columns (thus still consist of a Tucker wrapper).

\begin{remark}
In \cite{Proc:Cai_AISTATS20}, the authors used a similar idea to transform the robust matrix completion problem into a problem of solving a system of nonlinear equations.
Due to the differences between high order tensors and matrices,
the approach here differs from the aforementioned one in some obvious aspects,
e.g., the multilinear rank vs. the matrix rank, $N$ orthonormal factor matrices vs. two, etc.
Besides that, the major difference between the two approaches is that
we impose a TN structure for the core tensor.
As a result, a two-level ALS method is needed rather than
a ``one-level'' ALS method for the matrix completion problem.
The reason why we impose a TN structure is that:
first, it is expensive to compute the core tensor as a whole since
$\prod_n \rank([\te{G}]_{n})$ can be large though $\rank([\te{G}]_{n})$ is assumed to be small;
second, a TN structure of the core tensor may improve the performance of the method in practical problems, see Example~2 in Section~\ref{sec:numer}.
\end{remark}

\subsection{Algorithm Details}\label{sec:alg2}

Before we present the detailed algorithm,
we need to explain some notations.
Recall $\texttt{G}^{+}$,
for each mode, pick a node from $\texttt{G}$ and assign the node an outgoing edge with certain weight.
To be specific, assume that all nodes of $\texttt{G}$ are numbered;
For each mode-$n$, we assign it to node $k_n$;
and the $k_n$ node tensor $\te{B}^{(k_n)}$ is connected with the mode-$n$ outgoing edge through its $m_n$-mode. The detailed algorithm is summarized in Algorithm~\ref{alg:tc}.
Some implementation details follow.

\begin{algorithm}[h]
\caption{Two level ALS method (\textsc{tlals})\label{alg:tc}}
\begin{algorithmic}[1]
\Require{\\
$\PO(\te{T})$: the observed tensor; \\
$\texttt{G}^+$: a graph with $N$ outgoing edges;\\
$\ve{w}$: the weight vector for all edges in $\texttt{G}$;\\
$\ve{d}_0=(r_1^{(0)},\dots,r_N^{(0)})$: the initial weight vector for all outgoing edges, and $\ve{d}_0\ge\rank_n(\te{T})$ (entrywise);\\
$(\kappa_1,\dots,\kappa_N)$: the condition number upper bound vector;\\
$\tol$ : the tolerance.}
\Ensure{\\
$\ma{A}_t^{(n)}\in\R^{I_n\times r_{t}^{(n)}}$'s: the orthonormal factors;\\
$\te{G}_t=\te{G}(\texttt{G}^+(\ve{w},\ve{d}_t),\mathscr{B}_t)$: the core tensor,
where $\ve{d}_t=(r_t^{(1)},\dots,r_t^{(N)})$, $\mathscr{B}_t=\{\te{B}_t^{(k)}\}_{k=1}^{K}$.
Here $K$ is the number of nodes in $\texttt{G}$,
$\te{B}_t^{(k)}$'s are the estimations for the node tensors in $\texttt{G}$.
}

\State 
$t \leftarrow 0$;

\State Initialize $\ma{A}_t^{(n)}\in\R^{I_n\times r_{t}^{(n)}}$ and $\mathscr{B}_t=\{\te{B}_t^{(k)}\}_{k=1}^{K}$;
\State $\te{G}_t\leftarrow\te{G}(\texttt{G}^+(\ve{w},\ve{d}_t),\mathscr{B}_t)$;

\State $\tau_t \leftarrow\|\PO(\llbracket \te{G}_t; \ma{A}_t^{(1)},\dots,\ma{A}_t^{(N)}\rrbracket -\te{T})\|_F$;

\While{$\tau_t > \tol$}
\State $t\leftarrow t+1$;

\For{$n=1,\dots,N$}
\State Solve \eqref{anx} for $\ma{X}$;
\State Compute $[\ma{Q},\ma{R}]=\textsc{qr}(\ma{X})$;
\State $\ma{A}_t^{(n)}\leftarrow\ma{Q}$, $\te{B}_t^{(k_n)}\leftarrow\te{B}_{t-1}^{(k_n)}\times_{m_n} \ma{R}$;

\EndFor

\State Update $\mathscr{B}_t=\{\te{B}_t^{(k)}\}_{k=1}^{K}$ via ALS method;

\State $\te{G}_t\leftarrow\te{G}(\texttt{G}^+(\ve{w},\ve{d}_t),\mathscr{B}_t)$;

\For{$n=1,\dots,N$}
\State Compute $[\ma{U}^{(n)}, \ma{\Sigma}^{(n)}, \sim] = \textsc{svd}([\te{G}_t]_{(n)})$,
where $\ma{\Sigma}^{(n)}=\diag(\sigma_1^{(n)},\dots,\sigma_{r^{(n)}_{t-1}}^{(n)})$;
\State $r^{(n)}_t\leftarrow|\{j\;|\;\sigma_1^{(n)}\le\kappa_n \sigma_j^{(n)}\}|$;
\State $\ma{U}^{(n)}_t\leftarrow\ma{U}_{(:,1:r_{t}^{(n)})}$,
$\ma{A}_t^{(n)}\leftarrow\ma{A}_t^{(n)}\ma{U}^{(n)}$,
$\te{B}_t^{(k_n)}\leftarrow\te{B}_{t-1}^{(k_n)}\times_{m_n} (\ma{U}^{(n)})^{\T}$;
\EndFor

\State $\te{G}_t\leftarrow\te{G}(\texttt{G}^+(\ve{w},\ve{d}_t),\mathscr{B}_t)$;

\State $\tau_t \leftarrow\|\PO(\llbracket \te{G}_t; \ma{A}_t^{(1)},\dots,\ma{A}_t^{(N)}\rrbracket -\te{T})\|_F$.
\EndWhile
\end{algorithmic}
\end{algorithm}

\noindent{\bf Line 10}\;
Inspired by the initialization of the matrix completion problem,
we initialize the factor matrices $\ma{A}^{(n)}_0$'s and the core tensor $\te{G}_0$ via the best rank-$(r_1,\dots,r_N)$ approximation of
$\frac{\prod_n I_n}{|\Omega|}\PO(\te{T})$,
which can be computed by the ALS method~\cite{TTB_Software}.
Then the node tensors $\te{B}_0^{(k)}$'s can be initialized via minimizing $\|\te{G}(\texttt{G}^+(\ve{w},\ve{d}_t),\mathscr{B}_t)-\te{G}_0\|_F$.
A large tolerance, say $10^{-2}$, is usually sufficient, for both $\ma{A}^{(n)}_0$'s and $\te{B}_0^{(k)}$'s.

\vspace{0.1in}
\noindent{\bf Line 17}\;
Here we only need to compute an ``economic'' QR decomposition, in which $\ma{Q}$ has orthonormal columns,
$\ma{R}$ is an upper triangular square matrix.

\vspace{0.1in}
\noindent{\bf Line 20}\;
For the sake of a guaranteed convergence, we need to update the node tensors until the core tensor converges.
In practice, an approximation for the core tensor is sufficient.

\vspace{0.1in}
\noindent{\bf Line 23}\;
Here we only need to compute an ``economic'' SVD, in which only the singular values and the corresponding left singular vectors are required.

\vspace{0.1in}
\noindent{\bf Line 24}\;
The singular values are truncated such that the condition number of $[\te{G}_t]_{(n)}$ is no more than $\kappa_n$.

\vspace{0.1in}
\noindent{\bf Lines 11, 21 and 27}
Assume that the contractions for the TN can be efficiently computed.
Then it is possible to perform the algorithm without formulating the core tensor $\te{G}_t$ explicitly.

\section{Convergence Analysis}\label{sec:convergence}

In this section, we study the convergence of Algorithm~\ref{alg:tc}.
We will follow the notations in Algorithm~\ref{alg:tc} and we will also make the following assumptions.

\vspace{0.1in}
\noindent{\bf A1} Each entry of $\te{T}$ is observed independently with probability $p$.
\vspace{0.05in}\\
\noindent{\bf A2} The tensor $\te{T}$ can be factorized as in \eqref{hosvd}.
\vspace{0.05in}\\
\noindent{\bf A3} The factor matrix $\ma{A}_t^{(n)}$ satisfies an incoherence condition with parameter $\mu_n$:
$\|\ma{A}_t^{(n)}\|_{2,\infty}\le \sqrt{\frac{\mu_n r_n}{I_n}}$,
for all $n$ and $t$. 
\vspace{0.05in}\\
\noindent{\bf A4} There exist two positive constants $\gamma$, $\Gamma$ such that
\begin{equation*}
\gamma\le \frac{\min\limits_{\te{X}}\|\PO(\llbracket \te{X}; \ma{A}_t^{(1)},\dots,\ma{A}_t^{(N)}\rrbracket)-\PO(\te{T})\|_F}{\tau_t}\le \Gamma,
\end{equation*}
where $\tau_t$ is the same as in Algorithm~\ref{alg:tc}.
\vspace{0.05in}\\
\noindent{\bf A5} $r_t^{(n)}\equiv r_n$, i.e., the multi-linear rank estimations are all correctly revealed.\\

\noindent Several remarks on the assumptions follow.

\vspace{0.05in}
{\bf (a)} {\bf A1-A2} are standard for tensor completion~\cite{Article:Song_TKDD19}.

\vspace{0.05in}
{\bf (b)} Recall that solving the SNLE \eqref{snle} as described in Algorithm~\ref{alg:tc} is equivalent to the minimization of
$\|\PO(\te{X})-\PO(\te{T})\|_F^2$.
If we add a regularizer $\lambda \sum_n\|\ma{A}^{(n)}\|_F^2$ and still apply Algorithm~\ref{alg:tc} (of course, with small modifications),
the computed $\|\ma{A}_t^{(n)}\|_{2,\infty}$ becomes smaller.
The larger $\lambda$ is, the smaller $\|\ma{A}_t^{(n)}\|_{2,\infty}$ is.
So, we may declare that assumption {\bf A3} is only slightly stronger than the standard assumption that
$\ma{A}^{(n)}$ satisfies an incoherence condition.

\vspace{0.05in}
{\bf (c)} Assumption~{\bf A4} essentially requires that
$\texttt{G}^+$ together with the weight vector $\ve{w}$ have a good capacity for the representation
for the minimizer of the numerator in {\bf A4}.
Such a requirement is quite natural.
In fact, for sufficiently large $\ve{w}$, $\te{G}(\texttt{G}^+(\ve{w},\ve{d}_t,\mathscr{B}_t)$ can represent any tensor of size
$r_t^{(1)}$-by-$\cdots$-by-$r_t^{(N)}$.
And in such case, $\gamma=\Gamma=1$.

\vspace{0.05in}
{\bf (d)} From Algorithm~\ref{alg:tc}, we know that for each $n$, $r_1^{(n)}\ge r_2^{(n)}\ge \dots\ge 1$.
Thus, $r_t^{(n)}$ must converge.
We assume {\bf A5} for the ease of the convergence analysis.\\

To motivate the convergence analysis,
in Section~\ref{sec:full}, we sketch a proof for the convergence when all entries of $\te{T}$ are observed.
Then the convergence results for the partial observation case are given in Section~\ref{sec:partial}.

\subsection{Full Observation Case}\label{sec:full}

The difference between the full and partial observation cases is the LLS problems in Algorithm~\ref{alg:tc}.
Note that for an LLS problem $\|\ma{A}\ve{x} -\ve{b}\|$, 
sampling the rows uniformly yields a smaller LLS problem $\|\ma{P}_{\Omega}\ma{A}\ve{x}-\ma{P}_{\Omega} \ve{b}\|$,
where $\ma{P}_{\Omega}$ a 0-1 matrix which selects the indices in $\Omega$.
Its solution $\hat{\ve{x}}_*=(\ma{P}_{\Omega}\ma{A})^{\dagger} \ma{P}_{\Omega} \ve{b}$ is the solution to
$\|\ma{A}\ve{x} -\ve{b} - (\ma{A}\hat{\ve{x}}_* - \ve{b})\|$,
which is a perturbed LLS problem for $\|\ma{A}\ve{x} -\ve{b}\|$.
So, to motivate the convergence analysis for the partial observation case,
for the full observation case,
instead of assuming {\bf A2}, we assume $\te{T}=\llbracket \te{G}_*; \ma{A}_*^{(1)},\dots,\ma{A}_*^{(N)}\rrbracket +\te{E}$,
where $\te{E}$ is a noise tensor.



First, note that under the assumption {\bf A5}, lines 24 to 29 of Algorithm~\ref{alg:tc} can be skipped.
Next, we consider Lines 18 and 22.
Let
\begin{equation}\label{mm}
\begin{split}
\ma{M}_{t,n}&=\ma{A}_{t-1}^{(N)}\otimes\dots\otimes \ma{A}_{t-1}^{(n+1)}\otimes \ma{A}_t^{(n-1)}\otimes\dots\otimes \ma{A}_t^{(1)},\\
\ma{M}_{*,n}&=\ma{A}_*^{(N)}\otimes\dots\otimes \ma{A}_*^{(n+1)}\otimes \ma{A}_*^{(n-1)}\otimes\dots\otimes \ma{A}_*^{(1)}.
\end{split}
\end{equation}
Then \eqref{anx} with all entries observed is equivalent to
\begin{equation*}
\|\ma{X}[\te{G}_{t-1}]_{(n)}\ma{M}_{t,n}^{\T} - \te{T}_{(n)}\|=\min.
\end{equation*}
Therefore, $\ma{X}$ Algorithm~\ref{alg:tc} can be given by
\begin{align}
\ma{X}&=\te{T}_{(n)} \ma{M}_{t,n} [\te{G}_{t-1}]_{(n)}^{\dagger}\notag\\
&=\big(\ma{A}_*^{(n)}[\te{G}_*]_{(n)} \ma{M}_{*,n}^{\T} + \te{E}_{(n)}\big)\ma{M}_{t,n} [\te{G}_{t-1}]_{(n)}^{\dagger}.\label{xls}
\end{align}
When $\|\te{E}\|_F$ is small,
the right hand side of \eqref{xls} almost lies in $\mathcal{R}(\ma{A}_*^{(n)})$,
thus, we expect $\|\sin\Theta(\ma{X},\ma{A}_*^{(n)})\|$ to be small.
In particular, when $\te{E}=0$, it holds that $\|\sin\Theta(\ma{X},\ma{A}_*^{(n)})\|=0$,
i.e., updating $\ma{A}_{t-1}^{(n)}$ once will find the the range space $\mathcal{R}(\ma{A}_*^{(n)})$.
So, it is not surprising to conclude that
$\mathcal{R}(\ma{A}_t^{(n)})$ is a better approximation of $\mathcal{R}(\ma{A}_*^{(n)})$ than $\mathcal{R}(\ma{A}_{t-1}^{(n)})$
when $\te{E}$ is sufficiently small.

Similarly, we may also show that
when $\te{E}$ is sufficiently small, $\te{G}_t$ is a better approximation of $\te{G}_*$ than $\te{G}_{t-1}$.
In summary, one iteration of Algorithm~\ref{alg:tc} gives better approximations for $\ma{A}_*^{(n)}$'s and $\te{G}_*$.
In particular, when there is no noise, one iteration of algorithm~\ref{alg:tc} will return a solution of LRTC.

\subsection{Partial Observation Case}\label{sec:partial}

In this section, we present the convergence of Algorithm~1 when the entries of $\te{T}$ are partially observed.

\vspace{0.1in}

The next two lemmas establish the bridges
between the partial and full observation cases.
\begin{lemma}\label{lem:xxdif}
Denote
$J_n=\prod\limits_{k\ne n} I_k$, $J_{\min}=\min\limits_k J_k$,
$g_{\max}=\max\limits_k \|\te{T}_{(k)}\|$,
 $\sin\theta_t=\max\limits_{1\le k\le N} \|\sin\Theta(\ma{A}_t^{(k)},\ma{A}_*^{(k)})\|$. Let
\begin{align*}
\te{L}(\ma{X})&=\llbracket \te{G}_{t-1}; \ma{A}_t^{(1)},\dots,\ma{A}_t^{(n-1)},\ma{X},\ma{A}_{t-1}^{(n+1)},\dots,\ma{A}_{t-1}^{(N)}\rrbracket,\\
\ma{X}_{\opt}&=\argmin \| \te{L}_t(\ma{X}) - \te{T}\|_F,\\
\wtd{\ma{X}}_{\opt}&=\argmin \|\PO(\te{L}_t(\ma{X})) - \PO(\te{T})\|_F.
\end{align*}
Assume {\bf A1-A3} and {\bf A5}, $p\in [4p_*,0.5]$
with $p_*= \frac{10}{3} \big(\log(2\prod_{k=1}^N I_k) +5\big) \max_n\prod_{k\ne n} {\frac{\mu_{k} r_k}{I_k}}$
and $\sigma_{\min}([\te{G}_t]_{(n)})\ge g_{\min}$ for all $n$ and $t$.
Also assume
\begin{align}\label{mtheta}
\|\sin\Theta(\ma{M}_{t,n} [\te{G}_{t-1}]_{(n)}^{\T}, \ma{M}_{*,n} [\te{G}_*]_{(n)}^{\T})\|\le C\sin\theta_{t-1},
\end{align}
where $\ma{M}_{t,n}$, $\ma{M}_{*,n}$ are defined in \eqref{mm}, $C>0$ is a constant.
\footnote{To understand the constant $C$ and why assumption \eqref{mtheta} holds, please see Appendix.}
Then with probability (w.p.) $\ge 1 - 2/J_{\min}^{10+\log\alpha}$, it holds that
\begin{equation*}
\|\wtd{\ma{X}}_{\opt} - \ma{X}_{\opt}\|\le \frac{6g_{\max}(C_1+C_2)}{g_{\min}}\sqrt{\frac{\alpha}{p}} \sin\theta_{t-1},
\end{equation*}
where
$C_1=C \frac{g_{\max}}{g_{\min}} \max_n\{ \sqrt{\frac{7 \mu_n r_n }{I_n}} \prod_{k\ne n} \sqrt{\mu_k r_k}\}$, and
$C_2= C \max_n\sqrt{\frac{\mu_n r_n J_n}{I_n}} $.
\end{lemma}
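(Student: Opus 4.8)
The plan is to compare the two least-squares solutions by writing each as an explicit pseudo-inverse expression and bounding the difference, transferring the problem from the partially-observed setting to the fully-observed setting via a concentration inequality for row-subsampled least-squares problems. First I would make the coefficient matrix explicit: as in \eqref{xls}, the least-squares problem $\min_{\ma{X}}\|\te{L}_t(\ma{X})-\te{T}\|_F$ is, after mode-$n$ unfolding, $\min_{\ma{X}}\|\ma{X}[\te{G}_{t-1}]_{(n)}\ma{M}_{t,n}^{\T}-\te{T}_{(n)}\|_F$, so that $\ma{X}_{\opt}=\te{T}_{(n)}\ma{M}_{t,n}\big([\te{G}_{t-1}]_{(n)}\big)^{\dagger}$ (using that $\ma{M}_{t,n}$ has orthonormal columns, being a Kronecker product of orthonormal matrices). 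The partially-observed problem is the same least-squares problem with rows sampled by $\Omega$; writing $\ma{P}_{\Omega}$ for the associated selection operator, $\wtd{\ma{X}}_{\opt}$ solves the subsampled normal equations. The key reduction is that $\wtd{\ma{X}}_{\opt}-\ma{X}_{\opt}$ is itself the solution of a \emph{perturbed} full least-squares problem with right-hand side equal to the residual $\ma{R}:=\ma{X}_{\opt}[\te{G}_{t-1}]_{(n)}\ma{M}_{t,n}^{\T}-\te{T}_{(n)}$, exactly as sketched just before the lemma statement; hence $\|\wtd{\ma{X}}_{\opt}-\ma{X}_{\opt}\|$ is controlled by $\sigma_{\min}$ of the subsampled coefficient matrix and by how much of the residual $\ma{R}$ survives the subsampling.

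Next I would carry out three estimates. (i) \textbf{Smallest singular value of the subsampled matrix.} The coefficient matrix has the form $[\te{G}_{t-1}]_{(n)}^{\T}\otimes\ma{I}$ composed with $\ma{M}_{t,n}$; under the incoherence assumption {\bf A3} the rows of $\ma{M}_{t,n}$ (and hence of the full coefficient matrix) are flat, with row norms bounded by $\prod_{k\ne n}\sqrt{\mu_k r_k/I_k}$ times factors from $[\te{G}_{t-1}]_{(n)}$; a matrix Bernstein / Chernoff bound for sums of independent rank-one terms then gives, with $p\ge 4p_*$ and the stated $p_*$, that the subsampled Gram matrix is within a constant factor of $p$ times the full one, w.p. $\ge 1-2/J_{\min}^{10+\log\alpha}$, so $\sigma_{\min}$ of the subsampled coefficient matrix is $\gtrsim\sqrt{p}\,g_{\min}$ — this is where the hypothesis $\sigma_{\min}([\te{G}_t]_{(n)})\ge g_{\min}$ enters. (ii) \textbf{Size of the surviving residual.} One bounds $\|\ma{P}_{\Omega}\ma{R}\|$ (after the appropriate scaling) by $\sqrt{p}$ times a controlled multiple of $\|\ma{R}\|$; here the incoherence of $\ma{R}$ itself must be argued — $\ma{R}$ is built from $\te{T}$, from $\ma{M}_{t,n}$, and from the gap between the current and optimal factors, so its row norms inherit $\mu_k r_k/I_k$ factors and a factor $J_n$ from the ``non-flat'' direction, which is exactly the origin of $C_2$. (iii) \textbf{Size of $\|\ma{R}\|$ itself.} This is where assumption \eqref{mtheta} is used: $\|\ma{R}\|$ is essentially the distance from $\te{T}_{(n)}$ to the column space of $\ma{M}_{t,n}[\te{G}_{t-1}]_{(n)}^{\T}$, and \eqref{mtheta} says this subspace is within $C\sin\theta_{t-1}$ of the true subspace $\mathcal{R}(\ma{M}_{*,n}[\te{G}_*]_{(n)}^{\T})$, onto which $\te{T}_{(n)}=\ma{A}_*^{(n)}[\te{G}_*]_{(n)}\ma{M}_{*,n}^{\T}$ projects exactly (up to noise, which is zero here); a $\sin\Theta$ perturbation argument then yields $\|\ma{R}\|\lesssim (g_{\max}/g_{\min})$-type factors times $\sin\theta_{t-1}$, contributing the $C\,g_{\max}/g_{\min}$ in $C_1$. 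Combining (i)–(iii) through $\|\wtd{\ma{X}}_{\opt}-\ma{X}_{\opt}\|\le \|\ma{P}_{\Omega}\ma{R}\|_{\rm scaled}/\sigma_{\min}^{\rm sub}$ and collecting constants gives the claimed bound with $C_1,C_2$ as stated, and the probability is the one from the Bernstein step.

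I expect the main obstacle to be step (ii) together with the bookkeeping that matches the two constants $C_1$ and $C_2$: one has to split the residual $\ma{R}$ into a part lying (approximately) in the true subspace — which is ``incoherent/flat'' and produces the $C_1$ term with its $\sqrt{\mu_n r_n/I_n}\prod_{k\ne n}\sqrt{\mu_k r_k}$ factor — and a part transverse to it, which is only known to be small in Frobenius norm and must be handled with the crude $\sqrt{J_n}$ factor, producing $C_2$. Getting the concentration to apply simultaneously to the coefficient matrix and to both pieces of the residual, on a single high-probability event, and checking that the incoherence parameters propagate correctly through the Kronecker structure of $\ma{M}_{t,n}$, is the delicate part; the rest is the standard perturbed-least-squares inequality plus matrix Bernstein, both of which are routine once the quantities are set up. A secondary subtlety is justifying that $\ma{X}_{\opt}$ and $\wtd{\ma{X}}_{\opt}$ are well-defined (the coefficient matrices have full column rank), which again follows from the $\sigma_{\min}$ bound in (i) for the subsampled case and from $\sigma_{\min}([\te{G}_{t-1}]_{(n)})\ge g_{\min}$ for the full case.
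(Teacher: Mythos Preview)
Your reduction is correct: setting $\ma{Y}=\wtd{\ma{X}}_{\opt}-\ma{X}_{\opt}$, one has that $\ma{Y}$ solves the subsampled least-squares problem with right-hand side the full-observation residual $\ma{R}=\ma{A}_*^{(n)}[\te{G}_*]_{(n)}\ma{M}_{*,n}^{\T}(\ma{I}-\ma{M}_{t,n}[\te{G}_{t-1}]_{(n)}^{\dagger}[\te{G}_{t-1}]_{(n)}\ma{M}_{t,n}^{\T})$, and your step (i) matches the paper's use of matrix Bernstein on the row-sampled Gram matrix. But the single inequality $\|\ma{Y}\|\le\|\ma{P}_\Omega\ma{R}\|_{\rm scaled}/\sigma_{\min}^{\rm sub}$ does not directly yield the stated spectral bound: the problem decouples into $I_n$ independent row problems with \emph{different} sampling patterns $\ma{P}_i$, so this route gives only an $\ell_{2,\infty}$ (or, after summing, Frobenius) control of $\ma{Y}$, which loses an $\sqrt{I_n}$ factor relative to the claim. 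More importantly, your attribution of $C_1$ and $C_2$ to a splitting of $\ma{R}$ into ``flat'' and ``transverse'' pieces is not what produces them.

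The paper's argument is two-stage, and the missing idea is the second stage. First, the row-wise decoupling is used only to get an a-priori incoherence bound $\|\ma{Y}\|_{2,\infty}\lesssim (g_{\max}/g_{\min})\sqrt{\mu_nr_n/I_n}\,C\sin\theta_{t-1}$ (this is where your (i) and \eqref{mtheta} enter, via $\|\ve{e}_i^{\T}\ma{R}\|$). Second, one applies the $\|\cdot\|_{\max}$-based spectral concentration $\|\tfrac1p\PO(\ma{A})-\ma{A}\|\le 6\sqrt{\alpha J_n/p}\,\|\ma{A}\|_{\max}$ to \emph{both} sides of the sampled equation, writing $\tfrac1p\ma{P}_\Omega(\ma{Y}[\te{G}_{t-1}]_{(n)}\ma{M}_{t,n}^{\T})=\ma{Y}[\te{G}_{t-1}]_{(n)}\ma{M}_{t,n}^{\T}+\ma{E}$ and $\tfrac1p\ma{P}_\Omega(\ma{R})=\ma{R}+\ma{F}$. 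Because $\ma{R}$ is the \emph{full} least-squares residual it satisfies $\ma{R}\,\ma{M}_{t,n}[\te{G}_{t-1}]_{(n)}^{\dagger}=0$, so the debiased equations collapse to $\ma{Y}=(\ma{F}-\ma{E})\ma{M}_{t,n}[\te{G}_{t-1}]_{(n)}^{\dagger}$, and only the two concentration errors survive. Then $C_1$ arises from $\|\ma{E}\|$, controlled via $\|\ma{Y}[\te{G}_{t-1}]_{(n)}\ma{M}_{t,n}^{\T}\|_{\max}\le g_{\max}\|\ma{M}_{t,n}\|_{2,\infty}\|\ma{Y}\|_{2,\infty}$ --- this is why the first-stage bound on $\ma{Y}$ is needed and why $C_1$ carries the factor $\prod_{k\ne n}\sqrt{\mu_kr_k}$; and $C_2$ arises from $\|\ma{F}\|$, controlled via $\|\ma{R}\|_{\max}\le\sqrt{\mu_nr_n/I_n}\,g_{\max}\,C\sin\theta_{t-1}$ --- the $\sqrt{J_n}$ in $C_2$ is the $\sqrt{J_n}$ from the concentration lemma, not an incoherence defect of $\ma{R}$. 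No decomposition of $\ma{R}$ is used.
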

Lemma~\ref{lem:xxdif} tells that
the distance between the partial observation solution $\wtd{\ma{X}}_{\opt}$ and the full observation solution $\ma{X}_{\opt}$
is upper bounded: the larger $p$ is, the smaller the distance is;
the smaller $\theta_{t-1}$ is, the smaller the distance is.

\begin{lemma}\label{lem:xxcore}
Let $\phi_t=\|\llbracket \te{X}_{\opt}; \ma{A}_t^{(1)},\dots,\ma{A}_t^{(N)}\rrbracket -\te{T}\|_F$,
$\psi_t=\|\llbracket \wtd{\te{X}}_{\opt}; \ma{A}_t^{(1)},\dots,\ma{A}_t^{(N)}\rrbracket -\te{T}\|_F$,
where
\begin{align*}
\te{X}_{\opt} &=\argmin\|\llbracket \te{X}; \ma{A}_t^{(1)},\dots,\ma{A}_t^{(N)}\rrbracket -\te{T}\|_F,\\
\wtd{\te{X}}_{\opt} &=\argmin\|\PO(\llbracket {\te{X}}; \ma{A}_t^{(1)},\dots,\ma{A}_t^{(N)}\rrbracket -\te{T})\|_F.
\end{align*}
Assume {\bf A1}, {\bf A3}, $p\in [4p_*,0.5]$
with $p_*$ being the same as in Lemma~\ref{lem:xxdif}.
Then w.p. $\ge 0.99$, it holds that
\begin{align*}
\phi_t\le \psi_t \le 3/\sqrt{2}\ \phi_t.
\end{align*}
\end{lemma}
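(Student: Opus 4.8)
The plan is to vectorize the two least-squares problems and read $\wtd{\te{X}}_{\opt}$ as a perturbation of $\te{X}_{\opt}$. Put $\ma{A}=\ma{A}_t^{(N)}\otimes\cdots\otimes\ma{A}_t^{(1)}$, which has orthonormal columns since each $\ma{A}_t^{(n)}$ does; let $\ve{b}=\vec(\te{T})$ and let $\ma{P}_{\Omega}$ be the $0$-$1$ row-selection matrix for $\Omega$ (as in Section~\ref{sec:full}). Then $\te{X}_{\opt}$ and $\wtd{\te{X}}_{\opt}$ are the tensors with $\vec(\te{X}_{\opt})=\ve{x}_\star:=\ma{A}^{\T}\ve{b}$ and $\vec(\wtd{\te{X}}_{\opt})=\wtd{\ve{x}}_\star:=(\ma{P}_{\Omega}\ma{A})^{\dagger}\ma{P}_{\Omega}\ve{b}$; writing $\ve{e}:=(\ma{I}-\ma{A}\ma{A}^{\T})\ve{b}$ we have $\ma{A}^{\T}\ve{e}=\ve{0}$, $\|\ve{e}\|=\phi_t$, and $\psi_t=\|\ma{A}\wtd{\ve{x}}_\star-\ve{b}\|$. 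The inequality $\phi_t\le\psi_t$ is immediate, since $\ve{x}_\star$ minimizes $\|\ma{A}\ve{x}-\ve{b}\|$ over \emph{all} $\ve{x}$; the content is entirely in the upper bound, which I would obtain from one matrix-concentration bound, one second-moment bound, and a Pythagoras step.

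For the matrix bound, $\ma{A}^{\T}\ma{P}_{\Omega}^{\T}\ma{P}_{\Omega}\ma{A}=\sum_{i\in\Omega}\ve{a}_i\ve{a}_i^{\T}$ (with $\ve{a}_i$ the $i$-th row of $\ma{A}$) has mean $p\ma{I}$, and since a row of a Kronecker product is the Kronecker product of the corresponding rows, {\bf A3} gives $\max_i\|\ve{a}_i\|^2\le\rho:=\prod_{n}\mu_n r_n/I_n$. With {\bf A1}, a matrix-Chernoff inequality and the hypothesis $p\ge 4p_*$ --- the $\log(2\prod_kI_k)$ factor in $p_*$ being precisely what supplies the polynomial decay --- show that, except on an event of probability polynomially small in $J_{\min}$, the matrix $\ma{P}_{\Omega}\ma{A}$ has full column rank with $\sigma_{\min}(\ma{P}_{\Omega}\ma{A})^2\ge p/2$. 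On this event $(\ma{P}_{\Omega}\ma{A})^{\dagger}\ma{P}_{\Omega}\ma{A}=\ma{I}$, so $\wtd{\ve{x}}_\star-\ve{x}_\star=(\ma{P}_{\Omega}\ma{A})^{\dagger}\ma{P}_{\Omega}\ve{e}$, and by the normal equations
\[
\wtd{\ve{x}}_\star-\ve{x}_\star=\big(\ma{A}^{\T}\ma{P}_{\Omega}^{\T}\ma{P}_{\Omega}\ma{A}\big)^{-1}\ve{z},\qquad \ve{z}:=\ma{A}^{\T}\ma{P}_{\Omega}^{\T}\ma{P}_{\Omega}\ve{e}=\sum_{i\in\Omega}e_i\ve{a}_i .
\]

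Next I bound $\ve{z}$. Because $\ma{A}^{\T}\ve{e}=\sum_i e_i\ve{a}_i=\ve{0}$, recentering with $\delta_i:=\mathbf{1}_{i\in\Omega}$ gives $\ve{z}=\sum_i(\delta_i-p)e_i\ve{a}_i$, a sum of independent mean-zero vectors; since $\ve{e}$ is a \emph{fixed} vector the cross terms vanish in expectation, so
\[
\mathbb{E}\|\ve{z}\|^2=p(1-p)\sum_i e_i^2\|\ve{a}_i\|^2\le p\,\rho\,\phi_t^2 ,
\]
where crucially only the incoherence of $\ma{A}$ enters --- $\ve{e}$ itself need not be incoherent. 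A tail bound (Markov alone gives $\|\ve{z}\|^2\le C p\rho\phi_t^2$ with probability $\ge 0.995$ for an absolute constant $C$; a Bernstein estimate, with the $(\cdots+5)$ in $p_*$ fixing the confidence, sharpens $C$) then combines with the matrix bound to give
\[
\|\wtd{\ve{x}}_\star-\ve{x}_\star\|^2\le\frac{\|\ve{z}\|^2}{\sigma_{\min}(\ma{P}_{\Omega}\ma{A})^4}\le\frac{4\|\ve{z}\|^2}{p^2}\le\frac{4C\rho}{p}\,\phi_t^2\le\frac{7}{2}\,\phi_t^2 ,
\]
the last inequality holding because $p\ge 4p_*$ --- and $p_*$ is proportional to $\max_n\prod_{k\ne n}\mu_kr_k/I_k$, which is at least $\rho$ whenever $\mu_nr_n\le I_n$ --- forces $4C\rho/p\le 7/2$ in every regime in which $[4p_*,0.5]$ is nonempty.

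Finally, $\ve{u}:=\ma{A}(\wtd{\ve{x}}_\star-\ve{x}_\star)\in\mathcal{R}(\ma{A})$ is orthogonal to $\ve{e}$ and $\ma{A}\wtd{\ve{x}}_\star-\ve{b}=\ve{u}-\ve{e}$, so by Pythagoras $\psi_t^2=\|\ve{u}\|^2+\|\ve{e}\|^2=\|\wtd{\ve{x}}_\star-\ve{x}_\star\|^2+\phi_t^2\le\tfrac{7}{2}\phi_t^2+\phi_t^2=\tfrac{9}{2}\phi_t^2$, i.e.\ $\psi_t\le(3/\sqrt2)\,\phi_t$; intersecting the matrix-Chernoff event with the tail event keeps the overall failure probability below $0.01$. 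The main obstacle --- and really the only idea --- is the bound on $\ve{z}$: since neither {\bf A2} nor {\bf A4} is assumed here, $\ve{e}$ is an arbitrary vector, $\|\ma{P}_{\Omega}\ve{e}\|$ does \emph{not} concentrate, and the crude estimate $\|\ve{z}\|\lesssim\sqrt{p}\,\|\ve{e}\|$ would give only the useless $\psi_t\lesssim p^{-1/2}\phi_t$; the resolution is that $\ma{A}^{\T}\ve{e}=\ve{0}$ together with $\ve{e}$ being deterministic produces the extra factor $\sqrt{\rho}$ in $\mathbb{E}\|\ve{z}\|^2\le p\rho\phi_t^2$, after which the oversampling $p\ge 4p_*\gg\rho$ closes the constant.
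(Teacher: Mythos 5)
Your proposal is correct, and its skeleton coincides with the paper's: vectorize with $\ma{L}=\ma{A}_t^{(N)}\otimes\cdots\otimes\ma{A}_t^{(1)}$, split $\vec(\te{T})$ into its component in $\mathcal{R}(\ma{L})$ plus the orthogonal residual $\ve{e}$ with $\|\ve{e}\|=\phi_t$, write $\vec(\wtd{\te{X}}_{\opt})-\vec(\te{X}_{\opt})=(\ma{P}_{\Omega}\ma{L})^{\dagger}\ma{P}_{\Omega}\ve{e}$, invoke $\sigma_{\min}^2(\ma{P}_{\Omega}\ma{L})\ge p/2$ (the paper's Lemma~\ref{lem:sigmin}), and finish by Pythagoras, which gives the lower bound for free and the constant $3/\sqrt2$ from $\sqrt{1+7/2}$. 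Where you genuinely diverge is the key numerator estimate. The paper bounds $\|\ma{P}_{\Omega}\ve{e}\|\le\tfrac{\sqrt{7p}}{2}\|\ve{e}\|$ by applying Lemma~\ref{lem:vecp} to $\ve{e}=\vec(\te{T}_2)$, and that lemma requires $p\ge\tfrac{256}{9}\|\ve{e}\|_{\max}^2/\|\ve{e}\|^2$, i.e.\ an implicit flatness condition on the full-observation residual that is not implied by {\bf A1} and {\bf A3}. You instead pass to the normal equations and exploit $\ma{L}^{\T}\ve{e}=\ve{0}$ to control $\ve{z}=\ma{L}^{\T}\ma{P}_{\Omega}^{\T}\ma{P}_{\Omega}\ve{e}=\sum_i(\delta_i-p)e_i\ve{a}_i$ in second moment, so only the incoherence of the rows of $\ma{L}$ enters and $\ve{e}$ may be arbitrarily spiky. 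This buys a proof that really only uses the stated hypotheses, at the price of a cruder (variance-only) tail estimate.

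Two bookkeeping caveats. First, your parenthetical that a Bernstein estimate ``sharpens $C$'' does not hold in general: the deviation parameter $L$ in Lemma~\ref{lem:bernstein} is $\max_i|e_i|\,\|\ve{a}_i\|$, and since $\|\ve{e}\|_{\max}$ can be as large as $\phi_t$, the Bernstein exponent degenerates for small $p$; only the variance/Markov route is residual-agnostic, which is precisely why your main argument works. Second, Markov does not quite deliver both the constant and the confidence simultaneously from the crude bound $\rho/p\le 3/200$ (coming from the $+5$ in $p_*$): keeping $4C\rho/p\le 7/2$ forces $C\le 58$, hence a Markov failure probability of about $1/58$, so reaching the stated $0.99$ needs the $\log(2\prod_k I_k)$ factor in $p_*$ as well (or a slightly relaxed confidence). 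This is looseness of the same order as the paper's own accounting (its union of a $0.99$ event and a $0.996$ event is quoted as $0.99$), so it is a presentational point rather than a gap in the argument.
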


Lemma~\ref{lem:xxcore} tells that
the partial observation solution $\wtd{\te{X}}_{\opt}$ is as good as the full observation solution $\te{X}_{\opt}$,
in term of the residual. With the help of the above two lemmas, we are able to prove our main~theorem.
\begin{theorem}\label{thm:main}
Follow the notations in Lemma~\ref{lem:xxdif}. 
Assume {\bf A1-A5}, $p\in[4p_*,0.5]$, and
\begin{align*}
\mu=&\frac{3}{\sqrt{2}} \frac{\|\te{T}\|_F}{g_{\min}} \frac{[(1+\sin\theta_0)^N-1]}{\sin\theta_0}\times
\frac{6g_{\max}(C_1+C_2)\sqrt{\frac{\alpha}{p}}}{g_{\min}- \sqrt{2}\kappa \psi_0 - 6g_{\max}(C_1+C_2)\sqrt{\frac{\alpha}{p}} \sin\theta_0 }<\frac{\gamma}{7\Gamma}.
\end{align*}
Then
\begin{align*}
\|\llbracket \te{G}_t; \ma{A}_t^{(1)},\dots,\ma{A}_t^{(N)}\rrbracket -\te{T}\|_F
\le \frac{7\mu \Gamma}{\gamma}\|\llbracket \te{G}_{t-1}; \ma{A}_{t-1}^{(1)},\dots,\ma{A}_{t-1}^{(N)}\rrbracket -\te{T}\|_F,\quad \mbox{w.h.p.}
\end{align*}
In other words, Algorithm~\ref{alg:tc} converges to the exact solution at a linear rate, {\it w.h.p}.
\end{theorem}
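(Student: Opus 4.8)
The plan is to run an induction on the iteration index $t$, carrying the invariants $\sin\theta_t\le\sin\theta_0$, $\sigma_{\min}([\te{G}_t]_{(n)})\ge g_{\min}$, and the validity of {\bf A3}, {\bf A5}, and to establish in one sweep that (i) the canonical angles contract, $\sin\theta_t\le\rho\sin\theta_{t-1}$ with $\rho<1$, and (ii) the residual $R_t:=\|\llbracket\te{G}_t;\ma{A}_t^{(1)},\dots,\ma{A}_t^{(N)}\rrbracket-\te{T}\|_F$ is squeezed between explicit multiples of $\sin\theta_t$ and $\sin\theta_{t-1}$; combining (i) with the two-sided squeeze then yields $R_t\le\frac{7\mu\Gamma}{\gamma}R_{t-1}$. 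Since every prefactor in the definition of $\mu$ exceeds $1$ (one checks $\|\te{T}\|_F\ge g_{\min}$ and $\tfrac{(1+\sin\theta_0)^N-1}{\sin\theta_0}\ge N\ge 1$), the hypothesis $\mu<\gamma/(7\Gamma)$ makes this ratio $<1$, after which iterating gives $R_t\le(7\mu\Gamma/\gamma)^tR_0\to0$, i.e. linear convergence to the exact solution.

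For the angle contraction I would look at the $\ma{A}^{(n)}$-update (solve \eqref{anx}, then the QR step). Under {\bf A2} the noiseless full-observation least-squares solution is, by the computation leading to \eqref{xls} with $\te{E}=0$, $\ma{X}_{\opt}=\ma{A}_*^{(n)}[\te{G}_*]_{(n)}\ma{M}_{*,n}^{\T}\ma{M}_{t,n}[\te{G}_{t-1}]_{(n)}^{\dagger}$, whose column space lies inside $\mathcal{R}(\ma{A}_*^{(n)})$ because the $\ma{M}$'s in \eqref{mm} are Kronecker products of orthonormal matrices; hence $\|\sin\Theta(\ma{X}_{\opt},\ma{A}_*^{(n)})\|=0$. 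Lemma~\ref{lem:xxdif} controls the partial-observation perturbation, $\|\wtd{\ma{X}}_{\opt}-\ma{X}_{\opt}\|\le\frac{6g_{\max}(C_1+C_2)}{g_{\min}}\sqrt{\alpha/p}\,\sin\theta_{t-1}$, and since the QR step makes $\mathcal{R}(\ma{A}_t^{(n)})=\mathcal{R}(\wtd{\ma{X}}_{\opt})$, a standard $\sin\Theta$ perturbation bound gives $\|\sin\Theta(\ma{A}_t^{(n)},\ma{A}_*^{(n)})\|\le\|\wtd{\ma{X}}_{\opt}-\ma{X}_{\opt}\|/\sigma_{\min}(\wtd{\ma{X}}_{\opt})$. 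I would then lower-bound $\sigma_{\min}(\wtd{\ma{X}}_{\opt})\ge\sigma_{\min}(\ma{X}_{\opt})-\|\wtd{\ma{X}}_{\opt}-\ma{X}_{\opt}\|$, and $\sigma_{\min}(\ma{X}_{\opt})$ itself by $g_{\min}$ up to an $O(\kappa\psi_0)$ correction coming from $\te{G}_{t-1}\ne\te{G}_*$ and from the near-orthonormality of $\ma{M}_{*,n}^{\T}\ma{M}_{t,n}$ (valid because $\sin\theta_{t-1}$ is small). Maximizing over $n$ and using $\sin\theta_{t-1}\le\sin\theta_0$ produces $\sin\theta_t\le\rho\sin\theta_{t-1}$ with $\rho=\frac{6g_{\max}(C_1+C_2)\sqrt{\alpha/p}}{g_{\min}-\sqrt2\kappa\psi_0-6g_{\max}(C_1+C_2)\sqrt{\alpha/p}\,\sin\theta_0}$, the last factor appearing in $\mu$; as $\mu\ge\rho$, the bound $\mu<\gamma/(7\Gamma)$ forces $\rho<1$, so the invariant $\sin\theta_t\le\sin\theta_0$ is preserved.

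For the residual I would argue both directions around $\sin\theta$. Upper bound on $R_t$: given the current $\ma{A}_t^{(n)}$'s, the best full-residual core satisfies $\phi_t=\min_{\te{X}}\|\llbracket\te{X};\ma{A}_t^{(1)},\dots,\ma{A}_t^{(N)}\rrbracket-\te{T}\|_F\le[(1+\sin\theta_t)^N-1]\|\te{T}\|_F$, obtained by plugging in the HOSVD-type projection of $\te{T}$ onto $\bigotimes_n\mathcal{R}(\ma{A}_t^{(n)})$ and telescoping mode by mode (this also explains the numerator $(1+\sin\theta_0)^N-1$ of $\mu$); Lemma~\ref{lem:xxcore} gives $\psi_t\le\frac{3}{\sqrt2}\phi_t$; and {\bf A4} says the ALS core $\te{G}_t$ of \eqref{parb}, which approximately minimizes the $\PO$-residual over TN-structured cores, is within a factor $\Gamma/\gamma$ of the free-core minimizer, so $R_t\le\frac{\Gamma}{\gamma}\psi_t\le\frac{3\Gamma}{\sqrt2\gamma}[(1+\sin\theta_t)^N-1]\|\te{T}\|_F\le\frac{3\Gamma}{\sqrt2\gamma}\tfrac{(1+\sin\theta_0)^N-1}{\sin\theta_0}\rho\|\te{T}\|_F\sin\theta_{t-1}$. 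Lower bound on $R_{t-1}$: $R_{t-1}\ge\phi_{t-1}\ge g_{\min}\sin\theta_{t-1}$, since for the worst mode $n_0$ the mode-$n_0$ residual of any $\llbracket\te{X};\ma{A}_{t-1}^{(1)},\dots\rrbracket$ is at least $\|(\ma{I}-\ma{A}_{t-1}^{(n_0)}(\ma{A}_{t-1}^{(n_0)})^{\T})\te{T}_{(n_0)}\|_F\ge\sigma_{\min}([\te{G}_*]_{(n_0)})\sin\theta_{t-1}\ge g_{\min}\sin\theta_{t-1}$. Dividing the upper bound by the lower bound, the factors $\|\te{T}\|_F/g_{\min}$, $\tfrac{(1+\sin\theta_0)^N-1}{\sin\theta_0}$, $3/\sqrt2$, $\Gamma/\gamma$ and $\rho$ reassemble into $\frac{\mu\Gamma}{\gamma}$, and the ``$7$'' absorbs the handful of $O(1)$ slacks from the Weyl steps, the near-orthonormality corrections, and the sequential nature of the within-iteration $\ma{A}^{(n)}$-updates; this yields $R_t\le\frac{7\mu\Gamma}{\gamma}R_{t-1}$.

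Finally I would collect the probabilities: each of the $N$ uses of Lemma~\ref{lem:xxdif} and the use of Lemma~\ref{lem:xxcore} per iteration hold with the stated high probability, and a union bound over modes and iterations (the failure probabilities being summable thanks to the $J_{\min}^{-(10+\log\alpha)}$ and $0.99$ margins and the free parameter $\alpha$) gives the ``w.h.p.'' qualifier; this part is routine. I expect the real obstacle to be the angle-contraction step: re-establishing at every $(t,n)$ the uniform lower bound $\sigma_{\min}(\wtd{\ma{X}}_{\opt})\ge g_{\min}-\sqrt2\kappa\psi_0-(\text{small})>0$ without circularity (it feeds on the invariants $\sin\theta_{t-1}\le\sin\theta_0$, $\sigma_{\min}([\te{G}_t]_{(n)})\ge g_{\min}$, {\bf A3}, {\bf A5}), and discharging hypothesis \eqref{mtheta} of Lemma~\ref{lem:xxdif}, i.e. that the angle between the effective right factors $\ma{M}_{t,n}[\te{G}_{t-1}]_{(n)}^{\T}$ and $\ma{M}_{*,n}[\te{G}_*]_{(n)}^{\T}$ is $\le C\sin\theta_{t-1}$ --- this couples the convergence of the core tensor to that of the factor matrices and is the genuinely delicate point (deferred to the appendix per the footnote). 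A lesser but real annoyance is the bookkeeping gap between the $\PO$-residual that the algorithm actually decreases (via $\tau_t$) and the full residual $R_t$ in the statement; under {\bf A2} the two are tied together by Lemmas~\ref{lem:xxdif}--\ref{lem:xxcore}, but one must always convert in the direction the inequalities permit.
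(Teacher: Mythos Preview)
Your plan matches the paper's proof closely: induction carrying $\sin\theta_t\le\sin\theta_0$, angle contraction from Lemma~\ref{lem:xxdif} together with the $\sin\Theta$/Weyl argument you describe (this is packaged as a separate Lemma in the appendix, giving exactly your $\rho$), the two-sided estimate $g_{\min}\sin\theta_t\le\phi_t\le\|\te{T}\|_F[(1+\sin\theta_t)^N-1]$ (another appendix lemma), and Lemma~\ref{lem:xxcore} for $\psi_t\le\tfrac{3}{\sqrt2}\phi_t$. The paper runs the induction on $\psi_t$ (proving $\psi_t\le\mu\psi_{t-1}$) rather than directly on $\sin\theta_t$, but this is only an organizational difference.

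The one genuine slip is your invocation of {\bf A4}. Assumption {\bf A4} compares two \emph{observed} residuals, namely $\tau_t=\|\PO(\llbracket\te{G}_t;\dots\rrbracket-\te{T})\|_F$ versus $\min_{\te{X}}\|\PO(\llbracket\te{X};\dots\rrbracket-\te{T})\|_F$; it says nothing directly about the full residual $R_t$, so your inequality $R_t\le(\Gamma/\gamma)\psi_t$ is unjustified as written. The paper instead sandwiches {\bf A4} between two applications of the Bernoulli concentration bound $\tfrac{\sqrt p}{2}\|\cdot\|\le\|\PO(\cdot)\|\le\tfrac{\sqrt{7p}}{2}\|\cdot\|$ (appendix Lemma~\ref{lem:vecp}), obtaining $R_t\le\tfrac{\sqrt7}{\gamma}\psi_t$ and $R_t\ge\tfrac{1}{\sqrt7\,\Gamma}\psi_t$. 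The factor $7$ in the theorem is therefore exactly $\sqrt7\cdot\sqrt7$ from these two observed-to-full conversions, not an $O(1)$ slack absorbing miscellaneous errors. With that correction your argument goes through and coincides with the paper's.
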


\begin{remark}
Let $\texttt{G}$ be a single node.
Then we are in fact solving LRTC problem with the multi-linear rank.
Let $I_1=\dots=I_N=d$, $\rank_n(\te{T})=(r,\dots,r)$.
Then $p_*d^N = O(Nr^{N-1}d\log d )$, i.e.,
we need at least $O(Nr^{N-1}d\log d)$ observations.
When $N=2$, we need $O(rd\log d)$ observations, which is theoretical optimal.
When $N=3$, we need $O(r^2 d \log d)$ observations.
Compared with existing multi-linear rank based methods, Algorithm~\ref{alg:tc} needs less observations, see Table~\ref{tab} below.
\end{remark}

\begin{table}[!ht]
\caption{ Bound for $|\Omega|$ for exact recovery -- the case for 3rd tensor of size $d\times d\times d$}
\label{tab}
\centering\vspace{0.1in}
\begin{tabular}{c||c}
\hline
Method & Bound for $|\Omega|$ \\
\hline
SNN~\cite{Proc:Mu_ICML14,Proc:Tomioka_NIPS11} & $O(rd^2)$ \\
Square Deal~\cite{Proc:Mu_ICML14} & $O(rd^2)$ \\
GoG~\cite{Article:Xia_FCSM19} & $O(r^{\frac72} d^{\frac32}\log^{\frac72}d+r^7 d \log^6 d)$ \\
TLALS (ours) & $O(r^2 d\log d)$ \\
\hline
\end{tabular}
\end{table}

\section{Numerical Experiment}\label{sec:numer}

In this section, we present several numerical examples to illustrate the performance of our method.

\vspace{0.1in}
\noindent{\bf Example 1.}\;
In this example, we let $\texttt{G}$ have a single node.
We compare our algorithm with two multi-linear rank based tensor completion methods, namely, geomCG~\cite{Article:Kressner_BNM14} and Tmac~\cite{xu2013parallel}.
\footnote{According to~\cite{Article:Sobral_PRL17}, Tmac ranked No.1 among 10 tensor completion methods over SBI data.}
The MATLAB codes for geomCG and Tmac are obtained from Github.
\footnote{https://github.com/andrewssobral/mctc4bmi}


We generate the tensor as
$\te{T} =\llbracket \te{G}; \ma{A}^{(1)},\dots,\ma{A}^{(N)}\rrbracket$,
where $\ma{A}^{(n)}\in\R^{I\times r_n}$,
$\te{G}\in\R^{r_1\times \dots\times r_N}$, and
their entries are {\em i.i.d.} from the standard normal distribution.
And each entry of $\te{T}$ is observed independently with probability $p$.
We perform the tests under the following settings:

\begin{enumerate}
\item[1] $I=(50,50,50)$, $R=(10,10,10)$, $p=0.1,0.2,0.3$;
\item $I=(50,50,50)$, $p=0.2$, $R=(r,r,r)$ for $r=5,10,20$;
\item $I=(50,50,50)$, $R=(r,r,r)$ for $r=5,10,\dots,25$, $p=0.05,0.10,\dots,0.4$.
\end{enumerate}

We use the residual
$\tau_t = \frac{\|\PO(\llbracket \te{G}_t; \ma{A}_t^{(1)},\dots,\ma{A}_t^{(N)}\rrbracket - \te{T})\|_F}{\|\PO(\te{T})\|_F}$
to measure the quality of the computed solution.

\begin{figure}[!hbt]
\vspace{0.1in}
\centering
\mbox{
\includegraphics[width=3in]{./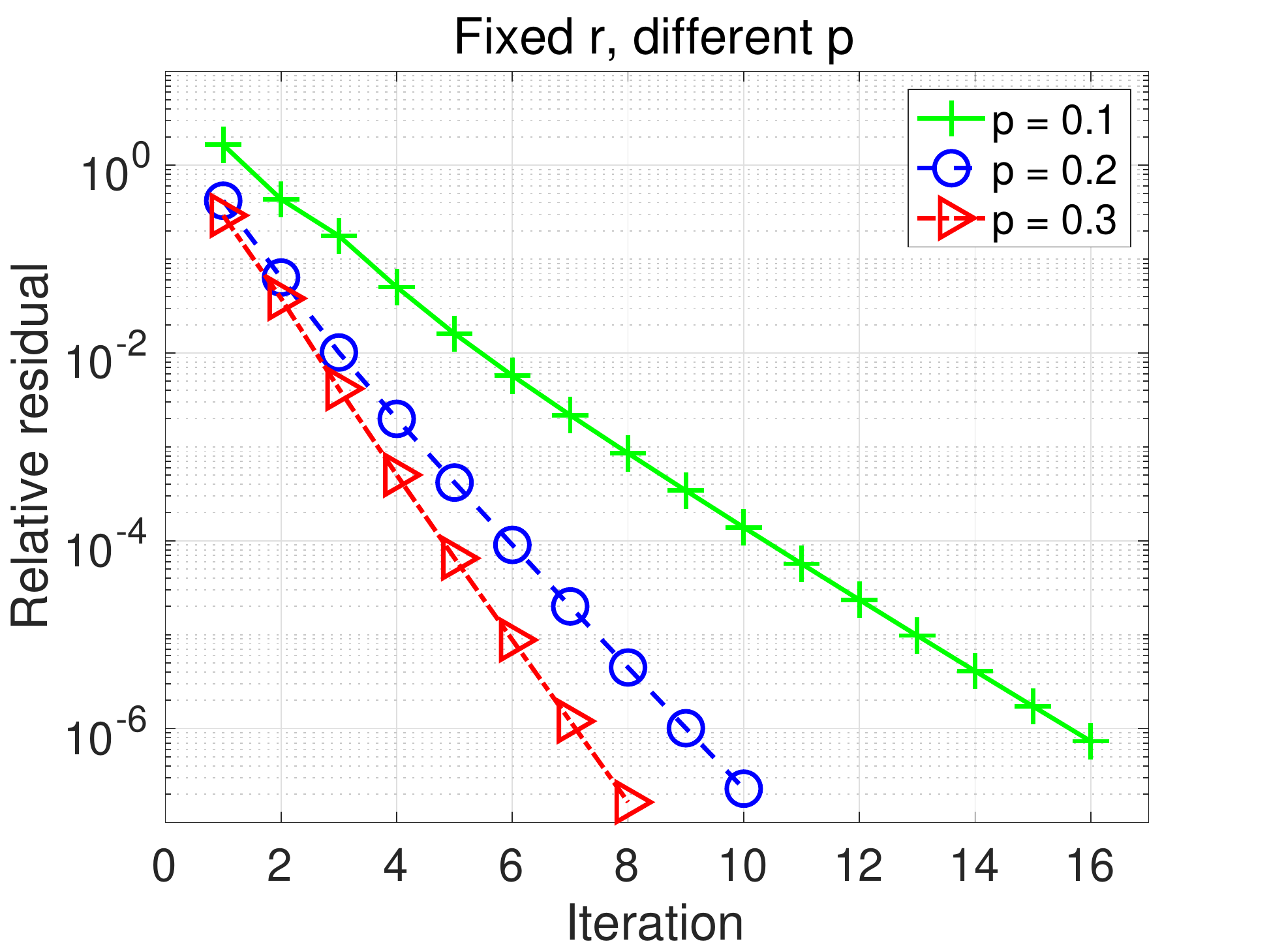}
\includegraphics[width=3in]{./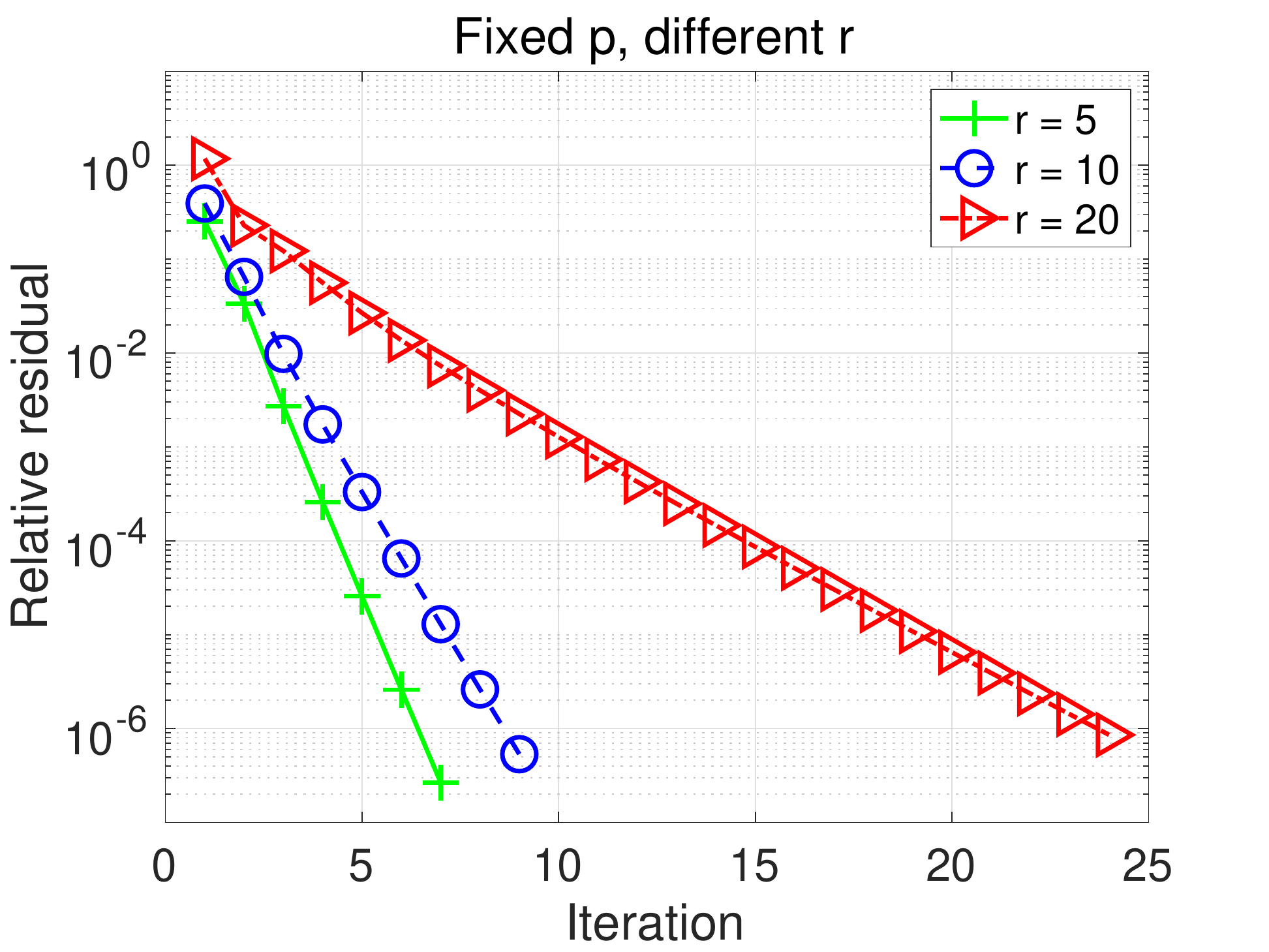}
}
\caption{Relative residual vs. iteration number, left: setting 1, right: setting2}
\label{fig1}
\end{figure}

In Figure~\ref{fig1}, we plot the results of our method under settings 1 and 2.
From the left (setting 1) and right (setting 2) figures,
we can see that in all cases, $\tau_t$ converges linearly;
the larger $p$ is, the larger the convergence rate is;
the smaller $r$ is, the larger the convergence rate is.

Under setting 3, we set the tolerance for all three methods to $10^{-4}$.
On the output of each method, if $\tau_t<10^{-2}$, we take it as a success.
For every pair of $(r,p)$, we perform all three methods 20 times.
The phase transitions for three methods are reported in Figure~\ref{fig2}.
We can see that to ensure a successful recovery,
our method permits a smaller $p$ and a larger $r$, compared with geomCG and Tmac.

\begin{figure}[!hbt]
\centering
\mbox{
\includegraphics[width=2.2in]{./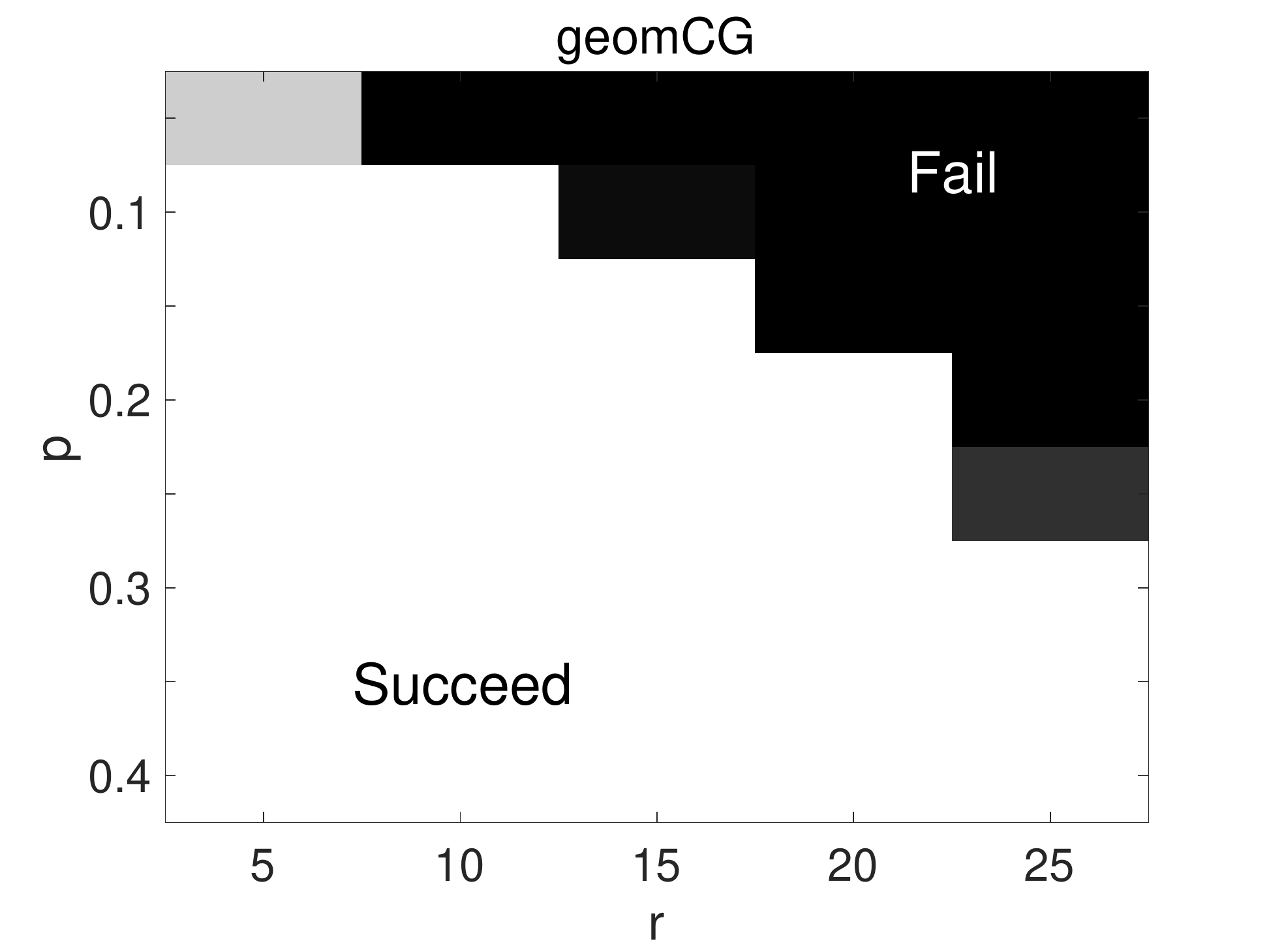}
\includegraphics[width=2.2in]{./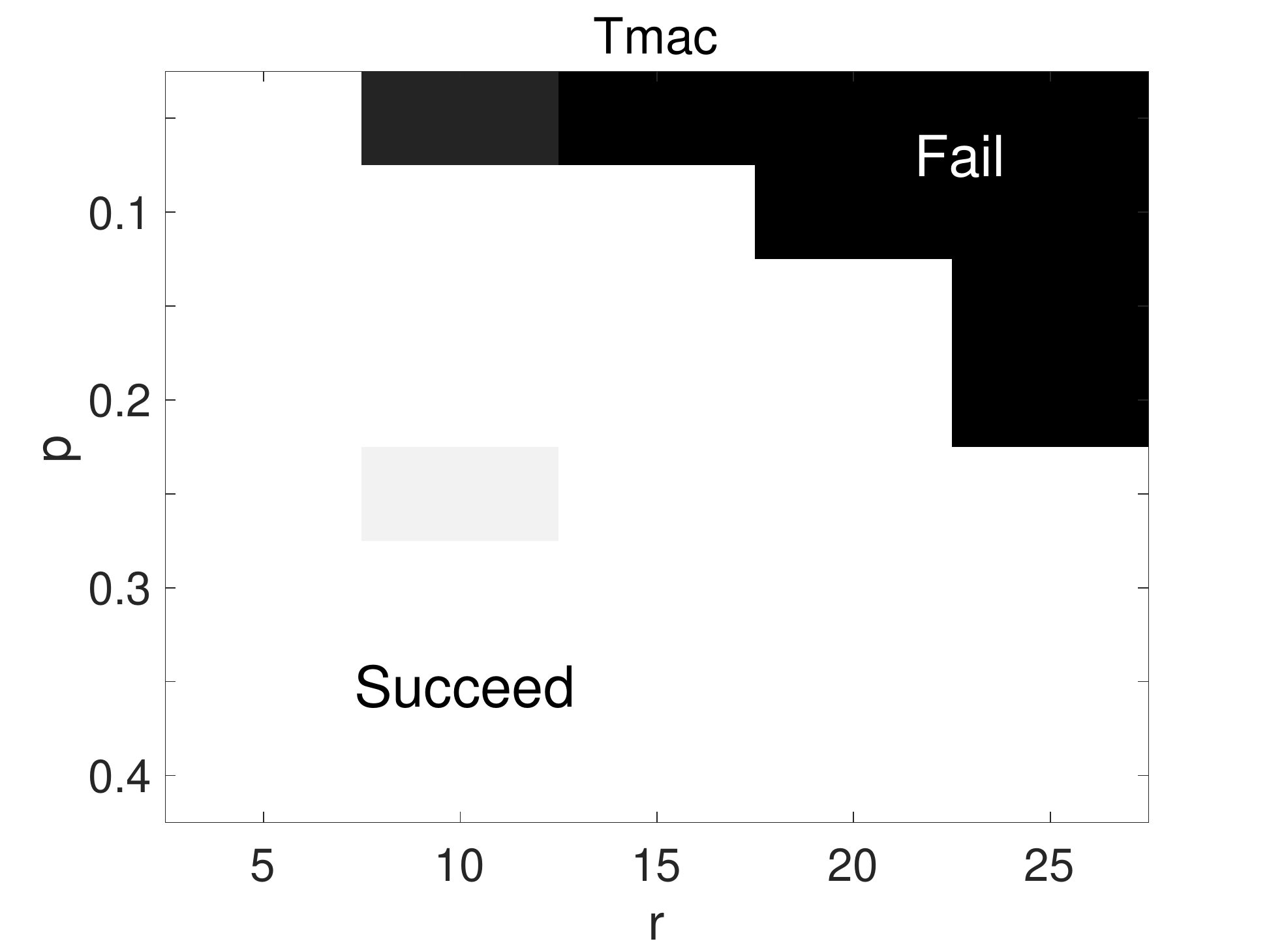}
\includegraphics[width=2.2in]{./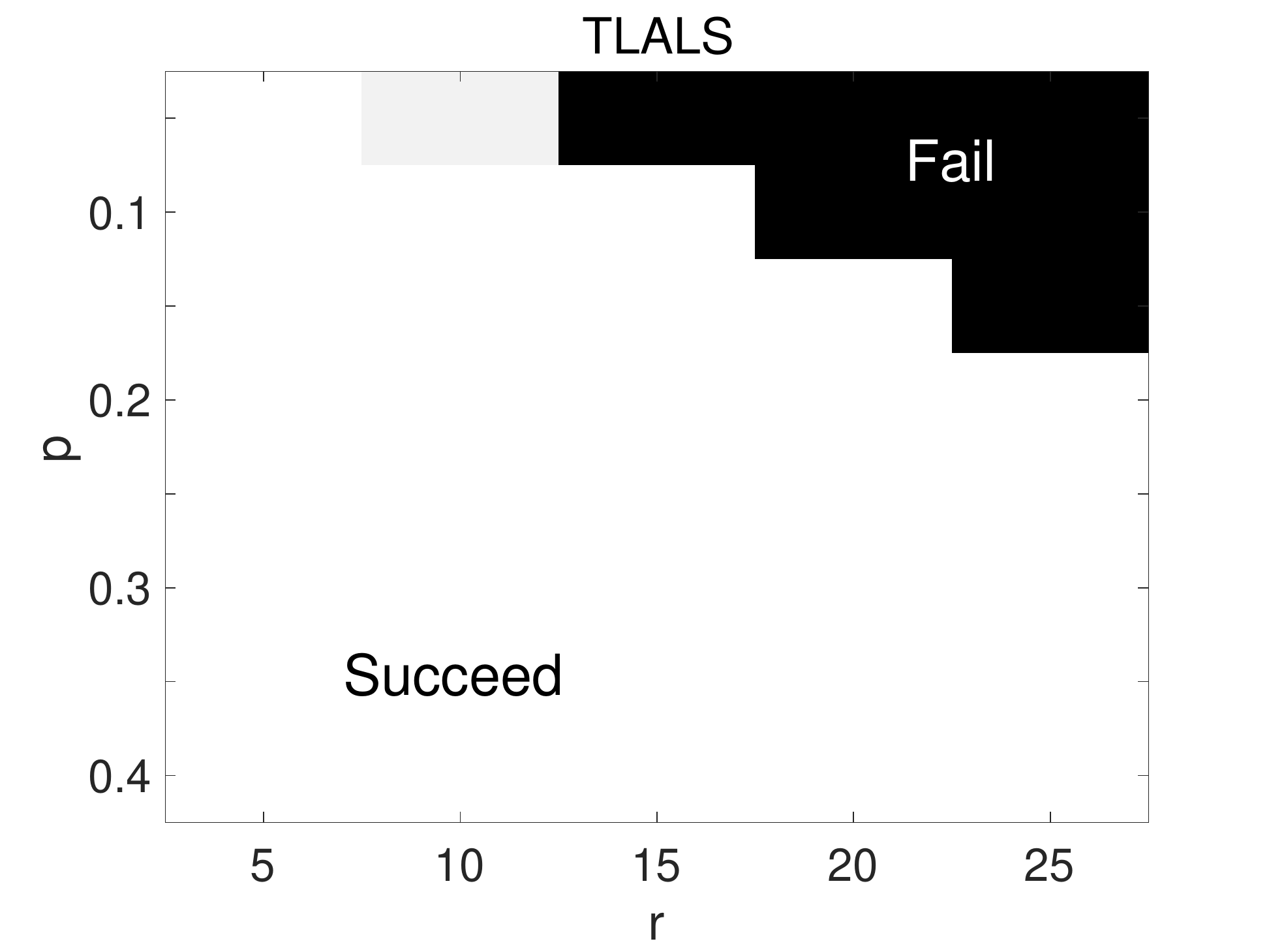}
}
\mbox{
GeomCG \hspace{1.6in} Tmac \hspace{1.7in} TLALS
}
\caption{Phase transition of successful recovery for three methods}
\label{fig2}
\end{figure}

\noindent{\bf Example 2.}\;
In this example, we show the effectiveness of the TN with a Tucker wrapper.
For the color image ``onion.png'' (available in MATLAB),
which is a third-order tensor of dimension $135\times198\times3$,
we reshape it into a fifth-order tensor of dimension $15\times9\times 18\times 11\times3$.
Each entry of the tensor is observed with a probability of $p=0.5$.
The original and observed figures are shown in Figure~\ref{fig4}.
We perform our method with three different tensor diagrams,
the recovered figures are shown in Figure~\ref{fig5}.
We can see that compared with simple Tucker,
Tucker with TT and TR improves the quality of the recovered figures.

\newpage

\begin{figure}[!hbt]
\centering
\includegraphics[width=2.1in]{./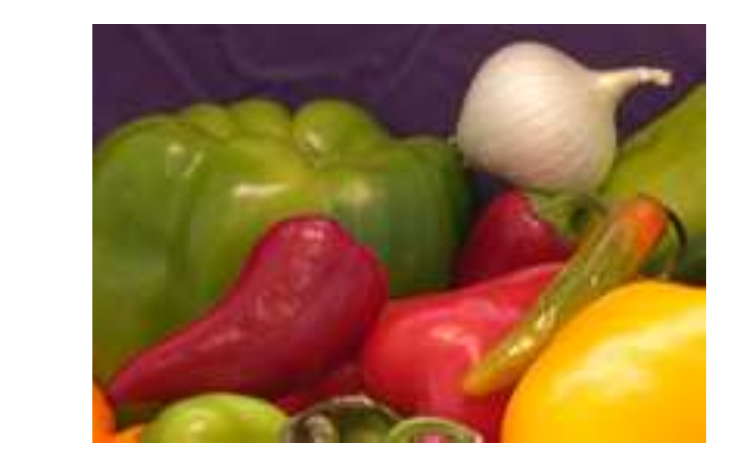}
\includegraphics[width=2.1in]{./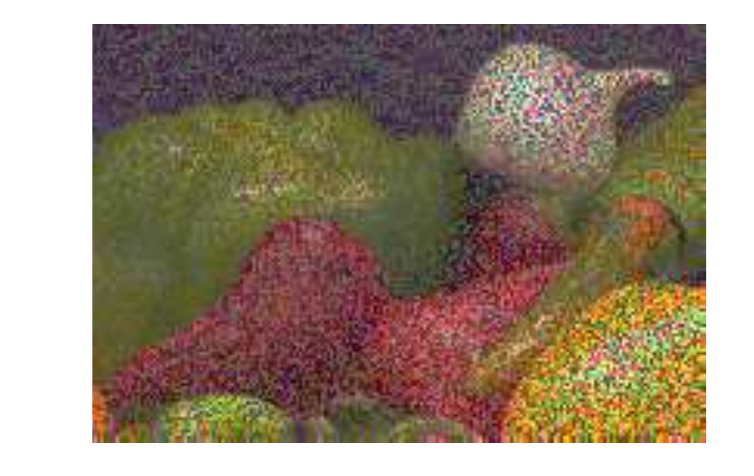}
\vspace{-0.1in}
\caption{Original and observed figures, left: original, right:observed}
\label{fig4}\vspace{0.2in}
\end{figure}

\begin{figure}[!ht]
\centering
\includegraphics[width=2in]{./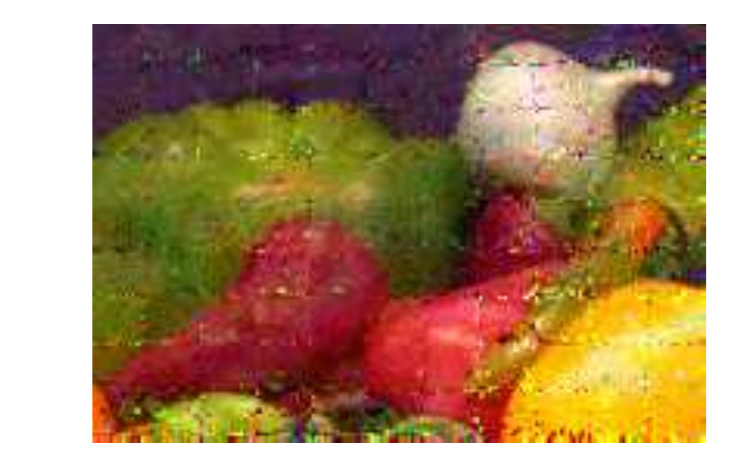}
\includegraphics[width=2in]{./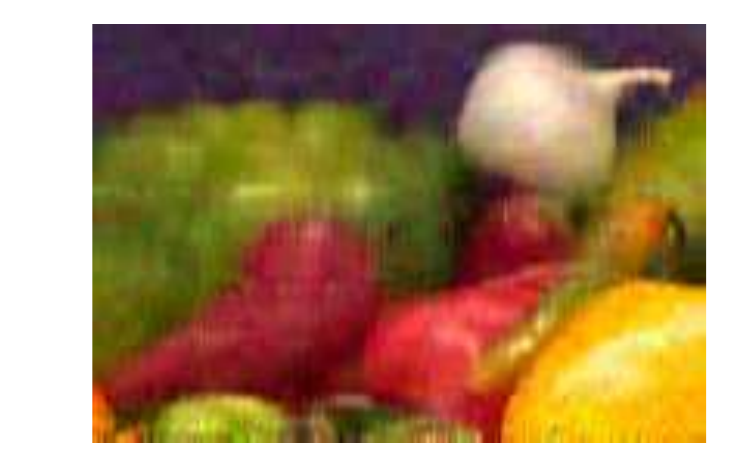}
\includegraphics[width=2in]{./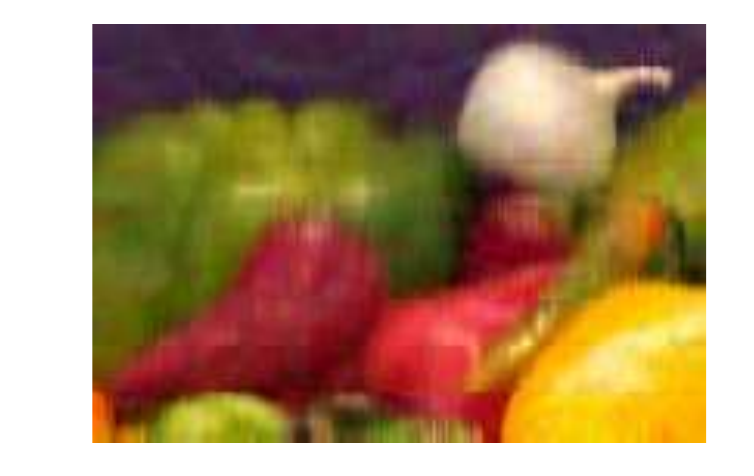}\\
{\sc psnr}$=20.75$ \hspace{1.2in} {\sc psnr}$=26.38$ \hspace{1.2in} {\sc psnr}$=26.83$\\
\vspace{0.1in}
\includegraphics[width=1.8in]{./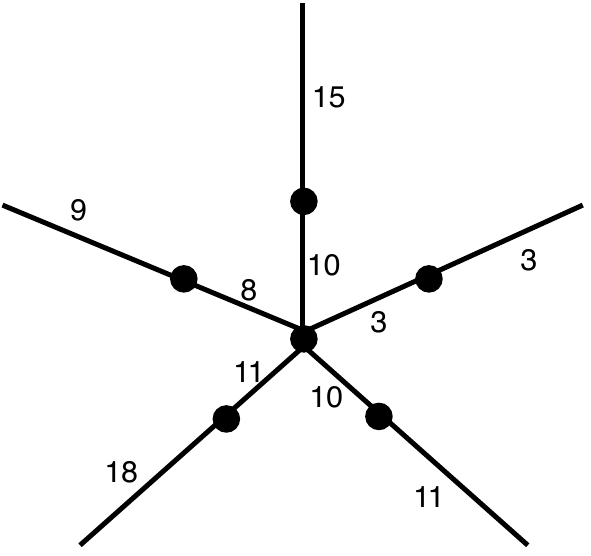}\hspace{0.1in}
\includegraphics[width=1.8in]{./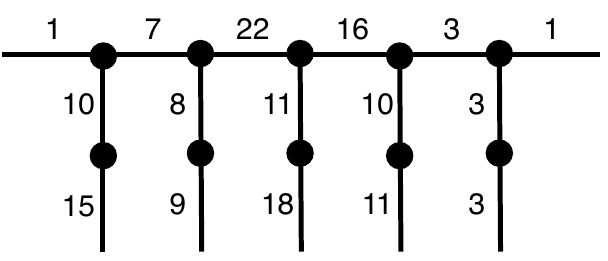}\hspace{0.1in}
\includegraphics[width=1.8in]{./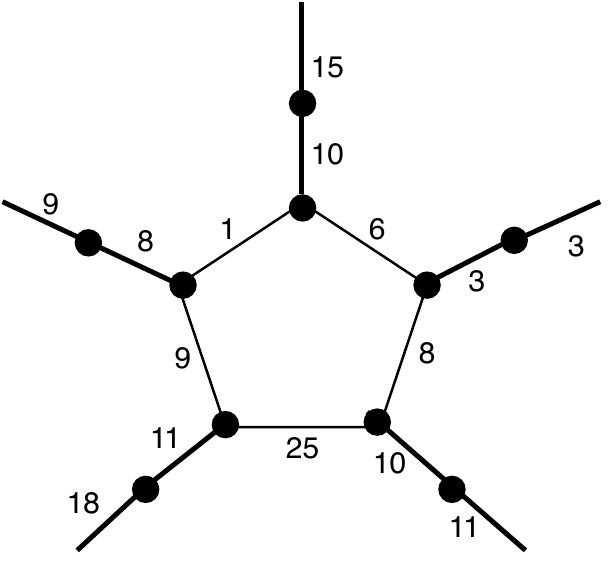}
\caption{Recovered figure, PSNR and the corresponding tensor diagram, from left to right: Tucker, Tucker + tensor train, Tucker + tensor ring}
\label{fig5}
\end{figure}

\section{Conclusion}\label{sec:conclusion}

This paper considers the LRTC problem via tensor networks with a Tucker wrapper.
The problem is formulated as a system of nonlinear equations,
and a two-level ALS method is proposed to solve it.
Under proper assumptions, it is shown that our method converges to the exact solution at a linear rate, {\it w.h.p}.
Also, to ensure convergence,
our method requires less number of observations compared with
existing multi-linear rank based tensor completion methods.
Numerical simulations show that the method is comparable with state-of-the-art algorithms.

The topology structure search problem remains still open: how to find the tensor diagram to achieve the best performance for real-world data?
More studies are needed to that end.

\clearpage

\bibliographystyle{plain}

\bibliography{standard}

\newpage\clearpage


\noindent{\bf\large Appendix}

\section{Preliminary Lemmas}
In this section, we give preliminary lemmas that will be frequently used in subsequent proofs. Firstly, the following lemma is fundamental for canonical angle, and can be easily verified via definition.\\

\begin{lemma}\label{lem:sin}
    Let $[\ma{U},\, \ma{U}_{\rm c}]$ and $[\ma{V},\, \ma{V}_{\rm c}]$ be two orthogonal matrices with $\ma{U}, \ma{V}\in\R^{n\times k}$. Then
    \begin{align*}
    \|\sin\Theta(\ma{U},\ma{V})\|
    &=\|\ma{U}_{\rm c}^{\T} \ma{V}\|=\|\ma{U}^{\T} \ma{V}_{\rm c}\|
    =\|(I-\ma{U} \ma{U}^{\T}) \ma{V}\|=\|(I - \ma{V} \ma{V}^{\T})\ma{U}\|
    =\|\ma{U} \ma{U}^{\T} - \ma{V} \ma{V}^{\T}\|.
    \end{align*}
\end{lemma}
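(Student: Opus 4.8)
The plan is to show that every one of the six quantities equals $\max_j\sin\theta_j(\mathcal R(\ma U),\mathcal R(\ma V))$, by relating each to the singular values of $\ma U^{\T}\ma V$, which by the definition of the canonical angles are $\omega_1,\dots,\omega_k$ with $\omega_j=\cos\theta_j$. First I would handle $\|\ma U_{\rm c}^{\T}\ma V\|$: from orthogonality of $[\ma U,\ma U_{\rm c}]$ we have $\ma U\ma U^{\T}+\ma U_{\rm c}\ma U_{\rm c}^{\T}=\ma I_n$, and together with $\ma V^{\T}\ma V=\ma I_k$ this gives
\[
(\ma U_{\rm c}^{\T}\ma V)^{\T}(\ma U_{\rm c}^{\T}\ma V)=\ma V^{\T}(\ma I_n-\ma U\ma U^{\T})\ma V=\ma I_k-(\ma U^{\T}\ma V)^{\T}(\ma U^{\T}\ma V).
\]
Taking an SVD $\ma U^{\T}\ma V=\ma P\,\diag(\omega_1,\dots,\omega_k)\,\ma Q^{\T}$, the right-hand side equals $\ma Q\,\diag(1-\omega_1^2,\dots,1-\omega_k^2)\,\ma Q^{\T}$, so the singular values of $\ma U_{\rm c}^{\T}\ma V$ are $\sqrt{1-\omega_j^2}=\sin\theta_j$; hence $\|\ma U_{\rm c}^{\T}\ma V\|=\max_j\sin\theta_j=\|\sin\Theta(\ma U,\ma V)\|$ (the last equality because $\sin\Theta$ is diagonal with those entries).

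Next, $(\ma I-\ma U\ma U^{\T})\ma V=\ma U_{\rm c}\ma U_{\rm c}^{\T}\ma V$, and left multiplication by $\ma U_{\rm c}$, which has orthonormal columns, preserves singular values, so $\|(\ma I-\ma U\ma U^{\T})\ma V\|=\|\ma U_{\rm c}^{\T}\ma V\|$. The canonical angles are symmetric in their two arguments, since $\ma V^{\T}\ma U=(\ma U^{\T}\ma V)^{\T}$ has the same singular values as $\ma U^{\T}\ma V$; so exchanging the roles of $\ma U$ and $\ma V$ in the two identities just established yields $\|\ma U^{\T}\ma V_{\rm c}\|=\|(\ma I-\ma V\ma V^{\T})\ma U\|=\|\sin\Theta(\ma U,\ma V)\|$. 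This disposes of five of the six equalities.

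For the last one I would write $\ma P_U=\ma U\ma U^{\T}$, $\ma P_V=\ma V\ma V^{\T}$ and verify the algebraic identity
\[
(\ma P_U-\ma P_V)^2=(\ma I-\ma P_V)\ma P_U(\ma I-\ma P_V)+\ma P_V(\ma I-\ma P_U)\ma P_V,
\]
obtained by expanding both sides using $\ma P_U^2=\ma P_U$ and $\ma P_V^2=\ma P_V$. The two summands are symmetric positive semidefinite, and their ranges lie in $\mathcal R(\ma V_{\rm c})$ and $\mathcal R(\ma V)$ respectively, which are orthogonal; therefore $\|(\ma P_U-\ma P_V)^2\|$ is the larger of the two summand norms. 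Writing each summand as $\ma N^{\T}\ma N$ with $\ma N=(\ma I-\ma P_V)\ma P_U$, resp. $\ma N=(\ma I-\ma P_U)\ma P_V$ (legitimate since $\ma I-\ma P_V$ and $\ma I-\ma P_U$ are symmetric idempotents), and using that right multiplication by $\ma U^{\T}$, resp. $\ma V^{\T}$, preserves the spectral norm, one gets $\|(\ma I-\ma P_V)\ma P_U\|=\|(\ma I-\ma V\ma V^{\T})\ma U\|$ and $\|(\ma I-\ma P_U)\ma P_V\|=\|(\ma I-\ma U\ma U^{\T})\ma V\|$, both already shown to equal $\|\sin\Theta(\ma U,\ma V)\|$. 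Hence $\|(\ma P_U-\ma P_V)^2\|=\|\sin\Theta(\ma U,\ma V)\|^2$, so $\|\ma U\ma U^{\T}-\ma V\ma V^{\T}\|=\|\sin\Theta(\ma U,\ma V)\|$. The only genuinely nontrivial step is this last identity; within it, the point deserving care is the orthogonality of the ranges of the two summands, which is exactly what lets the norm of the sum be the maximum of the two norms — everything else is bookkeeping with orthonormal bases.
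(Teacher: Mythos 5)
Your proof is correct. The paper itself offers no written proof of this lemma (it is stated as ``easily verified via definition''), and your argument is the standard verification: the identity $\ma{V}^{\T}(\ma{I}-\ma{U}\ma{U}^{\T})\ma{V}=\ma{I}_k-(\ma{U}^{\T}\ma{V})^{\T}(\ma{U}^{\T}\ma{V})$ identifies the singular values of $\ma{U}_{\rm c}^{\T}\ma{V}$ with $\sin\theta_j$, symmetry and orthonormal-factor invariance give four of the equalities, and the projector identity with the orthogonal-range argument gives $\|\ma{U}\ma{U}^{\T}-\ma{V}\ma{V}^{\T}\|$. Two cosmetic points: with $\ma{N}=(\ma{I}-\ma{P}_V)\ma{P}_U$ the first summand is $\ma{N}\ma{N}^{\T}$ rather than $\ma{N}^{\T}\ma{N}$ (harmless, since both have norm $\|\ma{N}\|^2$), and you implicitly use $\|(\ma{P}_U-\ma{P}_V)^2\|=\|\ma{P}_U-\ma{P}_V\|^2$, which is justified because $\ma{P}_U-\ma{P}_V$ is symmetric.
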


The next lemma is the well-known Weyl theorem,  which gives the perturbation bound for singular values.
\begin{lemma}\cite[Corollary 5.1]{demmel1997applied}
    \label{lem:eig}
    For two matrices $\ma{A}$, $\wtd{\ma{A}}\in\mathbb{C}^{m\times n}$,
    it holds
    \[
    |\sigma_j(\wtd{\ma{A}})-\sigma_j(\ma{A})| \le \|\ma{A}-\wtd{\ma{A}}\|,\quad
    \forall j.
    \]
\end{lemma}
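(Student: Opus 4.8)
The statement is the Weyl-type perturbation inequality for singular values; the plan is to derive it from the Courant–Fischer min–max characterization, with the Jordan–Wielandt dilation argument as a fallback. First I would recall the variational formula: for any $\ma{B}\in\mathbb{C}^{m\times n}$ and any index $j$ with $1\le j\le\min\{m,n\}$,
\[
\sigma_j(\ma{B})=\max_{\substack{\mathcal{S}\subseteq\mathbb{C}^{n}\\ \dim\mathcal{S}=j}}\ \min_{\substack{\ve{x}\in\mathcal{S}\\ \|\ve{x}\|=1}}\ \|\ma{B}\ve{x}\|,
\]
which is just the classical Courant–Fischer theorem applied to the Hermitian positive semidefinite matrix $\ma{B}^{*}\ma{B}$ (whose $j$-th largest eigenvalue equals $\sigma_j(\ma{B})^{2}$), together with the identity $\|\ma{B}\ve{x}\|^{2}=\ve{x}^{*}\ma{B}^{*}\ma{B}\ve{x}$; the maximum is attained because the Grassmannian of $j$-dimensional subspaces is compact and the inner functional is continuous.

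Next I would record the elementary uniform estimate: by the reverse triangle inequality in the Euclidean norm and the definition of the spectral norm, every unit vector $\ve{x}$ satisfies
\[
\big|\,\|\ma{A}\ve{x}\|-\|\wtd{\ma{A}}\ve{x}\|\,\big|\le\|(\ma{A}-\wtd{\ma{A}})\ve{x}\|\le\|\ma{A}-\wtd{\ma{A}}\|,
\]
so in particular $\|\wtd{\ma{A}}\ve{x}\|\ge\|\ma{A}\ve{x}\|-\|\ma{A}-\wtd{\ma{A}}\|$. Fixing $j$ and choosing a $j$-dimensional subspace $\mathcal{S}_{*}$ that attains the maximum in the characterization of $\sigma_j(\ma{A})$, I then get
\begin{align*}
\sigma_j(\wtd{\ma{A}})&\ge\min_{\ve{x}\in\mathcal{S}_{*},\,\|\ve{x}\|=1}\|\wtd{\ma{A}}\ve{x}\|\\
&\ge\min_{\ve{x}\in\mathcal{S}_{*},\,\|\ve{x}\|=1}\|\ma{A}\ve{x}\|-\|\ma{A}-\wtd{\ma{A}}\|=\sigma_j(\ma{A})-\|\ma{A}-\wtd{\ma{A}}\|.
\end{align*}
Exchanging the roles of $\ma{A}$ and $\wtd{\ma{A}}$ gives the opposite bound, and together these yield $|\sigma_j(\wtd{\ma{A}})-\sigma_j(\ma{A})|\le\|\ma{A}-\wtd{\ma{A}}\|$ for every $j$; indices beyond $\min\{m,n\}$ are vacuous since the corresponding singular values vanish on both sides.

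An alternative I would mention is the dilation route: the Hermitian matrix $\bsmat \ma{0}&\ma{B}\\ \ma{B}^{*}&\ma{0}\esmat$ has eigenvalues $\pm\sigma_1(\ma{B}),\ldots,\pm\sigma_{\min\{m,n\}}(\ma{B})$ together with $|m-n|$ zeros, and the dilation of $\ma{A}-\wtd{\ma{A}}$ has spectral norm exactly $\|\ma{A}-\wtd{\ma{A}}\|$; applying the Hermitian Weyl eigenvalue inequality to these two dilations and reading off the positive part of the spectrum reproduces the bound. There is no genuine obstacle in either route — the only mild care needed is the bookkeeping of the min–max indices in the rectangular case and the existence of the optimizing subspace (both handled above) — which is why in the paper the result is simply invoked from \cite[Corollary 5.1]{demmel1997applied}.
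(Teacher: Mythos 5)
Your proof is correct. The paper does not prove this lemma at all---it is simply quoted from Demmel's book, and the cited Corollary 5.1 is obtained there by applying the Hermitian Weyl inequality to the Jordan--Wielandt dilation $\bsmat \ma{0}&\ma{B}\\ \ma{B}^{*}&\ma{0}\esmat$, which is exactly the fallback route you sketch; your primary argument via the Courant--Fischer characterization $\sigma_j(\ma{B})=\max_{\dim\mathcal{S}=j}\min_{\ve{x}\in\mathcal{S},\|\ve{x}\|=1}\|\ma{B}\ve{x}\|$ together with the reverse triangle inequality is an equally standard direct proof and is complete as written, including the handling of indices beyond $\min\{m,n\}$.
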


 \begin{lemma}\label{lem:prob}\cite[Lemmas 8,10]{Proc:Jain_COLT15} 
Let $\ma{A}\in\R^{m\times n}$ with $m\ge n$. Suppose $\Omega$ is obtained by sampling each entry of $\ma{A}$ with probability $p\in [\frac{1}{4m}, 0.5]$.
Then w.p. $\ge 1-1/m^{10+\log \alpha}$, it holds
\begin{align*}
\|\frac1p \PO(\ma{A})-\ma{A}\|\le \frac{6\sqrt{\alpha m}}{\sqrt{p}} \|\ma{A}\|_{\max}.
\end{align*}
\end{lemma}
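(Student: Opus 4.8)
The plan is to treat $\tfrac1p\PO(\ma A)-\ma A$ as a sum of independent, mean-zero, rank-one random matrices and apply the rectangular matrix Bernstein inequality. Let $\delta_{ij}\in\{0,1\}$ be the indicator of the event $(i,j)\in\Omega$, each an independent Bernoulli random variable with success probability $p$. The starting point is the decomposition
\[
\tfrac1p\PO(\ma A)-\ma A=\sum_{i=1}^m\sum_{j=1}^n\ma Z_{ij},\qquad \ma Z_{ij}:=\Bigl(\tfrac{\delta_{ij}}{p}-1\Bigr)A_{ij}\,\ve{e}_i\ve{e}_j^{\T},
\]
in which every summand has mean zero.

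First I would bound the almost-sure operator norm of the summands: since $p\le\tfrac12$ we have $\bigl|\tfrac{\delta_{ij}}{p}-1\bigr|\le\tfrac1p-1$, hence $\|\ma Z_{ij}\|\le(\tfrac1p-1)\|\ma A\|_{\max}=:R$. Next I would compute the two matrix-variance terms. Using $\mathbb{E}\bigl[(\tfrac{\delta_{ij}}{p}-1)^2\bigr]=\tfrac{1-p}{p}$ and the orthonormality of the standard basis vectors, both $\sum_{i,j}\mathbb{E}[\ma Z_{ij}\ma Z_{ij}^{\T}]$ and $\sum_{i,j}\mathbb{E}[\ma Z_{ij}^{\T}\ma Z_{ij}]$ are diagonal, with entries equal to $\tfrac{1-p}{p}$ times the squared row norms, respectively the squared column norms, of $\ma A$; hence their norms are at most $\tfrac{1-p}{p}\,n\|\ma A\|_{\max}^2$ and $\tfrac{1-p}{p}\,m\|\ma A\|_{\max}^2$. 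Since $m\ge n$, the Bernstein variance parameter is $v\le\tfrac{1-p}{p}\,m\|\ma A\|_{\max}^2$. Matrix Bernstein then yields, for every $t\ge0$,
\[
\Pr\Bigl(\bigl\|\tfrac1p\PO(\ma A)-\ma A\bigr\|\ge t\Bigr)\le(m+n)\exp\!\left(\frac{-t^2/2}{v+Rt/3}\right).
\]

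Finally I would substitute $t=\tfrac{6\sqrt{\alpha m}}{\sqrt p}\|\ma A\|_{\max}$ and simplify. With $M=\|\ma A\|_{\max}$ one has $t^2/2=18\alpha mM^2/p$ and $v\le mM^2/p$; the hypothesis $p\ge\tfrac1{4m}$ gives $1/\sqrt p\le2\sqrt m$, so $Rt/3\le\tfrac{M}{p}\cdot\tfrac{2\sqrt{\alpha m}\,M}{\sqrt p}\le\tfrac{4\sqrt{\alpha}\,mM^2}{p}$, whence the Bernstein exponent is at most $-\tfrac{18\alpha}{1+4\sqrt\alpha}$; combined with $m+n\le 2m$ this bounds the failure probability by $m^{-(10+\log\alpha)}$ after a short numerical estimate. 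The step I expect to be the main obstacle is precisely this last bookkeeping: one has to keep the numerical constants tight enough that the advertised prefactor $6\sqrt{\alpha m}/\sqrt p$, and no larger one, suffices, and --- more delicately --- one has to control the ``spiky'' term $Rt$ against the variance $v$ in the Bernstein denominator in the regime where $p$ is near its lower bound $\tfrac1{4m}$; this is exactly where the two hypotheses $p\ge\tfrac1{4m}$ and $m\ge n$, together with the way the probability exponent is permitted to scale with $\log\alpha$, all enter.
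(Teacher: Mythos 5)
The paper does not actually prove this statement: it is imported verbatim, with a citation, from Jain and Netrapalli (COLT 2015, Lemmas 8 and 10), so there is no internal proof to compare against. Judged on its own, your Bernstein-based proposal has a genuine gap, and it sits exactly at the step you flagged as ``bookkeeping.'' Your setup is fine: the decomposition into independent mean-zero summands, the bound $R\le \|\ma{A}\|_{\max}/p$, and the variance bound $v\le \frac{1-p}{p}\,m\|\ma{A}\|_{\max}^2$ are all correct. But with $t=\frac{6\sqrt{\alpha m}}{\sqrt p}\|\ma{A}\|_{\max}$ the Bernstein exponent you obtain is
\[
\frac{t^2/2}{v+Rt/3}\;\ge\;\frac{18\alpha}{1+4\sqrt{\alpha}},
\]
which is a constant depending only on $\alpha$, not on $m$. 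Hence your failure-probability bound is $2m\exp\bigl(-\tfrac{18\alpha}{1+4\sqrt\alpha}\bigr)$, which \emph{increases} with $m$, while the target $m^{-(10+\log\alpha)}$ \emph{decreases} with $m$; no numerical estimate can close this for all $m$ (e.g.\ for $\alpha=100$ the claim already fails at $m=100$). Structurally, to extract a failure probability polynomially small in $m$ from matrix Bernstein one must take $t\gtrsim\sqrt{v\log m}\asymp\sqrt{\alpha m\log m/p}\,\|\ma{A}\|_{\max}$, i.e.\ Bernstein unavoidably produces an extra $\sqrt{\log m}$ factor that the stated bound does not contain.

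This is not a defect you can patch by tightening constants: the log-free bound $\|\tfrac1p\PO(\ma{A})-\ma{A}\|\lesssim\sqrt{m/p}\,\|\ma{A}\|_{\max}$ with polynomial-in-$m$ confidence is precisely the kind of result whose known proofs avoid matrix Bernstein altogether, using instead the Feige--Ofek/Keshavan--Montanari--Oh style argument (discretization of the unit sphere, splitting into light and heavy entry contributions, and row/column or path counting), which is what the cited Jain--Netrapalli lemmas rely on. So either import that combinatorial argument (or simply cite it, as the paper does), or accept a weakened statement carrying a $\sqrt{\log m}$ factor --- which would then have to be propagated into the sampling threshold $p_*$ used downstream in Lemmas~\ref{lem:xxdif} and~\ref{lem:xxcore} and Theorem~\ref{thm:main}.
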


 \begin{lemma}\label{lem:bernstein}\cite[Corollary 6.1.2]{Article:Tropp_FTML15}
 Let ${\bf S}_1,\dots, {\bf S}_n$ be independent random matrices with common dimension $d_1\times d_2$,
 and assume that each matrix has uniformly bounded deviation from its mean:
 \[
\|{\bf S_k} - \mathbb{E}({\bf S}_k)\|\le L, \quad \mbox{ for each } k=1,\dots, n.
 \]
 Let ${\bf Z}=\sum_{k=1}^n {\bf S}_k$, $v({\bf Z})$ denote the matrix cariance statistic of the sum:
 \begin{align*}
 v({\bf Z})
  =&\max\{\|\mathbb{E}[({\bf Z} -\mathbb{E}({\bf Z}))({\bf Z} -\mathbb{E}({\bf Z}))^{\T}]\|,
 \|\mathbb{E}[({\bf Z} -\mathbb{E}({\bf Z}))^{\T}({\bf Z} -\mathbb{E}({\bf Z}))]\| \}\\
=&\max\{\|\mathbb{E}[\sum_{k=1}^n({\bf S}_k-\mathbb{E}({\bf S}_k))({\bf S}_k-\mathbb{E}({\bf S}_k))^{\T}]\|,
\|\mathbb{E}[\sum_{k=1}^n({\bf S}_k-\mathbb{E}({\bf S}_k))^{\T}({\bf S}_k-\mathbb{E}({\bf S}_k))]\|\}.
 \end{align*}
 Then for all $t\ge 0$, it hols that
 \[
 \mathbb{P}\{\|{\bf Z}-\mathbb{E}({\bf Z})\|\ge t\}\le (d_1+d_2)\cdot \exp\Big(\frac{-t^2/2}{v({\bf Z})+Lt/3}\Big).
 \]
 \end{lemma}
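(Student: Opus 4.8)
Write $e_t:=\|\llbracket \te{G}_t;\ma{A}_t^{(1)},\dots,\ma{A}_t^{(N)}\rrbracket-\te{T}\|_F$ for the quantity in the statement, and recall $\sin\theta_t=\max_{k}\|\sin\Theta(\ma{A}_t^{(k)},\ma{A}_*^{(k)})\|$ and $\phi_t,\psi_t$ from Lemmas~\ref{lem:xxdif}--\ref{lem:xxcore}. The idea is to run an induction on $t$ that carries the hypothesis $e_{t-1}\le g_{\min}/2$ (equivalently $\sin\theta_{t-1}\le\sin\theta_0$) and to close the loop $e_{t-1}\rightsquigarrow\sin\theta_{t-1}\rightsquigarrow\sin\theta_t\rightsquigarrow\phi_t\rightsquigarrow e_t$, so that one full iteration of Algorithm~\ref{alg:tc} multiplies the error by a factor that the hypothesis $\mu<\gamma/(7\Gamma)$ forces to be strictly below $1$. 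Each link of the chain is a separate lemma-level estimate, assembled at the end; the two bridge lemmas supply the two ``partial vs.\ full observation'' links, the two-sidedness of {\bf A4} supplies the $\Gamma/\gamma$ factor, and {\bf A5} is what lets us identify ranks and read $\sin\theta$ off of residuals.

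\noindent\textbf{Controlling the subspace angle.} For the first link, {\bf A5} gives $\rank([\te{G}_{t-1}]_{(n)})=r_n$, so the mode-$n$ unfolding of $\llbracket\te{G}_{t-1};\ma{A}_{t-1}\rrbracket$ has column space exactly $\mathcal R(\ma{A}_{t-1}^{(n)})$ and $r_n$-th singular value $\ge\sigma_{\min}([\te{G}_{t-1}]_{(n)})\ge g_{\min}$, while $\te{T}_{(n)}=\ma{A}_*^{(n)}[\te{G}_*]_{(n)}\ma{M}_{*,n}^{\T}$ has column space $\mathcal R(\ma{A}_*^{(n)})$ and $r_n$-th singular value $\ge g_{\min}$; since the two unfoldings differ by at most $e_{t-1}$ in Frobenius norm, Weyl's inequality (Lemma~\ref{lem:eig}) together with the $\sin\Theta$ identities (Lemma~\ref{lem:sin}) give $\sin\theta_{t-1}\le\sqrt2\,e_{t-1}/g_{\min}$. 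For the second link I would invoke Lemma~\ref{lem:xxdif}: Algorithm~\ref{alg:tc} sets $\ma{A}_t^{(n)}$ to an orthonormal basis of $\wtd{\ma{X}}_{\opt}$, and in the noiseless full-observation problem \eqref{xls} shows the corresponding minimizer $\ma{X}_{\opt}=\ma{A}_*^{(n)}[\te{G}_*]_{(n)}\ma{M}_{*,n}^{\T}\ma{M}_{t,n}[\te{G}_{t-1}]_{(n)}^{\dagger}$ has $\mathcal R(\ma{X}_{\opt})\subseteq\mathcal R(\ma{A}_*^{(n)})$, with equality because assumption~\eqref{mtheta} and {\bf A5} keep the trailing $r_n\times r_n$ factor invertible. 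Hence $\|\sin\Theta(\ma{A}_t^{(n)},\ma{A}_*^{(n)})\|=\|\sin\Theta(\wtd{\ma{X}}_{\opt},\ma{X}_{\opt})\|\le\|\wtd{\ma{X}}_{\opt}-\ma{X}_{\opt}\|/\sigma_{\min}(\wtd{\ma{X}}_{\opt})$; the numerator is bounded by Lemma~\ref{lem:xxdif} and $\sigma_{\min}(\wtd{\ma{X}}_{\opt})$ is bounded below, again by Weyl, by subtracting the Lemma~\ref{lem:xxdif} bound (evaluated with $\sin\theta_{t-1}\le\sin\theta_0$) and a core-distance term $\sqrt2\kappa\psi_0$ from a lower bound on $\sigma_{\min}(\ma{X}_{\opt})$ --- this is exactly how the denominator $g_{\min}-\sqrt2\kappa\psi_0-6g_{\max}(C_1+C_2)\sqrt{\alpha/p}\,\sin\theta_0$ in $\mu$ is born. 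Taking the maximum over $n$ and over the $N$ sequential factor updates (the already-updated factors still satisfy $\|\sin\Theta\|\le\sin\theta_{t-1}$, so~\eqref{mtheta} keeps applying) yields $\sin\theta_t\le\rho\,\sin\theta_{t-1}$ with $\rho$ of the stated form.

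\noindent\textbf{Controlling the residual.} For the third link, $\phi_t=\min_{\te{X}}\|\llbracket\te{X};\ma{A}_t\rrbracket-\te{T}\|_F=\|(\te{I}-\te{P}_t)\te{T}\|_F$ with $\te{P}_t$ the projector onto the product subspace of the $\ma{A}_t^{(n)}$'s, and bounding the projection error of $\te{T}=\llbracket\te{G}_*;\ma{A}_*\rrbracket$ onto perturbed subspaces mode by mode gives $\phi_t\le\|\te{T}\|_F\big[(1+\sin\theta_t)^N-1\big]\le\|\te{T}\|_F\frac{(1+\sin\theta_0)^N-1}{\sin\theta_0}\sin\theta_t$, which furnishes that factor of $\mu$. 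For the last link I use the orthogonal decomposition $\llbracket\te{G}_t;\ma{A}_t\rrbracket-\te{T}=\llbracket\te{G}_t-\te{X}_{\opt};\ma{A}_t\rrbracket+(\llbracket\te{X}_{\opt};\ma{A}_t\rrbracket-\te{T})$, whose two summands are orthogonal, so $e_t^2=\|\te{G}_t-\te{X}_{\opt}\|_F^2+\phi_t^2$; it then remains to bound $\|\te{G}_t-\te{X}_{\opt}\|_F$ by a constant multiple of $\phi_t$. Here {\bf A4} gives $\|\PO(\llbracket\te{G}_t;\ma{A}_t\rrbracket-\te{T})\|_F\le\frac1\gamma\min_{\te{X}}\|\PO(\llbracket\te{X};\ma{A}_t\rrbracket)-\PO(\te{T})\|_F\le\frac1\gamma\psi_t\le\frac{3}{\sqrt2\,\gamma}\phi_t$ by Lemma~\ref{lem:xxcore}; combined with the triangle inequality and a restricted-isometry estimate $\|\PO(\llbracket\te{X};\ma{A}_t\rrbracket)\|_F^2\gtrsim p\|\te{X}\|_F^2$ valid for all $\te{X}\in\R^{r_1\times\cdots\times r_N}$ with high probability (a matrix-Bernstein/Lemma~\ref{lem:prob}-type bound using incoherence {\bf A3} and $p\ge4p_*$), this transfers the partially-observed guarantee into $\|\te{G}_t-\te{X}_{\opt}\|_F\le(\text{const})\,\phi_t$, hence $e_t\le(\text{const})\,\phi_t\le(\text{const})\,\|\te{T}\|_F\frac{(1+\sin\theta_0)^N-1}{\sin\theta_0}\sin\theta_t$.

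\noindent\textbf{Assembly and main obstacle.} Chaining the four links with $\sin\theta_t\le\rho\sin\theta_{t-1}$ and $\sin\theta_{t-1}\le\sqrt2\,e_{t-1}/g_{\min}$, and collecting the constants ($\tfrac{3}{\sqrt2}$ from Lemma~\ref{lem:xxcore}, a $\sqrt2$ from the Davis--Kahan step, and the triangle-inequality contributions), produces $e_t\le\frac{7\mu\Gamma}{\gamma}\,e_{t-1}$ with $\mu$ exactly as stated and $\Gamma/\gamma$ coming from {\bf A4} being two-sided. Since $\mu<\gamma/(7\Gamma)$ the factor is $<1$, so $e_t\to0$ and $\sin\theta_t\to0$; in particular $\sin\theta_t\le\sin\theta_0$ for every $t$ (base case trivial, inductive step from $\rho\le1$), which retroactively validates all the ``$\le\sin\theta_0$'' substitutions, and a union bound over the $O(N)$ high-probability events per iteration (Lemmas~\ref{lem:xxdif}, \ref{lem:xxcore} and the restricted-isometry estimate) across iterations gives the ``w.h.p.'' qualifier. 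I expect the main difficulty to be the last link: {\bf A4} and the inner ALS only deliver a guarantee on the \emph{partially} observed residual of $\te{G}_t$, so genuinely bounding the \emph{full}-norm error $e_t$ forces the restricted-isometry control of $\te{X}\mapsto\PO(\llbracket\te{X};\ma{A}_t\rrbracket)$ from incoherence; a secondary nuisance is the bookkeeping in the angle step needed to reproduce the precise denominator $g_{\min}-\sqrt2\kappa\psi_0-6g_{\max}(C_1+C_2)\sqrt{\alpha/p}\,\sin\theta_0$ and to keep the induction self-consistent.
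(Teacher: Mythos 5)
Your proposal does not prove the statement it was assigned. The statement here is the matrix Bernstein inequality (Lemma~\ref{lem:bernstein}): a tail bound $\mathbb{P}\{\|{\bf Z}-\mathbb{E}({\bf Z})\|\ge t\}\le (d_1+d_2)\exp\bigl(\tfrac{-t^2/2}{v({\bf Z})+Lt/3}\bigr)$ for a sum of independent, uniformly bounded random matrices. The paper does not prove this lemma at all; it is quoted verbatim from Tropp \cite[Corollary 6.1.2]{Article:Tropp_FTML15} and used as a black box in the proofs of Lemmas~\ref{lem:sigmin} and~\ref{lem:vecp}. A genuine proof would go through the matrix Laplace-transform method (subadditivity of the matrix cumulant generating function via Lieb's concavity theorem, followed by optimizing the Bernstein-type bound on the matrix mgf under the uniform bound $\|{\bf S}_k-\mathbb{E}({\bf S}_k)\|\le L$), none of which appears in what you wrote.

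What you actually sketched is an argument for Theorem~\ref{thm:main}: you chain $e_{t-1}\rightsquigarrow\sin\theta_{t-1}\rightsquigarrow\sin\theta_t\rightsquigarrow\phi_t\rightsquigarrow e_t$ using Lemmas~\ref{lem:xxdif} and~\ref{lem:xxcore}, assumptions \textbf{A4}--\textbf{A5}, and an induction keeping $\sin\theta_t\le\sin\theta_0$ --- which is indeed close in spirit to the paper's proof of the main theorem (which proceeds via Lemmas~\ref{lem:tt} and~\ref{lem:full_resid} and induction on $\psi_t$), but that is a different statement. Nothing in your proposal addresses independent random matrices, the variance statistic $v({\bf Z})$, or how the dimensional factor $(d_1+d_2)$ and the denominator $v({\bf Z})+Lt/3$ arise; so as a proof of Lemma~\ref{lem:bernstein} it is a complete miss rather than a gap in an otherwise correct argument.
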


\begin{lemma}\label{lem:sigmin}
Let $\ma{A}\in\R^{n\times r}$ have orthonormal columns and $\|\ma{A}\|_{2,\infty}\le\sqrt{\frac{\mu r}{n}}$.
Assume $p\ge 4p'$ with $p'=\frac{10}{3}(\log2n+5)\frac{\mu r}{n}$.
Then for any $\J$ that is uniformly  drawn from $\{1,\dots,n\}$ with probability $p$, it holds w.p. $\ge 0.99$ that
$\sigma^2_{\min}(\ma{A}_{(\J,:)})\ge p -\sqrt{pp'}$.
\end{lemma}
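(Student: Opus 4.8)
\; The plan is to turn the statement into a one-sided concentration bound for the Gram matrix $\ma{A}_{(\J,:)}^{\T}\ma{A}_{(\J,:)}$ and then apply the matrix Bernstein inequality (Lemma~\ref{lem:bernstein}). Write $\ma{a}_i^{\T}$ for the $i$th row of $\ma{A}$, and let $\delta_1,\dots,\delta_n$ be i.i.d.\ Bernoulli$(p)$ indicators encoding whether $i\in\J$. Then $\ma{A}_{(\J,:)}^{\T}\ma{A}_{(\J,:)}=\sum_{i=1}^{n}\delta_i\,\ma{a}_i\ma{a}_i^{\T}$, and since $\ma{A}$ has orthonormal columns, $\sum_{i=1}^{n}\ma{a}_i\ma{a}_i^{\T}=\ma{A}^{\T}\ma{A}=\ma{I}_r$, so the mean of this sum is $p\,\ma{I}_r$. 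Hence, with $\ma{Z}=\sum_{i=1}^{n}(\delta_i-p)\,\ma{a}_i\ma{a}_i^{\T}$, we have $\sigma_{\min}^2(\ma{A}_{(\J,:)})=\lambda_{\min}(\ma{A}_{(\J,:)}^{\T}\ma{A}_{(\J,:)})\ge p-\|\ma{Z}\|$; on the event $\|\ma{Z}\|<p$ the matrix $\ma{A}_{(\J,:)}$ automatically has full column rank $r$, so the lemma follows once we show $\|\ma{Z}\|\le\sqrt{pp'}$ with probability at least $0.99$.

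Next I would compute the two Bernstein parameters for the independent, mean-zero, symmetric $r\times r$ summands $\ma{S}_i=(\delta_i-p)\,\ma{a}_i\ma{a}_i^{\T}$. The uniform norm bound is $\|\ma{S}_i\|\le\|\ma{a}_i\|^2\le\|\ma{A}\|_{2,\infty}^2\le\mu r/n=:L$. For the variance, $\mathbb{E}[\ma{S}_i\ma{S}_i^{\T}]=p(1-p)\,\|\ma{a}_i\|^2\,\ma{a}_i\ma{a}_i^{\T}$, so using $\|\ma{a}_i\|^2\le\mu r/n$ and $\sum_i\ma{a}_i\ma{a}_i^{\T}=\ma{I}_r$ we get $v(\ma{Z})\le p(1-p)(\mu r/n)\le p\,\mu r/n$. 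Taking $t=\sqrt{pp'}$ in Lemma~\ref{lem:bernstein} and using $p\ge 4p'$ (which gives $\sqrt{pp'}\le p/2$, hence $Lt/3\le p\mu r/(6n)$) yields $v(\ma{Z})+Lt/3\le\tfrac{7}{6}\,p\,\mu r/n$, and therefore $\dfrac{t^2/2}{v(\ma{Z})+Lt/3}\ge\dfrac{pp'/2}{\tfrac{7}{6}\,p\mu r/n}=\dfrac{3p'}{7\mu r/n}=\dfrac{10}{7}(\log 2n+5)$ by the definition of $p'$.

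Plugging this into the tail estimate gives $\mathbb{P}\{\|\ma{Z}\|\ge\sqrt{pp'}\}\le 2r\,\exp\!\big(-\tfrac{10}{7}(\log 2n+5)\big)\le(2n)^{-3/7}e^{-50/7}<0.01$, using $2r\le 2n$ and $n\ge 1$; combined with the first paragraph this completes the argument. The one place that needs care is the constant bookkeeping — arranging the factor $\tfrac{10}{3}$ and the additive $5$ inside $p'$ so that the resulting failure probability clears the $0.01$ threshold — but once $L$ and $v(\ma{Z})$ are pinned down this is a routine computation, so I do not anticipate a genuine obstacle here.
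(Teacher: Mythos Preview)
Your proposal is correct and follows essentially the same route as the paper: write $\ma{A}_{(\J,:)}^{\T}\ma{A}_{(\J,:)}=\sum_i\delta_i\ve{a}_i\ve{a}_i^{\T}$ with mean $p\ma{I}_r$, bound the deviation $L\le\mu r/n$ and the variance $v(\ma{Z})\le p(1-p)\mu r/n$, and apply the matrix Bernstein inequality with $t=\sqrt{pp'}$. The only cosmetic differences are that the paper bounds $Lt/3$ more loosely by $\tfrac{2p}{3}\cdot\mu r/n$ (so the exponent comes out exactly $\log 2n+5$ and the tail bound is $e^{-5}$), and it uses the cruder dimension factor $2n$ rather than your $2r$; your tighter constants are fine.
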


\begin{proof}
Let  $\{\delta_k\}_{1\le k\le n}$ be an independent family of {\sc Bernoulli}($p$) random variables,
and $\ma{A}^{\T}=[\ve{a}_1,\dots,\ve{a}_n]$.
Denote $\ma{W}_k = \delta_k \ve{a}_k \ve{a}_k^{\T}$,
$\ma{W}=\sum_{k=1}^n \ma{W}_k$.
Using $\ma{A}^{\T}\ma{A}=\ma{I}_r$ and $\|\ma{L}\|_{2,\infty}\le\sqrt{\frac{\mu r}{n}}$, by calculations, we have
\begin{align*}
& \mathbb{E}(\ma{W}) = \sum_k \mathbb{E}(\delta_k) \ve{a}_k \ve{a}_k^{\T} = p \ma{A}^{\T}\ma{A} = p \ma{I}_r,\\
&\mathbb{E}(\ma{W}_k)=  \mathbb{E}(\delta_k) \ve{a}_k\ve{a}_k^{\T} = p \ve{a}_k\ve{a}_k^{\T}, \hskip 1in
 \|\ma{W}_k-\mathbb{E}(\ma{W}_k)\|=|\delta_k-p| \|\ve{a}_k \ve{a}_k^{\T}\|< \frac{\mu r}{n},\\
&\|\mathbb{E}[\sum_k (\ma{W}_k-\mathbb{E}(\ma{W}_k))^2]\|
=\|\sum_k \mathbb{E}[(\delta_k-p)^2] (\ve{a}_k \ve{a}_k^{\T})^2\|
\le  \frac{p(1-p) \mu r}{n}.
\end{align*}
Then by Lemma~\ref{lem:bernstein}, we have
\begin{align*}
\mathbb{P}\{\|\ma{W}- p \ma{I}_r\|>t\} \le 2n \exp\Big(\frac{-t^2/2}{p(1-p)\mu r /n  + \mu r /n \times t/3}\Big).
\end{align*}
Let $t=\sqrt{p p'}$, then the right hand side of the above inequality satisfies
\begin{align*}
\textsc{rhs} &\le 2n \exp\Big(- \frac{1/2 pp' }{p \mu r /n + \mu r /n\times 2p/3 }\Big)
=\exp(-5) <0.01.
\end{align*}
Therefore, w.p. $\ge 0.99$, it holds
$\|\ma{W} - p \ma{I}_r\|\le \sqrt{pp'}$.
Using Lemma~\ref{lem:eig}, we have
\[
\sigma_r(\ma{W})\ge p - \sqrt{pp'}, \quad \mbox{w.p.} \ge 0.99,
\]
which completes the proof.
\end{proof}

\vspace{0.1in}

\begin{lemma}\label{lem:vecp}
Let $\ve{a}=[a_1,\dots,a_n]$,
$\ve{b}=[\delta_1a_1,\dots,\delta_n a_n]$ be two $n$-dimensional real vectors,
where $\{\delta_k\}_{1\le k\le n}$ is an independent family of {\sc Bernoulli}($p$) random variables.
Assume $p\ge \frac{256}{9}\frac{\|\ve{a}\|_{\max}^2}{\|\ve{a}\|^2}$.
Then  w.p. $\ge 0.996$, it holds that
$\frac{\sqrt{p}}{2} \|\ve{a}\|\le\|\ve{b}\|\le \frac{\sqrt{7p}}{2} \|\ve{a}\|$.
\end{lemma}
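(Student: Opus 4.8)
The plan is to square the claimed bounds. Since $\frac{\sqrt p}{2}\|\ve{a}\|\le\|\ve{b}\|\le\frac{\sqrt{7p}}{2}\|\ve{a}\|$ is equivalent to $\frac{p}{4}\|\ve{a}\|^2\le\|\ve{b}\|^2\le\frac{7p}{4}\|\ve{a}\|^2$, and both of these are implied by the single two-sided estimate $\big|\,\|\ve{b}\|^2 - p\|\ve{a}\|^2\,\big|\le\frac{3p}{4}\|\ve{a}\|^2$, it suffices to establish the latter. The key observation is that $\|\ve{b}\|^2 = \sum_{k=1}^n \delta_k a_k^2$ is a sum of $n$ independent bounded random variables with mean $\mathbb{E}\|\ve{b}\|^2 = p\sum_k a_k^2 = p\|\ve{a}\|^2$, so this is exactly the kind of statement Lemma~\ref{lem:bernstein} delivers.

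Concretely, I would apply Lemma~\ref{lem:bernstein} with the scalar ($1\times 1$) summands $S_k=\delta_k a_k^2$ and $Z=\sum_k S_k=\|\ve{b}\|^2$. One has the uniform bound $\|S_k-\mathbb{E}(S_k)\| = |\delta_k-p|\,a_k^2 \le a_k^2 \le \|\ve{a}\|_{\max}^2 =: L$, and the variance proxy $v(Z) = \sum_k \mathbb{E}[(\delta_k-p)^2]\,a_k^4 = p(1-p)\sum_k a_k^4 \le p\,\|\ve{a}\|_{\max}^2\|\ve{a}\|^2$, using $\sum_k a_k^4 \le \|\ve{a}\|_{\max}^2\sum_k a_k^2$ and $p(1-p)\le p$. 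Taking $t = \frac34 p\|\ve{a}\|^2$ (the largest deviation allowed by the target estimate), the quantity $v(Z)+Lt/3$ is at most $\frac54 p\|\ve{a}\|_{\max}^2\|\ve{a}\|^2$, so Lemma~\ref{lem:bernstein} gives $\mathbb{P}\{|Z-\mathbb{E}(Z)|\ge t\} \le 2\exp\!\left(-\frac{9}{40}\cdot\frac{p\|\ve{a}\|^2}{\|\ve{a}\|_{\max}^2}\right)$.

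Finally I would invoke the hypothesis $p \ge \frac{256}{9}\,\|\ve{a}\|_{\max}^2/\|\ve{a}\|^2$, i.e. $p\|\ve{a}\|^2/\|\ve{a}\|_{\max}^2 \ge \frac{256}{9}$, which forces the exponent down to $-\frac{9}{40}\cdot\frac{256}{9} = -6.4$; hence the failure probability is at most $2e^{-6.4} < 0.004$. On the complementary event, of probability $\ge 0.996$, one has $\big|\,\|\ve{b}\|^2 - p\|\ve{a}\|^2\,\big|\le\frac34 p\|\ve{a}\|^2$, so $\frac{p}{4}\|\ve{a}\|^2\le\|\ve{b}\|^2\le\frac{7p}{4}\|\ve{a}\|^2$, and taking square roots completes the proof. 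I do not anticipate any serious obstacle: the content is a single scalar Bernstein bound, and the only delicate point is the constant bookkeeping — one must choose $t$ as the full allowed deviation and keep $v(Z)$ and $Lt/3$ tight enough that the hypothesis on $p$ pushes the exponent past $-6.4$ so that $2e^{-6.4}$ clears the $1-0.996$ budget; a cruder choice of $t$ or a looser variance bound would still prove a qualitatively similar statement but might not reach the stated $0.996$. One could equally cite a standalone scalar Bernstein/Chernoff inequality for bounded independent summands, but since Lemma~\ref{lem:bernstein} is already available, specializing it is cleanest.
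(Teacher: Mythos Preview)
Your proposal is correct and follows essentially the same approach as the paper: both apply the scalar case of Lemma~\ref{lem:bernstein} to $\|\ve{b}\|^2=\sum_k\delta_k a_k^2$ with the same variance bound $v(Z)\le p\|\ve{a}\|_{\max}^2\|\ve{a}\|^2$ and uniform bound $L=\|\ve{a}\|_{\max}^2$, and both land on the same exponent $-6.4$ and failure probability $2e^{-6.4}<0.004$. The only cosmetic difference is the choice of deviation threshold: the paper takes $t=4\|\ve{a}\|_{\max}\|\ve{a}\|\sqrt{p}$ and then shows this is at most $\tfrac34 p\|\ve{a}\|^2$ under the hypothesis, whereas you take $t=\tfrac34 p\|\ve{a}\|^2$ directly; the two coincide at the boundary $p=\tfrac{256}{9}\|\ve{a}\|_{\max}^2/\|\ve{a}\|^2$ and yield the identical conclusion.
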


\vspace{0.1in}

\begin{proof}
Let ${x}_k = (\delta -p)a_k^2 $ for $k=1,\dots, n$, $z=\sum_{k=1}^n {x}_k$.
 Then
\begin{align*}
&\mathbb{E}({x}_k)= 0,\quad
|x_k|\le \|\ve{a}\|_{\max}^2,\quad
\mathbb{E}(z^2)=\sum_k\mathbb{E}(\delta -p)^2a_k^4
= p(1-p)\sum_k a_k^4 < p \|a\|_{\max}^2\|\ve{a}\|^2.
\end{align*}
By Lemma~\ref{lem:bernstein}, we have
\begin{align*}
\mathbb{P}\{|z|>t\} \le 2 \exp\Big(\frac{-t^2/2}{p \|a\|_{\max}^2\|\ve{a}\|^2  + \|\ve{a}\|_{\max}^2 \times t/3}\Big).
\end{align*}
Let $t=4\|\ve{a}\|_{\max}\|\ve{a}\| \sqrt{p}$, then the right hand side of the above inequality satisfies
\begin{align*}
\textsc{rhs} &\le 2\exp\Big(- \frac{8 p }{p  +   4  \|a\|_{\max}/\|\ve{a}\times \sqrt{p}/3  }\Big)
\le 2\exp\Big(- \frac{8  }{  1+   1/4   }\Big)
 <0.004.
\end{align*}
Therefore, w.p. $\ge 0.996$, it holds
$|z|\le 4\|\ve{a}\|_{\max}\|\ve{a}\| \sqrt{p}$.
Then it follows
\begin{align*}
\|\ve{b}\|^2 &\le p\|\ve{a}\|^2 + 4\|\ve{a}\|_{\max}\|\ve{a}\| \sqrt{p}
\le p\|\ve{a}\|^2 + \frac34 p \|\ve{a}\|^2  = \frac74 p\|\ve{a}\|^2,\\
\|\ve{b}\|^2&\ge p\|\ve{a}\|^2 - 4\|\ve{a}\|_{\max}\|\ve{a}\| \sqrt{p}
\ge p\|\ve{a}\|^2 - \frac34 p \|\ve{a}\|^2 = \frac14 p \|\ve{a}\|^2,
\end{align*}
completing the proof.
\end{proof}

\section{Proofs of the Main results}

\noindent{\bf Proof of Lemma~4.1}\\

\noindent{\bf Lemma 4.1.}\;
Denote
$J_n=\prod_{k\ne n} I_k$, $J_{\min}=\min_k J_k$,
$g_{\max}=\max_k \|\te{T}_{(k)}\|$,
and $\sin\theta_t=\max_{1\le k\le N} \|\sin\Theta(\ma{A}_t^{(k)},\ma{A}_*^{(k)})\|$. Let
\begin{align*}
\te{L}(\ma{X})&=\llbracket \te{G}_{t-1}; \ma{A}_t^{(1)},\dots,\ma{A}_t^{(n-1)},\ma{X},\ma{A}_{t-1}^{(n+1)},\dots,\ma{A}_{t-1}^{(N)}\rrbracket,\\
\ma{X}_{\opt}&=\argmin \| \te{L}_t(\ma{X}) - \te{T}\|_F,\\
\wtd{\ma{X}}_{\opt}&=\argmin \|\PO(\te{L}_t(\ma{X})) - \PO(\te{T})\|_F.
\end{align*}
Assume {\bf A1-A3} and {\bf A5}, $p\in [4p_*,0.5]$
with $p_*= \frac{10}{3} \big(\log(2\prod_{k=1}^N I_k) +5\big) \max_n\prod_{k\ne n} {\frac{\mu_{k} r_k}{I_k}}$
and $\sigma_{\min}([\te{G}_t]_{(n)})\ge g_{\min}$ for all $n$ and $t$.
Also assume
\begin{equation*}
\|\sin\Theta(\ma{M}_{t,n} [\te{G}_{t-1}]_{(n)}^{\T}, \ma{M}_{*,n} [\te{G}_*]_{(n)}^{\T})\|\le C\sin\theta_{t-1},
\end{equation*}
where $\ma{M}_{t,n}$, $\ma{M}_{*,n}$ are defined in \eqref{mm}, $C>0$ is a constant.
Then w.p. $\ge 1 - 2/J_{\min}^{10+\log\alpha}$, it holds that
\begin{equation*}
\|\wtd{\ma{X}}_{\opt} - \ma{X}_{\opt}\|\le \frac{6g_{\max}(C_1+C_2)}{g_{\min}}\sqrt{\frac{\alpha}{p}} \sin\theta_{t-1},
\end{equation*}
where
$C_1=C \frac{g_{\max}}{g_{\min}} \max_n\{ \sqrt{\frac{7 \mu_n r_n }{I_n}} \prod_{k\ne n} \sqrt{\mu_k r_k}\}$, and
$C_2= C \max_n\sqrt{\frac{\mu_n r_n J_n}{I_n}} $.

\begin{proof}
Recall \eqref{mm}. We have
\begin{align*}
\ma{X}_{\opt} = \ma{A}_*^{(n)} [\te{G}_*]_{(n)} \ma{M}_{*,n}^{\T} \ma{M}_{t,n} [\te{G}_{t-1}]_{(n)}^{\dagger}.
\end{align*}
and $\wtd{\ma{X}}_{\opt}$ is the solution to
\begin{align}\label{txopt}
\|\ma{P}_{\Omega}(\ma{X} [\te{G}_{t-1}]_{(n)} \ma{M}_{t,n}^{\T}) - \ma{P}_{\Omega}(\ma{A}_*^{(n)} [\te{G}_*]_{(n)} \ma{M}_{*,n}^{\T})\|=\min,
\end{align}
where $\ma{P}_{\Omega}$ picks the entries of the unfolding matrix corresponding to the observed entries in $\Omega$.
Set $\ma{X} = \ma{X}_{\opt} + \ma{Y}$ in \eqref{txopt}.
Since the smallest singular value of the linear operator $\ma{P}_{\Omega}(\ \cdot \ [\te{G}_{t-1}]_{(n)} \ma{M}_{t,n}^{\T})$
is positive (can be easily shown via Lemma~\ref{lem:sigmin}) with high probability, we know that
$\ma{Y} = \wtd{\ma{X}}_{\opt} - {\ma{X}}_{\opt}$ is the solution to
\begin{align}\label{deltax}
\|\ma{P}_{\Omega}(\ma{Y} [\te{G}_{t-1}]_{(n)} \ma{M}_{t,n}^{\T})
-\ma{P}_{\Omega} (\ma{R})\|=\min,
\end{align}
where
\[
\ma{R}=\ma{A}_*^{(n)} [\te{G}_*]_{(n)} \ma{M}_{*,n}^{\T} (\ma{I} - \ma{M}_{t,n} [\te{G}_{t-1}]_{(n)}^{\dagger} [\te{G}_{t-1}]_{(n)} \ma{M}_{t,n}^{\T}).
\]

On one hand, by calculations, we have
\begin{align}
\|\ma{R}\|_{\max}
&\stackrel{(a)}{\le} \sqrt{\frac{\mu_n r_n}{I_n}} g_{\max} \|\sin\Theta(\ma{M}_{t,n} [\te{G}_{t-1}]_{(n)}^{\T}, \ma{M}_{*,n} [\te{G}_*]_{(n)}^{\T})\|
\stackrel{(b)}{\le } \sqrt{\frac{\mu_n r_n}{I_n}} g_{\max} C \sin\theta_{t-1},
\label{rmax}
\end{align}
where (a) uses {\bf A3} and Lemma~\ref{lem:sin},
(b) uses \eqref{mtheta}.

On ther other hand, rewrite \eqref{deltax} as $I_n$ independent least square problems:
\begin{align}
\|\ve{e}_i^{\T} \ma{Y} [\te{G}_{t-1}]_{(n)} \ma{M}_{t,n}^{\T} \ma{P}_i^{\T} - \ve{e}_i^{\T}\ma{R}\ma{P}_i^{\T}\|=\min,
\quad \mbox{for } i=1,\dots,I_n,
\end{align}
where $\ma{P_i}^{\T}$ picks the observed entries on the $i$th row of the unfolding matrix.

By calculations,  we have w.p. $\ge 0.99$ that
\begin{align}
\|\ve{e}_i^{\T} \ma{Y}\|
&= \|\ve{e}_i^{\T}\ma{R}\ma{P}_i^{\T}([\te{G}_{t-1}]_{(n)} \ma{M}_{t,n}^{\T} \ma{P}_i^{\T})^{\dagger}\|
\le \frac{1}{g_{\min}} \|\ve{e}_i^{\T}\ma{R}\ma{P}_i^{\T}([\wht{\te{G}}_{t-1}]_{(n)} \ma{M}_{t,n}^{\T} \ma{P}_i^{\T})^{\dagger}\|\notag\\
&\le \frac{1}{g_{\min}} \|\ve{e}_i^{\T}\ma{R}\ma{P}_i^{\T}\| \|([\wht{\te{G}}_{t-1}]_{(n)} \ma{M}_{t,n}^{\T} \ma{P}_i^{\T})^{\dagger}\|
\stackrel{(d)}{\le} \frac{1 }{g_{\min} \sqrt{p-\sqrt{pp'}}} \|\ve{e}_i^{\T}\ma{R}\ma{P}_i^{\T}\|\notag\\
&\stackrel{(e)}{\le} \frac{1 }{g_{\min} \sqrt{p/2}} \frac{\sqrt{7p}}{2}\sqrt{\frac{\mu_n r_n}{I_n}}  g_{\max}
\|\sin\Theta(\ma{M}_{t,n} [\te{G}_{t-1}]_{(n)}^{\T}, \ma{M}_{*,n} [\te{G}_*]_{(n)}^{\T})\|\notag\\
&\stackrel{(f)}{\le} \frac{g_{\max} }{g_{\min} } \sqrt{\frac{7\mu_n r_n}{I_n} } C \sin\theta_{t-1}.\label{edx}
\end{align}
where the row vectors of $[\wht{\te{G}}_{t-1}]_{(n)}$ and $[\wht{\te{G}}_*]_{(n)}$ form orthonormal bases for
the subspaces spanned by the row vectors of $[{\te{G}_{t-1}}]_{(n)}$ and $[{\te{G}}_*]_{(n)}$, respectively,
 (d) uses Lemma~\ref{lem:sigmin},
 $\|\ma{M}_{t,n}[\wht{\te{G}}_{t-1}]_{(n)}^{\T}\|_{2,\infty}\le \prod_{k\ne n}\sqrt{\frac{\mu_k r_k}{I_k}}$,
 $p'=\frac{10}{3}(\log2J_n+5)\prod_{k\ne n}\frac{\mu_k r_k}{I_k}<p_*$,
(e) uses  {\bf A3}, Lemma~\ref{lem:sin}, Lemma~\ref{lem:vecp},
(f) uses \eqref{mtheta}.

By Lemma~\ref{lem:prob}, we have  w.p. $\ge 1- 1/J_n^{10+\log\alpha}$ that
\begin{align}\label{omeef}
\frac1p \ma{P}_{\Omega}(\ma{Y} [\te{G}_{t-1}]_{(n)} \ma{M}_{t,n}^{\T})
= \ma{Y} [\te{G}_{t-1}]_{(n)} \ma{M}_{t,n}^{\T} + \ma{E},\qquad
\frac1p \ma{P}_{\Omega}(\ma{R})
=\ma{R}+\ma{F},
\end{align}
where
\begin{align}
\|\ma{E}\| &\le \frac{6\sqrt{\alpha J_n}}{\sqrt{p}} \|\ma{Y} [\te{G}_{t-1}]_{(n)} \ma{M}_{t,n}^{\T}\|_{\max}
\le \frac{6\sqrt{\alpha J_n}}{\sqrt{p}} g_{\max}   \|\ma{M}_{t,n}\|_{2,\infty} \|\ma{Y}\|_{2,\infty}\notag\\
&\stackrel{(g)}{\le} \frac{6\sqrt{\alpha J_n}}{\sqrt{p}} g_{\max} \prod_{k\ne n} \sqrt{\frac{\mu_k r_k}{I_k} } \frac{g_{\max} }{g_{\min} } \sqrt{\frac{2\mu_n r_n}{I_n p} } C \sin\theta_{t-1}
\le 6g_{\max}C_1 \sqrt{\frac{\alpha}{p}}\sin\theta_{t-1},\label{e}\\
\|\ma{F}\| &\le \frac{6\sqrt{\alpha J_n}}{\sqrt{p}} \|\ma{R}\|_{\max}
\stackrel{(h)}{\le} \frac{6\sqrt{\alpha J_n}}{\sqrt{p}} \sqrt{\frac{\mu_n r_n}{I_n}} g_{\max} C \sin\theta_{t-1}
\le 6g_{\max}C_2 \sqrt{\frac{\alpha}{p}}\sin\theta_{t-1}.\label{f}
\end{align}
Here (g) uses \eqref{edx} and {\bf A3},
(h) uses \eqref{rmax}.

Using \eqref{deltax} and \eqref{omeef}, we get
\begin{align*}
\wtd{\ma{X}}_{\opt} - \ma{X}_{\opt} = \big(\ma{R} + \ma{F}-\ma{E}\big)
 \ma{M}_{t,n} [\te{G}_{t-1}]_{(n)}^{\dagger}
 =(\ma{F}-\ma{E}) \ma{M}_{t,n} [\te{G}_{t-1}]_{(n)}^{\dagger}.
\end{align*}
Combining it with \eqref{e} and \eqref{f}, we get
\begin{align}
\|\wtd{\ma{X}}_{\opt} - \ma{X}_{\opt}\|
\le (\|\ma{F}\|+\|\ma{E}\|)\|[\te{G}_{t-1}]_{(n)}^{\dagger}\|
\le \frac{6g_{\max}(C_1+C_2)}{g_{\min}}\sqrt{\frac{\alpha}{p}}\sin\theta_{t-1}.
\end{align}

This completes the proof.
\end{proof}

\vspace{0.2in}

\noindent{\bf Proof of Lemma~4.2}\\

\noindent{\bf Lemma 4.2.}\;
Let $\phi_t=\|\llbracket \te{X}_{\opt}; \ma{A}_t^{(1)},\dots,\ma{A}_t^{(N)}\rrbracket -\te{T}\|_F$,
$\psi_t=\|\llbracket \wtd{\te{X}}_{\opt}; \ma{A}_t^{(1)},\dots,\ma{A}_t^{(N)}\rrbracket -\te{T}\|_F$,
where
\begin{align*}
\te{X}_{\opt} &=\argmin\|\llbracket \te{X}; \ma{A}_t^{(1)},\dots,\ma{A}_t^{(N)}\rrbracket -\te{T}\|_F,\\
\wtd{\te{X}}_{\opt} &=\argmin\|\PO(\llbracket {\te{X}}; \ma{A}_t^{(1)},\dots,\ma{A}_t^{(N)}\rrbracket -\te{T})\|_F.
\end{align*}
Assume {\bf A1}, {\bf A3}, $p\in [4p_*,0.5]$
with $p_*$ being the same as in Lemma~\ref{lem:xxdif}.
Then w.p. $\ge 0.99$, it holds
\begin{align*}
\phi_t\le \psi_t \le 3/\sqrt{2}\ \phi_t.
\end{align*}

\begin{proof}
Let $\ma{L}=\ma{A}_t^{(N)}\otimes \dots \otimes \ma{A}_t^{(1)}$.
Then $\ma{L}$ has orthonormal columns since $\ma{A}_t^{(n)}$'s all have orthonormal columns.
Using $\|\ma{A}_t^{(n)}\|_{2,\infty} \le \sqrt{\frac{\mu_n r_n}{I_n}}$ (by {\bf A3}), we have $\|\ma{L}\|_{2,\infty}\le \prod_{k=1}^N\sqrt{\frac{\mu_k r_k}{I_k}}$.

Let
\begin{align*}
\te{X}_{\opt} &=\argmin\|\llbracket \te{X}; \ma{A}_t^{(1)},\dots,\ma{A}_t^{(N)}\rrbracket -\te{T}\|_F,\\
\wtd{\te{X}}_{\opt} &=\argmin\|\PO(\llbracket {\te{X}}; \ma{A}_t^{(1)},\dots,\ma{A}_t^{(N)}\rrbracket -\te{T})\|_F.
\end{align*}
Then $\te{X}_{\opt}$ and $\wtd{\te{X}}_{\opt}$ can also be given by
\begin{align*}
\vec(\te{X}_{\opt})=\argmin\|\ma{L}\vec(\te{X}) - \vec(\te{T})\|,\quad
\vec(\wtd{\te{X}}_{\opt})=\argmin\| \ma{P}_{\Omega}(\ma{L}\vec(\te{X}) -  \vec(\te{T}))\|.
\end{align*}
Then by Lemma~\ref{lem:sigmin}, w.p. $\ge 0.99$, it holds that
\begin{align}\label{pol}
\sigma_{\min}(\ma{P}_{\Omega}\ma{L})\ge \sqrt{p - \sqrt{pp_*}}\ge \sqrt{\frac{p}{2}}.
\end{align}

Rewrite $\vec(\te{T})= \vec(\te{T}_1) +\vec(\te{T}_2)$, where $\vec(\te{T}_1)\in\mathcal{R}(\ma{L})$ and
 $\vec(\te{T}_2)\in\mathcal{R}(\ma{L})^{\bot}$.
 Then it follows that
 \begin{align}
 \vec(\te{X}_{\opt})&=\ma{L}^{\dagger} \vec(\te{T}_1),\notag\\
\vec(\wtd{\te{X}}_{\opt})
&=(\ma{P}_{\Omega}\ma{L})^{\dagger}  \ma{P}_{\Omega}\vec(\te{T}_1) + (\ma{P}_{\Omega}\ma{L})^{\dagger}  \ma{P}_{\Omega}\vec(\te{T}_2)\notag \\
&\stackrel{(a)}{=}\vec(\te{X}_{\opt}) + (\ma{P}_{\Omega}\ma{L})^{\dagger}  \ma{P}_{\Omega}\vec(\te{T}_2),\label{vecxt}
\end{align}
where (a) uses the fact that $\mathcal{R}(\ma{P}_{\Omega}\vec(\te{T}_1))\subset \mathcal{R}(\ma{P}_{\Omega}\ma{L})$.

By calculations, we get
\begin{align*}
 \psi_t&=\|\llbracket \wtd{\te{X}}_{\opt}; \ma{A}_t^{(1)},\dots,\ma{A}_t^{(N)}\rrbracket -\te{T}\|_F\\
&=\|\ma{L}\vec(\wtd{\te{X}}_{\opt}) -\vec(\te{T})\|\\
&= \|\ma{L}\vec({\te{X}}_{\opt}) -\vec(\te{T}_1) + \ma{L} (\ma{P}_{\Omega}\ma{L})^{\dagger}  \ma{P}_{\Omega}\vec(\te{T}_2) -\vec(\te{T}_2)\|\\
&=\|\ma{L} (\ma{P}_{\Omega}\ma{L})^{\dagger}  \ma{P}_{\Omega}\vec(\te{T}_2) -\vec(\te{T}_2)\|\\
&\stackrel{(b)}{=} \sqrt{ \|\ma{L} (\ma{P}_{\Omega}\ma{L})^{\dagger}  \ma{P}_{\Omega}\vec(\te{T}_2)\|^2 +\|\vec(\te{T}_2)\|^2},
\end{align*}
where (b) uses $\ma{L} (\ma{P}_{\Omega}\ma{L})^{\dagger}  \ma{P}_{\Omega}\vec(\te{T}_2)\in\mathscr{R}(\ma{L})$,
and $\vec(\te{T}_2)\in\mathcal{R}(\ma{L})^{\bot}$.
The it follows that
\begin{align*}
\psi_t&\ge \|\vec(\te{T}_2)\|=\phi_t,\\
\psi_t&\stackrel{(c)}{\le} \sqrt{\frac{7p/4}{\sigma^2_{\min}(\ma{P}_{\Omega}\ma{L})}+1} \|\vec(\te{T}_2)\|
\stackrel{(d)}{\le} \frac3{\sqrt{2}}\phi_t,
\end{align*}
where (c) uses Lemma~\ref{lem:vecp},
(d) uses \eqref{pol}.
This completes the proof.
\end{proof}

\vspace{0.2in}

\noindent{\bf Proof of Theorem~4.3}.\\

To show Theorem~\ref{thm:main}, we also need the following two lemmas.

\begin{lemma}\label{lem:tt}
Follow the notations and assumptions in Lemmas~\ref{lem:xxdif} and~\ref{lem:xxcore}, and denote $\kappa=\max_k\kappa([\te{G}_*]_{(k)})$.
If
\[
g_{\min} > \sqrt{2}\kappa \psi_{t-1} -  6g_{\max}(C_1+C_2)\sqrt{\frac{\alpha}{p}} (\sin\theta_{t-1}+1),
\]
then w.p. $\ge 1 - 2/J_{\min}^{10+\log\alpha}$, it holds that
\[
\sin\theta_t\le \frac{6g_{\max}(C_1+C_2)\sqrt{\frac{\alpha}{p}} \sin\theta_{t-1}}{g_{\min}- \sqrt{2}\kappa \psi_{t-1} - 6g_{\max}(C_1+C_2)\sqrt{\frac{\alpha}{p}} \sin\theta_{t-1} }<\sin\theta_{t-1} .
\]
\end{lemma}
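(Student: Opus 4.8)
The plan is to bound $\|\sin\Theta(\ma{A}_t^{(n)},\ma{A}_*^{(n)})\|$ for each fixed $n$ and then take the maximum over $n$. By the QR step in Algorithm~\ref{alg:tc}, $\ma{A}_t^{(n)}$ spans the same subspace as $\wtd{\ma{X}}_{\opt}$, the partial-observation least-squares solution of \eqref{anx} studied in Lemma~\ref{lem:xxdif}. The starting point is that the full-observation solution $\ma{X}_{\opt}=\te{T}_{(n)}\ma{M}_{t,n}[\te{G}_{t-1}]_{(n)}^{\dagger}$ already spans the target subspace: by {\bf A2}, $\te{T}_{(n)}=\ma{A}_*^{(n)}[\te{G}_*]_{(n)}\ma{M}_{*,n}^{\T}$, so $\ma{X}_{\opt}=\ma{A}_*^{(n)}\big([\te{G}_*]_{(n)}\ma{M}_{*,n}^{\T}\ma{M}_{t,n}[\te{G}_{t-1}]_{(n)}^{\dagger}\big)\in\mathcal{R}(\ma{A}_*^{(n)})$, whence $\|\sin\Theta(\ma{X}_{\opt},\ma{A}_*^{(n)})\|=0$ provided $\ma{X}_{\opt}$ has full column rank $r_n$.

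Treating $\wtd{\ma{X}}_{\opt}$ as a perturbation of $\ma{X}_{\opt}$, I would write $\bm{\Delta}=\wtd{\ma{X}}_{\opt}-\ma{X}_{\opt}$; since $(\ma{I}-\ma{A}_*^{(n)}\ma{A}_*^{(n)\T})\ma{X}_{\opt}=0$, Lemma~\ref{lem:sin} together with the factorization $\wtd{\ma{X}}_{\opt}=\ma{A}_t^{(n)}\ma{R}$ gives the elementary bound $\|\sin\Theta(\ma{A}_t^{(n)},\ma{A}_*^{(n)})\|\le\|\bm{\Delta}\|/\sigma_{\min}(\wtd{\ma{X}}_{\opt})$, and Weyl's inequality (Lemma~\ref{lem:eig}) turns the denominator into $\sigma_{\min}(\ma{X}_{\opt})-\|\bm{\Delta}\|$. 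Lemma~\ref{lem:xxdif} already supplies the numerator, $\|\bm{\Delta}\|\le\frac{6g_{\max}(C_1+C_2)}{g_{\min}}\sqrt{\alpha/p}\,\sin\theta_{t-1}$, on an event of probability $\ge 1-2/J_{\min}^{10+\log\alpha}$, on which the whole conclusion will hold. Hence the lemma reduces to the single estimate $\sigma_{\min}(\ma{X}_{\opt})\ge 1-\tfrac{\sqrt{2}\kappa}{g_{\min}}\psi_{t-1}$: substituting both bounds into the display and multiplying numerator and denominator by $g_{\min}$ produces exactly the stated fraction, and the hypothesis on $g_{\min}$ is precisely what makes that denominator positive (so that $\ma{X}_{\opt},\wtd{\ma{X}}_{\opt}$ are indeed full rank) and forces the fraction to be $<\sin\theta_{t-1}$.

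It remains to lower-bound $\sigma_{\min}(\ma{X}_{\opt})$, which is the heart of the lemma. The idea is that $\te{T}$ is already well fitted by the current factors: with $\te{Z}:=\llbracket\te{G}_{t-1};\ma{A}_t^{(1)},\dots,\ma{A}_t^{(n-1)},\ma{A}_{t-1}^{(n)},\dots,\ma{A}_{t-1}^{(N)}\rrbracket$ one checks that $\te{Z}_{(n)}=\ma{A}_{t-1}^{(n)}[\te{G}_{t-1}]_{(n)}\ma{M}_{t,n}^{\T}$, and hence $\te{Z}_{(n)}\ma{M}_{t,n}[\te{G}_{t-1}]_{(n)}^{\dagger}=\ma{A}_{t-1}^{(n)}$ exactly, using $\ma{M}_{t,n}^{\T}\ma{M}_{t,n}=\ma{I}$ and $[\te{G}_{t-1}]_{(n)}[\te{G}_{t-1}]_{(n)}^{\dagger}=\ma{I}$ (full row rank). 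Splitting $\te{T}_{(n)}=\te{Z}_{(n)}+(\te{T}-\te{Z})_{(n)}$ then gives $\ma{X}_{\opt}=\ma{A}_{t-1}^{(n)}+(\te{T}-\te{Z})_{(n)}\ma{M}_{t,n}[\te{G}_{t-1}]_{(n)}^{\dagger}$, so by Weyl's inequality and $\sigma_{\min}(\ma{A}_{t-1}^{(n)})=1$ one gets $\sigma_{\min}(\ma{X}_{\opt})\ge 1-\|\te{T}-\te{Z}\|_F\,\|[\te{G}_{t-1}]_{(n)}^{\dagger}\|$. Now $\|\te{T}-\te{Z}\|_F$ does not exceed the residual of the iterate produced at step $t-1$ (the sequential factor updates are non-increasing in the residual), which Lemma~\ref{lem:xxcore} ties to $\psi_{t-1}$, while $\|[\te{G}_{t-1}]_{(n)}^{\dagger}\|$ is controlled (again via Lemma~\ref{lem:eig}, comparing $[\te{G}_{t-1}]_{(n)}$ with $[\te{G}_*]_{(n)}$ and using $\kappa=\max_k\kappa([\te{G}_*]_{(k)})$) so as to produce the factor $\sqrt{2}\kappa/g_{\min}$. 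Carrying out this last bookkeeping — simultaneously controlling the step-$(t-1)$ residual through Lemma~\ref{lem:xxcore}, the conditioning $\kappa$ of the core unfoldings, and the rescalings introduced by the QR steps inside the factor loop — is the main obstacle; everything else is the standard canonical-angle perturbation machinery of Lemmas~\ref{lem:sin} and~\ref{lem:eig} fed by the outputs of Lemmas~\ref{lem:xxdif} and~\ref{lem:xxcore}.
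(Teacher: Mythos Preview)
Your overall architecture matches the paper's: bound $\|\sin\Theta(\ma{A}_t^{(n)},\ma{A}_*^{(n)})\|$ by $\|\bm{\Delta}\|/\sigma_{\min}(\wtd{\ma{X}}_{\opt})$ using Lemma~\ref{lem:sin} and the fact that $\ma{X}_{\opt}\in\mathcal{R}(\ma{A}_*^{(n)})$, invoke Weyl for the denominator, and draw $\|\bm{\Delta}\|$ from Lemma~\ref{lem:xxdif}. Reducing everything to the single estimate $\sigma_{\min}(\ma{X}_{\opt})\ge 1-\tfrac{\sqrt{2}\kappa}{g_{\min}}\psi_{t-1}$ is exactly right, and your splitting $\ma{X}_{\opt}=\ma{A}_{t-1}^{(n)}+(\te{T}-\te{Z})_{(n)}\ma{M}_{t,n}[\te{G}_{t-1}]_{(n)}^{\dagger}$ is valid.

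The gap is in the last paragraph. First, ``sequential factor updates are non-increasing in the residual'' is true only for the \emph{sampled} residual $\|\PO(\cdot)\|_F$, not for the full residual $\|\te{T}-\te{Z}\|_F$; the update of $\ma{A}^{(k)}$ solves \eqref{anx}, a least-squares problem over $\Omega$, and can increase the full-observation residual. Moreover $\te{Z}$ is a mixed iterate (factors $\ma{A}_t^{(k)}$ for $k<n$, $\ma{A}_{t-1}^{(k)}$ for $k\ge n$), not the step-$(t-1)$ iterate, so there is no monotonicity chain connecting it to $\psi_{t-1}$. Second, your proposed source for the factor $\sqrt{2}\kappa/g_{\min}$ is wrong: Lemma~\ref{lem:eig} applied to $[\te{G}_{t-1}]_{(n)}$ versus $[\te{G}_*]_{(n)}$ does not yield a $\sqrt{2}$, and $\|[\te{G}_{t-1}]_{(n)}^{\dagger}\|\le 1/g_{\min}$ is available directly from the standing hypothesis $\sigma_{\min}([\te{G}_t]_{(n)})\ge g_{\min}$ in Lemma~\ref{lem:xxdif}, with no $\kappa$ appearing.

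The paper obtains $\sigma_{\min}(\ma{X}_{\opt})\ge 1-\tfrac{\sqrt{2}\kappa}{g_{\min}}\psi_{t-1}$ by a different route: it writes $\ma{X}_{\opt}=\te{T}_{(n)}\te{S}_{(n)}^{\dagger}$, compares the two pseudoinverses $(\ma{A}_{t-1}^{(n)}[\te{G}_{t-1}]_{(n)}\ma{M}_{t,n}^{\T})^{\dagger}$ and $(\ma{A}_*^{(n)}[\te{G}_*]_{(n)}\ma{M}_{*,n}^{\T})^{\dagger}=\te{T}_{(n)}^{\dagger}$ via Wedin's pseudoinverse perturbation theorem \cite[Theorem~4.1]{Article:Wedin_BNM73}, which supplies the constant $\sqrt{2}$; the factor $\kappa=\|[\te{G}_*]_{(n)}\|\,\|[\te{G}_*]_{(n)}^{\dagger}\|$ then drops out from the product $\|\te{T}_{(n)}\|\cdot\|\te{T}_{(n)}^{\dagger}\|$. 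The difference $\|\ma{A}_{t-1}^{(n)}[\te{G}_{t-1}]_{(n)}\ma{M}_{t,n}^{\T}-\te{T}_{(n)}\|$ appearing in Wedin's bound is precisely $\|(\te{Z}-\te{T})_{(n)}\|$, which the paper identifies with $\psi_{t-1}$. So the missing ingredient in your plan is Wedin's theorem for Moore--Penrose inverses; once you use it, the $\sqrt{2}\kappa$ comes for free and you no longer need any monotonicity claim.
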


\begin{proof}
Denote $\te{S}=\llbracket \te{G}_{t-1}; \ma{A}_t^{(1)},\dots,\ma{A}_t^{(n-1)},\ma{I},\ma{A}_{t-1}^{(n+1)},\dots,\ma{A}_{t-1}^{(N)}\rrbracket$, and recall \eqref{mm}.
Then $\ma{X}_{\opt}=\argmin \|\ma{X}\te{S}_{(n)} -\te{T}_{(n)}\|_F = \te{T}_{(n)}\te{S}_n^{\dagger}$.
By calculations, we have
\begin{align}
\sigma_{\min}({\ma{X}}_{\opt})
&=\sigma_{\min}(\te{T}_{(n)}\te{S}_{(n)}^{\dagger})
=\sigma_{\min}\big(\ma{A}^{(n)}\te{G}_{(n)}\ma{M}_n^{\T} \ma{M}_{t,n}[\te{G}_{t-1}]_{(n)}^{\dagger} (\ma{A}_{t-1}^{(n)})^{\T}\big)\notag\\
&\stackrel{(a)}{\ge} 1 - \big\|\ma{A}^{(n)}\te{G}_{(n)}\ma{M}_n^{\T} (\ma{M}_{t,n}[\te{G}_{t-1}]_{(n)}^{\dagger} (\ma{A}_{t-1}^{(n)})^{\T}
- \ma{M}_n\te{G}_{(n)}^{\dagger}(\ma{A}^{(n)})^{\T} )\big\|\notag\\
&\stackrel{(b)}{\ge} 1 - \|\te{G}_{(n)}\|\big\|\ma{M}_{t,n}[\te{G}_{t-1}]_{(n)}^{\dagger} (\ma{A}_{t-1}^{(n)})^{\T}
- \ma{M}_n\te{G}_{(n)}^{\dagger}(\ma{A}^{(n)})^{\T}\big\|\notag\\
&\stackrel{(c)}{=} 1 - \|\te{G}_{(n)}\|\big\| ( \ma{A}_{t-1}^{(n)} [\te{G}_{t-1}]_{(n)} \ma{M}_{t,n}^{\T})^{\dagger}
-  (\ma{A}^{(n)} \te{G}_{(n)}\ma{M}_n^{\T})^{\dagger}\big\|\notag\\
&\stackrel{(d)}{\ge} 1- \|\te{G}_{(n)}\| \sqrt{2} \|[\te{G}_{t-1}]_{(n)}^{\dagger}\| \|\te{G}_{(n)}^{\dagger}\|
\|\ma{A}_{t-1}^{(n)} [\te{G}_{t-1}]_{(n)} \ma{M}_{t,n}^{\T} - \ma{A}^{(n)}\te{G}_{(n)}\ma{M}_n^{\T}\|\notag\\
&\stackrel{(e)} = 1 - \|\te{G}_{(n)}\| \sqrt{2}  \|[\te{G}_{t-1}]_{(n)}^{\dagger}\| \|\te{G}_{(n)}^{\dagger}\| \psi_{t-1}\notag\\
&\stackrel{(f)}{\ge} 1- \frac{\sqrt{2}\kappa_n}{g_{\min}} \psi_{t-1}, \label{sigminx2}
\end{align}
where (a) uses $\|\ma{A}^{(n)}\te{G}_{(n)}\ma{M}_n^{\T} (\ma{M}_n\te{G}_{(n)}^{\dagger}(\ma{A}^{(n)})^{\T})\|=1$,
(b) uses $\ma{A}^{(n)}$ has orthonormal columns,
(c) uses $\ma{A}^{(n)}$, $\ma{A}_{t-1}^{(n)}$, $\ma{M}_n$ and $\ma{M}_{t,n}$ all have orthonormal columns,
(d) uses~\cite[Theorem 4.1]{Article:Wedin_BNM73},
(e) uses $\ma{A}_{t-1}^{(n)} [\te{G}_{t-1}]_{(n)} \ma{M}_{t,n}^{\T} - \ma{A}^{(n)}\te{G}_{(n)}\ma{M}_n^{\T}
=[[\llbracket \te{G}_{t-1}; \ma{A}_t^{(1)},\dots,\ma{A}_{t-1}^{(N)}\rrbracket -\te{T}]_{(n)}$ and the definition of $\psi_t$,
(f) uses $\kappa(\te{G}_{(n)})\le \kappa_n$.

Then it follows that
\begin{align}
\sigma_{\min}(\wtd{\ma{X}}_{\opt})
&\stackrel{(g)}{\ge} \sigma_{\min}({\ma{X}}_{\opt}) - \|\wtd{\ma{X}}_{\opt} - \ma{X}_{\opt}\| \notag\\
&\stackrel{(h)}{\ge} 1- \frac{\sqrt{2}\kappa_n}{g_{\min}} \psi_{t-1}
-  \frac{6g_{\max}(C_1+C_2)}{g_{\min}}\sqrt{\frac{\alpha}{p}} \sin\theta_{t-1},
\quad \mbox{w.p. $\ge 1 - 2/J_{\min}^{10+\log\alpha}$,} \label{wtdx}
\end{align}
where (g) uses Lemma~\ref{lem:eig},
(h) uses \eqref{sigminx2} and Lemma~\ref{lem:xxdif}.

Let $[\ma{A}_*^{(n)}, \ma{A}_c^{(n)}]$ be an orthogonal matrix,
 the QR decomposition of $\wtd{\ma{X}}_{\opt}$ be $\wtd{\ma{X}}_{\opt} = \ma{Q}\ma{R}$, where $\ma{Q}$ has orthonormal columns, $\ma{R}$ is nonsingular.
Then we have
\begin{align*}
\|\sin\Theta(\ma{A}_t^{(n)},\ma{A}^{(n)})\|
&\stackrel{(i)}{=}\|(\ma{A}_c^{(n)})^{\T} \wtd{\ma{X}}_{\opt}\ma{R}^{-1}\|\\
&\le\|(\ma{A}_c^{(n)})^{\T} (\wtd{\ma{X}}_{\opt} -\ma{X}_{\opt})\ma{R}^{-1}\| + \|(\ma{A}_c^{(n)})^{\T} \ma{X}_{\opt}\ma{R}^{-1}\|_F\\
&\stackrel{(j)}{\le} \|\wtd{\ma{X}}_{\opt} -\ma{X}_{\opt}\| \|\ma{R}^{-1}\|\\
&\stackrel{(k)}{\le} \frac{\frac{6g_{\max}(C_1+C_2)}{g_{\min}}\sqrt{\frac{\alpha}{p}} \sin\theta_{t-1} }{1- \frac{\sqrt{2}\kappa_n}{g_{\min}} \psi_{t-1}
-  \frac{6g_{\max}(C_1+C_2)}{g_{\min}}\sqrt{\frac{\alpha}{p}} \sin\theta_{t-1}},
\end{align*}
where (i) uses Lemma~\ref{lem:sin},
(j) uses $(\ma{A}_c^{(n)})^{\T} \ma{X}_{\opt}=0$ since $\ma{X}_{\opt}=\ma{A}_*^{(n)}[\te{G}_*]_{(n)}\ma{M}_{*,n}^{\T} \te{S}_{(n)}^{\dagger}$, (k) uses Lemma~\ref{lem:xxdif} and \eqref{wtdx}.
The conclusion follows immediately.
\end{proof}

\begin{lemma}\label{lem:full_resid}
Denote $\sin\theta_t=\max_n \|\sin\Theta(\ma{A}_t^{(n)},\ma{A}_*^{(n)})\|$, $g_{\min}=\min_n \sigma_{\min}(\te{G}_{(n)})$,
and
let  $\phi_t$ be the same as in Lemma~\ref{lem:xxcore}.
Assume {\bf A2}.
Then
\begin{align*}
g_{\min} \; \sin\theta_t  \le \phi_t \le \|\te{T}\|_F [(1+\sin\theta_t)^N-1].
\end{align*}
\end{lemma}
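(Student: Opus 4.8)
The plan is to establish both inequalities by relating $\phi_t$ to $\sin\theta_t$ through the telescoping decomposition of the projection error. First I would fix notation: write $\te{T}=\llbracket \te{G}_*; \ma{A}_*^{(1)},\dots,\ma{A}_*^{(N)}\rrbracket$ by assumption {\bf A2}, and recall that $\te{X}_{\opt}$ is the least-squares solution to $\min_{\te{X}}\|\llbracket \te{X}; \ma{A}_t^{(1)},\dots,\ma{A}_t^{(N)}\rrbracket -\te{T}\|_F$, so that $\llbracket \te{X}_{\opt}; \ma{A}_t^{(1)},\dots,\ma{A}_t^{(N)}\rrbracket$ is the orthogonal projection of $\te{T}$ onto the subspace $\mathcal{R}(\ma{A}_t^{(N)}\otimes\dots\otimes \ma{A}_t^{(1)})$ (in vectorized form), call this projector $P_t$. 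Hence $\phi_t = \|(I-P_t)\vec(\te{T})\|$.

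For the upper bound, the key step is a hybrid/telescoping argument: write $\te{T}=\llbracket \te{G}_*;\ma{A}_*^{(1)},\dots,\ma{A}_*^{(N)}\rrbracket$ and consider the tensor $\te{Y}=\llbracket \te{G}_*; \ma{A}_t^{(1)}(\ma{A}_t^{(1)})^{\T}\ma{A}_*^{(1)},\dots, \ma{A}_t^{(N)}(\ma{A}_t^{(N)})^{\T}\ma{A}_*^{(N)}\rrbracket$, which lies in the range of $P_t$. Since $\phi_t\le \|\te{T}-\te{Y}\|_F$, I would expand $\te{T}-\te{Y}$ by swapping one factor $\ma{A}_*^{(n)}$ for $\ma{A}_t^{(n)}(\ma{A}_t^{(n)})^{\T}\ma{A}_*^{(n)}$ at a time, getting $N$ terms; the $n$th term has norm bounded by $\|\te{T}\|_F$ times a product over the already-swapped modes of $\|\ma{A}_t^{(k)}(\ma{A}_t^{(k)})^{\T}\ma{A}_*^{(k)}\|\le 1$ and, on mode $n$, $\|\ma{A}_*^{(n)}-\ma{A}_t^{(n)}(\ma{A}_t^{(n)})^{\T}\ma{A}_*^{(n)}\| = \|(I-\ma{A}_t^{(n)}(\ma{A}_t^{(n)})^{\T})\ma{A}_*^{(n)}\| = \|\sin\Theta(\ma{A}_t^{(n)},\ma{A}_*^{(n)})\|\le\sin\theta_t$ by Lemma~\ref{lem:sin}. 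Summing the resulting geometric-type bound over $n$ gives $\sum_{n=1}^N \|\te{T}\|_F (\sin\theta_t)(1+\sin\theta_t)^{?}$; a careful bookkeeping of which modes carry a $\sin\theta_t$ factor and which carry a $1$ yields exactly $\|\te{T}\|_F[(1+\sin\theta_t)^N-1]$ after recognizing the sum as the binomial expansion minus its leading term.

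For the lower bound, I would argue along a single mode. Pick the mode $n$ attaining $\sin\theta_t=\|\sin\Theta(\ma{A}_t^{(n)},\ma{A}_*^{(n)})\|$. Working with mode-$n$ unfoldings, $[\te{T}]_{(n)}=\ma{A}_*^{(n)}[\te{G}_*]_{(n)}\ma{M}_{*,n}^{\T}$, and the projected tensor has mode-$n$ unfolding in the row space spanned by $\ma{M}_{t,n}^{\T}$ and column space in $\mathcal{R}(\ma{A}_t^{(n)})$; in particular $(I-\ma{A}_t^{(n)}(\ma{A}_t^{(n)})^{\T})[\llbracket\te{X}_{\opt};\dots\rrbracket]_{(n)}=0$. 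Therefore $\phi_t=\|(I-P_t)\vec(\te{T})\|\ge \|(I-\ma{A}_t^{(n)}(\ma{A}_t^{(n)})^{\T})[\te{T}]_{(n)}\|_F \ge \sigma_{\min}([\te{G}_*]_{(n)})\cdot\|(I-\ma{A}_t^{(n)}(\ma{A}_t^{(n)})^{\T})\ma{A}_*^{(n)}\| = g_{\min}\sin\theta_t$, using that $\ma{M}_{*,n}$ has orthonormal columns and that multiplying on the right by a matrix with orthonormal columns preserves the relevant singular value, together with $\sigma_{\min}(\ma{A}_*^{(n)}[\te{G}_*]_{(n)})\ge g_{\min}$.

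The main obstacle I anticipate is the bookkeeping in the telescoping step for the upper bound: one must be careful that each of the $2^N$ (or $N$, after grouping) cross terms is assigned the correct power of $\sin\theta_t$, and that the bound $\|\ma{A}_t^{(k)}(\ma{A}_t^{(k)})^{\T}\ma{A}_*^{(k)}\|\le 1$ is used for the ``unperturbed'' modes while $\sin\theta_t$ is used for the ``perturbed'' ones, so that the sum collapses exactly to $(1+\sin\theta_t)^N-1$ rather than a looser bound like $N(1+\sin\theta_t)^{N-1}\sin\theta_t$. A clean way to do this is to write $\ma{A}_*^{(k)} = \ma{A}_t^{(k)}(\ma{A}_t^{(k)})^{\T}\ma{A}_*^{(k)} + (I-\ma{A}_t^{(k)}(\ma{A}_t^{(k)})^{\T})\ma{A}_*^{(k)}$ in each mode, expand the full multilinear product into $2^N$ terms, note the all-``projected'' term lies in $\mathrm{range}(P_t)$ and hence does not increase $\phi_t$ beyond $\|\te{T}-\te{Y}\|_F$, and bound each remaining term by $\|\te{G}_*\|_F\le\|\te{T}\|_F$ times $(\sin\theta_t)^{|S|}$ where $S\neq\emptyset$ is the set of modes using the second summand; summing $\binom{N}{j}(\sin\theta_t)^j$ over $j\ge 1$ gives the claim.
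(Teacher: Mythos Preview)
Your proposal is correct and follows essentially the same route as the paper. For the upper bound, the paper writes $\phi_t=\|\llbracket \te{T}; \ma{P}_{*,1}+(\ma{P}_{t,1}-\ma{P}_{*,1}),\dots\rrbracket-\te{T}\|_F$ and expands binomially using $\|\ma{P}_{t,n}-\ma{P}_{*,n}\|\le\sin\theta_t$, whereas you split each $\ma{A}_*^{(k)}$ as $\ma{P}_{t,k}\ma{A}_*^{(k)}+(I-\ma{P}_{t,k})\ma{A}_*^{(k)}$ and expand; both lead to the identical sum $\sum_{j\ge 1}\binom{N}{j}(\sin\theta_t)^j\|\te{T}\|_F$. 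For the lower bound, both arguments pick the mode attaining $\sin\theta_t$, use that $P_t\le I\otimes\cdots\otimes I\otimes\ma{P}_{t,n}$ to get $\phi_t\ge\|(I-\ma{P}_{t,n})\te{T}_{(n)}\|_F$, and finish via $\sigma_{\min}([\te{G}_*]_{(n)})$ and the orthonormality of $\ma{M}_{*,n}$. One cosmetic remark: your first description of the upper bound as an $N$-term telescoping sum would only yield $N\sin\theta_t\|\te{T}\|_F$, but you correctly identify in your ``obstacle'' paragraph that the full $2^N$-term expansion is what gives $(1+\sin\theta_t)^N-1$; make sure the write-up uses the latter from the start.
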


\begin{proof}
Since {\bf A2} and $\ma{A}_t^{(n)}$ has orthonormal columns for all $n$, we know that
\begin{align}\label{xopt}
\te{X}_{\opt}=\llbracket \te{G}_*; (\ma{A}_t^{(1)})^{\T}\ma{A}_*^{(1)},\dots,(\ma{A}_t^{(N)})^{\T}\ma{A}_*^{(N)}\rrbracket.
\end{align}

Let $\ma{P}_{*,n}=\ma{A}_*^{(n)}(\ma{A}_*^{(n)})^{\T}$, $\ma{P}_{t,n}=\ma{A}_t^{(n)}(\ma{A}_t^{(n)})^{\T}$ for all $n$.
By Lemma~\ref{lem:sin}, we have
\begin{align}\label{pptheta}
\|\ma{P}_{t,n} - \ma{P}_{*,n}\| \le \sin\theta_t.
\end{align}

For the upper bound of $\phi_t$, by calculations, we have
\begin{align*}
\phi_t &\stackrel{(a)}{=} \|\llbracket \te{G}_*; \ma{P}_{1,t}\ma{A}_*^{(1)},\dots,\ma{P}_{N,t}\ma{A}_*^{(N)}\rrbracket - \llbracket \te{G}_*; \ma{A}^{(1)}_*,\dots,\ma{A}_*^{(N)}\rrbracket\|_F\\
&=\|\llbracket \te{T}; \ma{P}_{*,1} +(\ma{P}_{1,t} -\ma{P}_{*,1}),\dots,\ma{P}_{*,N} +(\ma{P}_{N,t} -\ma{P}_{*,N})\rrbracket - \te{T}\|_F\\
&\stackrel{(b)}{\le}  \sum_{n=1}^N
\begin{pmatrix}N\\ n\end{pmatrix} \sin\theta^n_t \|\te{T}\|_F
= \|\te{T}\|_F [(1+ \sin\theta_t)^N-1]
\end{align*}
where (a) uses {\bf A2} and \eqref{xopt},
(b) uses \eqref{pptheta} and {\bf A2}.

For the lower bound of $\phi_t$, let us first
fix all $\ma{P}_{t,n}$ except one, say $\ma{P}_{t,1}$.
Unfolding
\[
\llbracket \te{G}_*; \ma{P}_{t,1} \ma{A}_*^{(1)},\dots,\ma{P}_{t,N}\ma{A}_*^{(N)}\rrbracket - \llbracket \te{G}_*; \ma{A}_*^{(1)},\dots,\ma{A}_*^{(N)}\rrbracket
\]
 along mode-1, we get
\begin{align*}
\ma{P}_{t,1}\ma{A}_*^{(1)} \te{T}_{(n)} \ma{P}_{t,N}\ma{A}_*^{(N)}\otimes \dots\otimes \ma{P}_{t,2}\ma{A}_*^{(2)}
-\ma{A}_*^{(1)}\te{T}_{(n)} \ma{A}_*^{(N)}\otimes \dots\otimes \ma{A}_*^{(2)},
\end{align*}
whose Frobenius norm is minimized at
\begin{align}
\ma{P}_{t,1}=\ma{A}_*^{(1)}\te{T}_{(n)} \ma{A}_*^{(N)}\otimes \dots\otimes \ma{A}_*^{(2)} \big(\ma{A}_*^{(1)} \te{T}_{(n)} \ma{P}_{t,N}\ma{A}_*^{(N)}\otimes \dots\otimes \ma{P}_{t,2}\ma{A}_*^{(2)}\big)^{\dagger}.
\end{align}
Using {\bf A4} and the definition of $\ma{P}_{1,t}$, we have $\mathcal{R}(\ma{A}_t^{(1)})=\mathcal{R}(\ma{A}_*^{(1)})$.
Without loss of generality, let $\theta_{1,t}=\theta_t$.
By calculaitons, we have
\begin{align*}
\phi_t &\ge \|\llbracket \te{G}_*; \ma{P}_{1,t}\ma{A}_*^{(1)},\ma{A}_*^{(2)},\dots,\ma{A}_*^{(N)}\rrbracket - \llbracket \te{G}_*; \ma{A}_*^{(1)},\dots,\ma{A}_*^{(N)}\rrbracket\|_F\\
&= \| \te{T}\times_1(\ma{P}_{1,t} -\ma{P}_{*,1})\|_F = \|(\ma{P}_{1,t} -\ma{P}_{*,1})\te{T}_{(1)}\|_F\\
&= \| (\ma{P}_{1,t} -\ma{P}_{*,1})\ma{A}_*^{(1)}[\te{G}_*]_{(1)} (\ma{A}_*^{(N)}\otimes \dots\otimes \ma{A}_*^{(2)})^{\T}\|_F
\stackrel{(c)}{=}\| (\ma{P}_{1,t} -\ma{P}_{*,1})\ma{A}^{(1)}[\te{G}_*]_{(1)}\|_F\\
&\ge \|(\ma{I} - \ma{P}_{1,t})\ma{A}_*^{(1)}\|_F \; \sigma_{\min}([\te{G}_*]_{(1)})
\stackrel{(d)}{\ge} g_{\min} \sin\theta_t,
\end{align*}
where (c) uses $\ma{A}_*^{(N)}\otimes \dots\otimes \ma{A}_*^{(2)}$ is orthonormal,
(d) uses Lemma~\ref{lem:sin}.
This completes the proof.
\end{proof}

\vspace{0.2in}
Now we are ready to show the main theorem.\\

\noindent{\bf Theorem~4.3.}\;
Follow the notations in Lemma~\ref{lem:xxdif}. 
Assume {\bf A1-A5}, $p\in[4p_*,0.5]$, and
\begin{align*}
\mu=\frac{3}{\sqrt{2}} \frac{\|\te{T}\|_F}{g_{\min}} \frac{[(1+\sin\theta_0)^N-1]}{\sin\theta_0}\times
\frac{6g_{\max}(C_1+C_2)\sqrt{\frac{\alpha}{p}}}{g_{\min}- \sqrt{2}\kappa \psi_0 - 6g_{\max}(C_1+C_2)\sqrt{\frac{\alpha}{p}} \sin\theta_0 }<\frac{\gamma}{7\Gamma}.
\end{align*}
Then
\begin{align*}
\|\llbracket \te{G}_t; \ma{A}_t^{(1)},\dots,\ma{A}_t^{(N)}\rrbracket -\te{T}\|_F
\le \frac{7\mu \Gamma}{\gamma}\|\llbracket \te{G}_{t-1}; \ma{A}_{t-1}^{(1)},\dots,\ma{A}_{t-1}^{(N)}\rrbracket -\te{T}\|_F,\quad \mbox{w.h.p.}
\end{align*}
In other words, Algorithm~\ref{alg:tc} converges to the exact solution at a linear rate, {\it w.h.p}.

\begin{proof}
First, we show $\psi_{t+1}\le \mu\psi_t$ by mathematical induction.

Consider $t=1$.
By Lemma~\ref{lem:tt} and $\mu<1$, we know that
\begin{align}\label{t01}
\sin\theta_1\le
\frac{6g_{\max}(C_1+C_2)\sqrt{\frac{\alpha}{p}}}{g_{\min}- \sqrt{2}\kappa \psi_0 - 6g_{\max}(C_1+C_2)\sqrt{\frac{\alpha}{p}} \sin\theta_0 }\sin\theta_0<\sin\theta_0.
\end{align}
Then
\begin{align}
\psi_1 \stackrel{(a)}{\le} \frac{3}{\sqrt{2}}\phi_1
\stackrel{(b)}{\le} \frac{3}{\sqrt{2}} \|\te{T}\|_F [(1+\sin\theta_1)^N-1]
\stackrel{(c)}{\le} \frac{3}{\sqrt{2}} \|\te{T}\|_F \frac{[(1+\sin\theta_0)^N-1]}{\sin\theta_0} \sin\theta_1
\stackrel{(d)}{\le} \mu  g_{\min}\sin\theta_0
\stackrel{(e)}{\le} \mu \psi_0
\stackrel{(f)}{<}\psi_0,\label{psi01}
\end{align}
where (a) uses Lemma~\ref{lem:xxcore}, (b) (e) uses Lemma~\ref{lem:full_resid},
(c) uses $\sin\theta_1<\sin\theta_0$ (by \eqref{t01} and $\mu<1$),
(d) uses \eqref{t01}, (f) uses $\mu<1$.

Now assume $\sin\theta_t<\sin\theta_{t-1}$, $\psi_t<\mu \psi_{t-1}$ for $t\le T$.
Then together with Lemma~\ref{lem:tt} and $\mu<1$, we have
\begin{align*}
\sin\theta_{T+1}
&\le \frac{6g_{\max}(C_1+C_2)\sqrt{\frac{\alpha}{p}}}{g_{\min}- \sqrt{2}\kappa \psi_T - 6g_{\max}(C_1+C_2)\sqrt{\frac{\alpha}{p}} \sin\theta_T }\sin\theta_T\\
&\le \frac{6g_{\max}(C_1+C_2)\sqrt{\frac{\alpha}{p}}}{g_{\min}- \sqrt{2}\kappa \psi_0 - 6g_{\max}(C_1+C_2)\sqrt{\frac{\alpha}{p}} \sin\theta_0 }\sin\theta_T
<\sin\theta_T.
\end{align*}
And similar to the proof of \eqref{psi01}, we get $\psi_{T+1}\le \mu \psi_T$.

Second,
let
$\wtd{\te{X}}_{\opt} =\argmin\|\PO(\llbracket {\te{X}}; \ma{A}_t^{(1)},\dots,\ma{A}_t^{(N)}\rrbracket -\te{T})\|_F$.
Using Lemma~\ref{lem:vecp} and {\bf A4}, we have {\it w.h.p}. that
\begin{align*}
\|\llbracket \te{G}_t; \ma{A}_t^{(1)},\dots,\ma{A}_t^{(N)}\rrbracket -\te{T}\|_F
&\le \frac{2}{\sqrt{p}} \tau_t
\le \frac{2}{\sqrt{p}}  \frac{1}{\gamma}
\|\PO(\llbracket \wtd{\te{X}}_{\opt}; \ma{A}_t^{(1)},\dots,\ma{A}_t^{(N)}\rrbracket)-\PO(\te{T})\|_F\\
&\le \frac{2}{\sqrt{p}} \frac{1}{\gamma}  \frac{\sqrt{7p}}{2} \|\llbracket \wtd{\te{X}}_{\opt}; \ma{A}_t^{(1)},\dots,\ma{A}_t^{(N)}\rrbracket -\te{T}\|_F
= \frac{\sqrt{7}}{\gamma}\psi_t,\\
\|\llbracket \te{G}_t; \ma{A}_t^{(1)},\dots,\ma{A}_t^{(N)}\rrbracket -\te{T}\|_F
&\ge \frac{2}{\sqrt{7p}} \tau_t
\ge \frac{2}{\sqrt{7p}}  \frac{1}{\Gamma}
\|\PO(\llbracket \wtd{\te{X}}_{\opt}; \ma{A}_t^{(1)},\dots,\ma{A}_t^{(N)}\rrbracket)-\PO(\te{T})\|_F\\
&\ge \frac{2}{\sqrt{7p}} \frac{1}{\Gamma}  \frac{\sqrt{p}}{2} \|\llbracket \wtd{\te{X}}_{\opt}; \ma{A}_t^{(1)},\dots,\ma{A}_t^{(N)}\rrbracket -\te{T}\|_F
= \frac{1}{\sqrt{7} \Gamma}\psi_t,
\end{align*}
Combining them with $\psi_{t+1}\le \mu\psi_t$, we get
\begin{align*}
\|\llbracket \te{G}_t; \ma{A}_t^{(1)},\dots,\ma{A}_t^{(N)}\rrbracket -\te{T}\|_F
\le\frac{\sqrt{7}}{\gamma}\psi_t
\le\frac{\sqrt{7}}{\gamma}\mu\psi_{t-1}
\le \frac{7\mu\Gamma}{\gamma}\|\llbracket \te{G}_{t-1}; \ma{A}_t^{(1)},\dots,\ma{A}_t^{(N)}\rrbracket -\te{T}\|_F.
\end{align*}
This completes the proof.
\end{proof}

\section{Assumption (11) in Lemma~4.1}
In this section, we first give a lemma,
then give Proposition~\ref{prop:sub}, a proof for assumption (11) in Lemma~4.1.
\begin{lemma} \label{lem:kron}
Let $\ma{A}_k$, $\wht{\ma{A}}_k\in\R^{I_n\times r_n}$ have orthonormal columns for $k=1,\dots,N$, and denote
\begin{align*}
\ma{M}=\ma{A}_1\otimes \ma{A}_2\otimes \dots\otimes\ma{A}_N,
\quad \wht{\ma{M}}=\wht{\ma{A}}_1\otimes \wht{\ma{A}}_2
\otimes \dots\otimes \wht{\ma{A}}_N.
\end{align*}
Then
\[
\|\sin\Theta(\ma{M},\wht{\ma{M}})\|\le 2^{\frac{N-1}{2}} \max_k \{ \|\sin\Theta(\ma{A}_k,\wht{\ma{A}}_k)\|\}.
\]
\end{lemma}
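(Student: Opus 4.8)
The plan is to reduce the bound to an elementary scalar inequality through the singular-value description of canonical angles. First I would note that a Kronecker product of matrices with orthonormal columns again has orthonormal columns, since
\[
\ma{M}^{\T}\ma{M}=(\ma{A}_1^{\T}\ma{A}_1)\otimes\cdots\otimes(\ma{A}_N^{\T}\ma{A}_N)=\ma{I},
\]
and likewise for $\wht{\ma{M}}$; moreover $\mathcal{R}(\ma{M})$ and $\mathcal{R}(\wht{\ma{M}})$ have the same dimension $\prod_k r_k$, so $\Theta(\ma{M},\wht{\ma{M}})$ is well defined. By the definition of canonical angles in Section~\ref{sec:pre} (together with Lemma~\ref{lem:sin}), the singular values of $\ma{M}^{\T}\wht{\ma{M}}$ are exactly the cosines of these angles, hence $\|\sin\Theta(\ma{M},\wht{\ma{M}})\|$---the sine of the largest canonical angle---equals $\sqrt{1-\sigma_{\min}^2(\ma{M}^{\T}\wht{\ma{M}})}$.

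Next I would exploit that $\ma{M}^{\T}\wht{\ma{M}}=(\ma{A}_1^{\T}\wht{\ma{A}}_1)\otimes\cdots\otimes(\ma{A}_N^{\T}\wht{\ma{A}}_N)$ and that the singular values of a Kronecker product are precisely the products of the singular values of the factors, so that $\sigma_{\min}(\ma{M}^{\T}\wht{\ma{M}})=\prod_{k=1}^N\sigma_{\min}(\ma{A}_k^{\T}\wht{\ma{A}}_k)$. Writing $s_k=\|\sin\Theta(\ma{A}_k,\wht{\ma{A}}_k)\|$, so that $\sigma_{\min}^2(\ma{A}_k^{\T}\wht{\ma{A}}_k)=1-s_k^2$, and $s=\max_k s_k$, this gives
\[
\|\sin\Theta(\ma{M},\wht{\ma{M}})\|^2=1-\prod_{k=1}^N(1-s_k^2)\le 1-(1-s^2)^N,
\]
using $0\le 1-s^2\le 1-s_k^2$. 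It then remains to establish the elementary inequality $1-(1-x)^N\le 2^{N-1}x$ for $x\in[0,1]$; applying it with $x=s^2$ and taking square roots yields the claim. For that inequality I would set $f(x)=2^{N-1}x-1+(1-x)^N$, note $f(0)=0$, and observe that $f'(x)=2^{N-1}-N(1-x)^{N-1}$ is nondecreasing on $[0,1]$ (its derivative is $N(N-1)(1-x)^{N-2}\ge 0$) with $f'(0)=2^{N-1}-N\ge 0$, since $2^{N-1}\ge N$ for every $N\ge 1$; hence $f\ge 0$ on $[0,1]$.

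I do not anticipate a serious obstacle; the step requiring most care is simply the identity $\|\sin\Theta(\cdot,\cdot)\|=\sqrt{1-\sigma_{\min}^2(\cdot)}$ of the appropriate cross Gram matrix---immediate from the definitions and Lemma~\ref{lem:sin}---together with the Kronecker singular-value bookkeeping. It is worth remarking that a cruder route, telescoping $\ma{M}\ma{M}^{\T}-\wht{\ma{M}}\wht{\ma{M}}^{\T}$ into $N$ summands and bounding each by $s$ via Lemma~\ref{lem:sin}, only delivers the weaker constant $N$ in place of $2^{(N-1)/2}$, so passing through the singular values of $\ma{M}^{\T}\wht{\ma{M}}$ is what secures the stated bound.
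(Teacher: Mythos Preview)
Your proposal is correct and follows essentially the same approach as the paper: both identify $\|\sin\Theta(\ma{M},\wht{\ma{M}})\|=\sqrt{1-\sigma_{\min}^2(\ma{M}^{\T}\wht{\ma{M}})}$ and use $\sigma_{\min}(\ma{M}^{\T}\wht{\ma{M}})=\prod_k\sigma_{\min}(\ma{A}_k^{\T}\wht{\ma{A}}_k)$ to reduce to a scalar inequality. The only stylistic difference is that the paper proves the $N=2$ case via the factorisation $1-a^4=(1+a^2)(1-a^2)\le 2(1-a^2)$ and then recurses, whereas you establish $1-(1-x)^N\le 2^{N-1}x$ directly for general $N$; the two arguments are interchangeable.
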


\begin{proof}
Consider $N=2$. Without loss of generality, let $\|\sin\Theta(\ma{A}_1,\wht{\ma{A}}_1)\| \ge \|\sin\Theta(\ma{A}_2,\wht{\ma{A}}_2)\|$.
Then by the definition of principal angle, it holds
\begin{align}\label{sigaa}
\sigma_{\min}(\ma{A}_1^{\T}\wht{\ma{A}}_1)\ge \sigma_{\min}(\ma{A}_2^{\T}\wht{\ma{A}}_2).
\end{align}
Noticing that
\begin{align}\label{sigmm}
\sigma_{\min}(\ma{M}^{\T}\wht{\ma{M}})
=\sigma_{\min}(\ma{A}_1^{\T}\wht{\ma{A}}_1 \otimes \ma{A}_2^{\T}\wht{\ma{A}}_2)
=\sigma_{\min}(\ma{A}_1^{\T}\wht{\ma{A}}_1 )\sigma_{\min} (\ma{A}_2^{\T}\wht{\ma{A}}_2),
\end{align}
by calculations, we have
\begin{align*}
\|\sin\Theta(\ma{M},\wht{\ma{M}})\|
&\stackrel{(a)}{=} \sqrt{1-\sigma_{\min}^2(\ma{M}^{\T}\wht{\ma{M}})}
\stackrel{(b)}{=} \sqrt{1-\sigma_{\min}^2(\ma{A}_1^{\T}\wht{\ma{A}}_1)   \sigma_{\min}^2(\ma{A}_2^{\T}\wht{\ma{A}}_2)}\\
&\stackrel{(c)}{\le} \sqrt{1-\sigma_{\min}^4(\ma{A}_1^{\T}\wht{\ma{A}}_1) }
\le \sqrt{1+\sigma_{\min}^2(\ma{A}_1^{\T}\wht{\ma{A}}_1) } \sqrt{1-\sigma_{\min}^2(\ma{A}_1^{\T}\wht{\ma{A}}_1) }\\
&\le \sqrt{2} \|\sin\Theta(\ma{A}_1,\wht{\ma{A}}_1)\|.
\end{align*}
where (a) uses the definition of principal angle,
(b) uses \eqref{sigmm},
(c) uses \eqref{sigaa}.
The conclusion follows by recursively applying the above result for $N=2$.
\end{proof}

\begin{proposition}\label{prop:sub}
Follow the notations and assumptions in Lemma~\ref{lem:xxcore}.
Then
\begin{align*}
\|\sin\Theta( \ma{M}_{t,t,n} [\wtd{\te{X}}_{\opt}]_{(n)}^{\T},  \ma{M}_{*,n}[\te{G}_*]_{(n)}^{\T})\|\le C\sin\theta_t,
\end{align*}
where
\[
C= \frac{2^{\frac{N-2}{2}} g_{\max} }{g_{\min} - 2^{\frac{N-2}{2}} g_{\max} \sin\theta_t}
+ \frac{\frac{6\sqrt{\alpha J_n}}{\sqrt{p}} g_{\max} \big(
\prod_{k\ne n} \sqrt{\frac{\mu_k r_k}{I_k}} + \sqrt{\frac{\mu_n r_n}{I_n}} 2^{\frac{N-2}{2}}
\big)}
{(1-\frac{6\sqrt{\alpha J_n}}{\sqrt{p}}  \sqrt{\frac{\mu_n r_n}{I_n}} )(g_{\min} - 2^{\frac{N-2}{2}} g_{\max} \sin\theta_t)}.
\]

\end{proposition}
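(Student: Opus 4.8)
The plan is to bring in the full-observation least-squares solution $\te{X}_{\opt}=\argmin_{\te{X}}\|\llbracket \te{X};\ma{A}_t^{(1)},\dots,\ma{A}_t^{(N)}\rrbracket-\te{T}\|_F$ as an intermediate point. Since $\|\sin\Theta(\cdot,\cdot)\|$ equals an operator norm of a difference of orthogonal projectors (Lemma~\ref{lem:sin}) it obeys a triangle inequality, so with $\ma{M}_{t,n}:=\bigotimes_{k\ne n}\ma{A}_t^{(k)}$ (all factors at level $t$, as in \eqref{mm}) I would split
\begin{align*}
\|\sin\Theta(\ma{M}_{t,n}[\wtd{\te{X}}_{\opt}]_{(n)}^{\T},\ma{M}_{*,n}[\te{G}_*]_{(n)}^{\T})\|
\le\underbrace{\|\sin\Theta(\ma{M}_{t,n}[\wtd{\te{X}}_{\opt}]_{(n)}^{\T},\ma{M}_{t,n}[\te{X}_{\opt}]_{(n)}^{\T})\|}_{=:\,T_1}
+\underbrace{\|\sin\Theta(\ma{M}_{t,n}[\te{X}_{\opt}]_{(n)}^{\T},\ma{M}_{*,n}[\te{G}_*]_{(n)}^{\T})\|}_{=:\,T_2},
\end{align*}
where $T_2$ will produce the first summand of $C$ and $T_1$ the second.

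For $T_2$, I would use {\bf A2} to write $\te{X}_{\opt}=\llbracket \te{G}_*;(\ma{A}_t^{(1)})^{\T}\ma{A}_*^{(1)},\dots,(\ma{A}_t^{(N)})^{\T}\ma{A}_*^{(N)}\rrbracket$ (as in the proof of Lemma~\ref{lem:full_resid}), unfold along mode $n$, and use the mixed-product rule for $\otimes$ to get $\ma{M}_{t,n}[\te{X}_{\opt}]_{(n)}^{\T}=\ma{P}_{\otimes}\,\ma{M}_{*,n}[\te{G}_*]_{(n)}^{\T}\,\big((\ma{A}_t^{(n)})^{\T}\ma{A}_*^{(n)}\big)^{\T}$, where $\ma{P}_{\otimes}=\bigotimes_{k\ne n}\ma{A}_t^{(k)}(\ma{A}_t^{(k)})^{\T}$ is the orthogonal projector onto $\mathcal{R}(\ma{M}_{t,n})$. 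Since $(\ma{A}_t^{(n)})^{\T}\ma{A}_*^{(n)}$ is invertible for $\sin\theta_t<1$, $\mathcal{R}(\ma{M}_{t,n}[\te{X}_{\opt}]_{(n)}^{\T})$ is $\ma{P}_{\otimes}$ applied to $\mathcal{R}(\ma{M}_{*,n}[\te{G}_*]_{(n)}^{\T})$, and the elementary projection-perturbation bound gives
\begin{align*}
T_2\le\frac{\|(\ma{I}-\ma{P}_{\otimes})\ma{M}_{*,n}[\te{G}_*]_{(n)}^{\T}\|}{\sigma_{\min}(\ma{M}_{*,n}[\te{G}_*]_{(n)}^{\T})-\|(\ma{I}-\ma{P}_{\otimes})\ma{M}_{*,n}[\te{G}_*]_{(n)}^{\T}\|}.
\end{align*}
Bounding $\|(\ma{I}-\ma{P}_{\otimes})\ma{M}_{*,n}\|=\|\sin\Theta(\ma{M}_{t,n},\ma{M}_{*,n})\|\le 2^{(N-2)/2}\sin\theta_t$ by Lemmas~\ref{lem:sin} and~\ref{lem:kron} ($\ma{M}_{t,n}$ is a Kronecker product of $N-1$ factors), together with $\|[\te{G}_*]_{(n)}\|=\|\te{T}_{(n)}\|\le g_{\max}$ and $\sigma_{\min}(\ma{M}_{*,n}[\te{G}_*]_{(n)}^{\T})=\sigma_{\min}([\te{G}_*]_{(n)})\ge g_{\min}$, then gives exactly the first summand of $C$, multiplied by $\sin\theta_t$.

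For $T_1$, since $\ma{M}_{t,n}$ has orthonormal columns it acts as an isometry, so $T_1=\|\sin\Theta([\wtd{\te{X}}_{\opt}]_{(n)}^{\T},[\te{X}_{\opt}]_{(n)}^{\T})\|$; the QR-based $\sin\Theta$ bound of the proof of Lemma~\ref{lem:tt} then gives $T_1\le\|[\wtd{\te{X}}_{\opt}]_{(n)}-[\te{X}_{\opt}]_{(n)}\|/\sigma_{\min}([\te{X}_{\opt}]_{(n)})$, and $\sigma_{\min}([\te{X}_{\opt}]_{(n)})=\sigma_{\min}(\ma{M}_{t,n}[\te{X}_{\opt}]_{(n)}^{\T})\ge g_{\min}-2^{(N-2)/2}g_{\max}\sin\theta_t$ by the difference estimate from the $T_2$ step. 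For the numerator I would rerun the perturbed-least-squares argument of Lemma~\ref{lem:xxdif} adapted to this LLS problem: with $\ma{P}_{t,n}:=\ma{A}_t^{(n)}(\ma{A}_t^{(n)})^{\T}$, the perturbation $\hat{\te{Z}}:=\wtd{\te{X}}_{\opt}-\te{X}_{\opt}$ solves $\min_{\te{Z}}\|\ma{P}_{\Omega}(\ma{A}_t^{(n)}[\te{Z}]_{(n)}\ma{M}_{t,n}^{\T}+\ma{R}_0)\|$ with $\ma{R}_0=\ma{P}_{t,n}\te{T}_{(n)}\ma{P}_{\otimes}-\te{T}_{(n)}$ (so $(\ma{A}_t^{(n)})^{\T}\ma{R}_0\ma{M}_{t,n}=0$); replacing $\tfrac1p\ma{P}_{\Omega}(\cdot)$ by the identity plus errors $\ma{E},\ma{F}$ via Lemma~\ref{lem:prob} and using the normal equations gives $[\hat{\te{Z}}]_{(n)}\ma{M}_{t,n}^{\T}=-(\ma{A}_t^{(n)})^{\T}(\ma{E}+\ma{F})\ma{P}_{\otimes}$, with $\|\ma{E}\|\le\frac{6\sqrt{\alpha J_n}}{\sqrt p}\sqrt{\mu_n r_n/I_n}\,\|[\hat{\te{Z}}]_{(n)}\ma{M}_{t,n}^{\T}\|$ from {\bf A3} and $\|\ma{F}\|\le\frac{6\sqrt{\alpha J_n}}{\sqrt p}\|\ma{R}_0\|_{\max}$. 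Splitting $\ma{R}_0=(\ma{P}_{t,n}-\ma{I})\te{T}_{(n)}+\ma{P}_{t,n}\te{T}_{(n)}(\ma{P}_{\otimes}-\ma{I})$ and using {\bf A2}, {\bf A3}, Lemmas~\ref{lem:sin} and~\ref{lem:kron}, $\|\ma{M}_{*,n}\|_{2,\infty}\le\prod_{k\ne n}\sqrt{\mu_k r_k/I_k}$, and $\|[\te{G}_*]_{(n)}\|\le g_{\max}$ yields $\|\ma{R}_0\|_{\max}\le g_{\max}\big(\prod_{k\ne n}\sqrt{\mu_k r_k/I_k}+\sqrt{\mu_n r_n/I_n}\,2^{(N-2)/2}\big)\sin\theta_t$; solving the scalar inequality $x\le\frac{6\sqrt{\alpha J_n}}{\sqrt p}\sqrt{\mu_n r_n/I_n}\,x+\|\ma{F}\|$ for $x=\|[\hat{\te{Z}}]_{(n)}\ma{M}_{t,n}^{\T}\|$ (legitimate because $p\ge 4p_*$ forces $\frac{6\sqrt{\alpha J_n}}{\sqrt p}\sqrt{\mu_n r_n/I_n}<1$) introduces the factor $(1-\frac{6\sqrt{\alpha J_n}}{\sqrt p}\sqrt{\mu_n r_n/I_n})^{-1}$, and dividing by the denominator above reproduces exactly the second summand of $C$ times $\sin\theta_t$. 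The event carrying all of this has probability $\ge 1-2/J_{\min}^{10+\log\alpha}$ by collecting the failure events of Lemmas~\ref{lem:prob} and~\ref{lem:sigmin} as in the proof of Lemma~\ref{lem:xxdif}.

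I expect the main obstacle to be this last step: the error $\ma{E}$ is self-referential and can only be absorbed into the left-hand side in the regime $p\ge 4p_*$, and extracting from $\|\ma{R}_0\|_{\max}$ precisely the combination $\prod_{k\ne n}\sqrt{\mu_k r_k/I_k}+\sqrt{\mu_n r_n/I_n}\,2^{(N-2)/2}$ requires the careful interplay of the $\|\cdot\|_{2,\infty}$-type inequalities, the Kronecker incoherence $\|\ma{M}_{*,n}\|_{2,\infty}\le\prod_{k\ne n}\sqrt{\mu_k r_k/I_k}$, and Lemma~\ref{lem:sin}, exactly in the spirit of the estimates (a)--(h) in the proof of Lemma~\ref{lem:xxdif}.
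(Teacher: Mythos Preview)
Your proposal is correct and follows essentially the same route as the paper: the same triangle-inequality split through the full-observation solution $\te{X}_{\opt}$, the same use of Lemma~\ref{lem:kron} to control $\|\sin\Theta(\ma{M}_{t,n},\ma{M}_{*,n})\|$, and the same perturbed-LLS argument via Lemma~\ref{lem:prob} with the self-referential bound on $\ma{E}$ absorbed into the left-hand side for the $T_1$ term. One small inaccuracy: your parenthetical claim that $p\ge 4p_*$ forces $\tfrac{6\sqrt{\alpha J_n}}{\sqrt p}\sqrt{\mu_n r_n/I_n}<1$ is not right (compare the order of $p_*\sim\prod_{k\ne n}\mu_k r_k/I_k$ with $\mu_n r_n J_n/I_n$); the paper simply treats the positivity of $1-\tfrac{6\sqrt{\alpha J_n}}{\sqrt p}\sqrt{\mu_n r_n/I_n}$ as implicit in the stated form of $C$, and you should do the same rather than attribute it to $p\ge 4p_*$.
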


\begin{proof}

First, we show an upper bound for $\|\sin\Theta( \ma{M}_{t,t,n} [\te{X}_{\opt}]_{(n)}^{\T},  \ma{M}_{*,n}[\te{G}_*]_{(n)}^{\T})\|$.

Recall the definition of $\te{X}_{\opt}$, we have
\begin{align}\label{xoptn}
[\te{X}_{\opt}]_{(n)}  =  (\ma{A}_t^{(n)})^{\T}\ma{A}_*^{(n)} [\te{G}_*]_{(n)} \ma{M}_{*,n}^{\T} \ma{M}_{t,t,n}.
\end{align}
By calculations, we get
\begin{align}
 &\|\ma{M}_{t,t,n} [\te{X}_{\opt}]_{(n)}^{\T} -  \ma{M}_{*,n} [\te{G}_*]_{(n)}^{\T} (\ma{A}_*^{(n)})^{\T} \ma{A}_t^{(n)}\|\notag\\
 \stackrel{(a)}{=} &\|(\ma{I} -\ma{M}_{t,t,n} \ma{M}_{t,t,n}^{\T} ) \ma{M}_{*,n} [\te{G}_*]_{(n)}^{\T} (\ma{A}_*^{(n)})^{\T} \ma{A}_t^{(n)}\|\notag\\
 \stackrel{(b)}\le & \|\sin\Theta(\ma{M}_{t,t,n},\ma{M}_{*,n})\| g_{\max} \cos\theta_t
 \stackrel{(c)}{\le}  2^{\frac{N-2}{2}} g_{\max} \sin\theta_t \cos\theta_t,\label{mxmg}
\end{align}
where (a) uses \eqref{xoptn}, (b) uses Lemma~\ref{lem:sin},
(c) uses Lemma~\ref{lem:kron}.
It follows that
\begin{align}
\sigma_{\min}(\ma{M}_{t,t,n} [\te{X}_{\opt}]_{(n)}^{\T})
&\stackrel{(d)}{\ge} \sigma_{\min}(\ma{M}_{*,n} [\te{G}_*]_{(n)}^{\T} (\ma{A}_*^{(n)})^{\T} \ma{A}_t^{(n)}) - 2^{\frac{N-2}{2}} g_{\max} \sin\theta_t\cos\theta_t\notag\\
& = \sigma_{\min}([\te{G}_*]_{(n)}^{\T}  (\ma{A}_*^{(n)})^{\T} \ma{A}_t^{(n)}) - 2^{\frac{N-2}{2}} g_{\max} \sin\theta_t \cos\theta_t\notag\\
&\ge g_{\min} \cos\theta_t - 2^{\frac{N-2}{2}} g_{\max} \sin\theta_t \cos\theta_t.\label{sigmx}
\end{align}
where (d) \eqref{mxmg}.
Then we get
\begin{align}
&\|\sin\Theta( \ma{M}_{t,t,n} [\te{X}_{\opt}]_{(n)}^{\T},  \ma{M}_{*,n}[\te{G}_*]_{(n)}^{\T})\|\notag\\
\stackrel{(e)}{\le}& \frac{\|(\ma{I}-\ma{M}_{*,n}[\te{G}_*]_{(n)}^{\T}(\ma{M}_{*,n}[\te{G}_*]_{(n)}^{\T})^{\dagger}) \ma{M}_{t,t,n} [\te{X}_{\opt}]_{(n)}^{\T}\|}{\sigma_{\min}(\ma{M}_{t,t,n} [\te{X}_{\opt}]_{(n)}^{\T})}\notag\\
\stackrel{(f)}\le &\frac{\|(\ma{I}-\ma{M}_{*,n}[\te{G}_*]_{(n)}^{\T}(\ma{M}_{*,n}[\te{G}_*]_{(n)}^{\T})^{\dagger}) (\ma{M}_{t,t,n} [\te{X}_{\opt}]_{(n)}^{\T} -   \ma{M}_{*,n} [\te{G}_*]_{(n)}^{\T} (\ma{A}_*^{(n)})^{\T} \ma{A}_t^{(n)})\|}{g_{\min} \cos\theta_t - 2^{\frac{N-2}{2}} g_{\max} \sin\theta_t \cos\theta_t}\notag\\
\stackrel{(g)}{\le} & \frac{2^{\frac{N-2}{2}} g_{\max} \sin\theta_t }{g_{\min} - 2^{\frac{N-2}{2}} g_{\max} \sin\theta_t},\label{sin1}
\end{align}
where (e) uses Lemma~\ref{lem:sin}, (f) uses \eqref{sigmx}, and (g) uses \eqref{mxmg}.

\vspace{0.1in}

Next, we show an upper bound for $\|\sin\Theta( \ma{M}_{t,t,n} [\wtd{\te{X}}_{\opt}]_{(n)}^{\T},  \ma{M}_{t,t,n} [\te{X}_{\opt}]_{(n)}^{\T})\|$.

Let $\te{X} = \te{X}_{\opt} +  \te{Y}$. Since
$\wtd{\te{X}}_{\opt}=\argmin\|\llbracket \wtd{\te{X}}_{\opt}; \ma{A}_t^{(1)},\dots,\ma{A}_t^{(N)}\rrbracket -\te{T}\|_F$,
Let $\te{Y}$ be the least square solution to
\begin{align}\label{poar}
\frac1p\PO(\ma{A}_t^{(n)} \te{Y}_{(n)} \ma{M}_{t,t,n}^{\T}) =\frac1p \PO(\ma{R}),
\end{align}
where $\ma{R}= \ma{A}_*^{(n)} [\te{G}_*]_{(n)} \ma{M}_{*,n}^{\T} - \ma{A}_t^{(n)}  \te{X}_{\opt} \ma{M}_{t,t,n}^{\T}$,
then $\wtd{\te{X}}_{\opt} = \te{X}_{\opt} +  \te{Y}$.

\vspace{0.1in}

By Lemma~\ref{lem:prob}, {\it w.h.p}., there exist $\ma{E}$ and $\ma{F}$ such that
\begin{align}
\frac1p\PO(\ma{A}_t^{(n)} \te{Y}_{(n)} \ma{M}_{t,t,n}^{\T})=\ma{A}_t^{(n)} \te{Y}_{(n)} \ma{M}_{t,t,n}^{\T} + \ma{E},
\quad
\frac1p \PO(\ma{R})= \ma{R} +\ma{F},
\end{align}
where
\begin{align}
\|\ma{E}\|&\le \frac{6\sqrt{\alpha J_n}}{\sqrt{p}} \|\ma{A}_t^{(n)} \te{Y}_{(n)} \ma{M}_{t,t,n}^{\T}\|_{\max}
\stackrel{(h)}{\le} \frac{6\sqrt{\alpha J_n}}{\sqrt{p}}  \sqrt{\frac{\mu_n r_n}{I_n}}  \|\te{Y}_{(n)}\ma{M}_{t,t,n}^{\T}\|,\label{e2}
\\
\|\ma{F}\|&\le \frac{6\sqrt{\alpha J_n}}{\sqrt{p}} \|\ma{R}\|_{\max}\notag\\
&\le \frac{6\sqrt{\alpha J_n}}{\sqrt{p}} \big(
\|\ma{A}_*^{(n)} [\te{G}_*]_{(n)} \ma{M}_{*,n}^{\T} - \ma{A}_t^{(n)}  (\ma{A}_t^{(n)})^{\T}\ma{A}_*^{(n)} [\te{G}_*]_{(n)} \ma{M}_{*,n}^{\T}\|_{\max}
+\notag\\
&\hskip0.2in \|\ma{A}_t^{(n)}  (\ma{A}_t^{(n)})^{\T}\ma{A}_*^{(n)} [\te{G}_*]_{(n)} \ma{M}_{*,n}^{\T}- \ma{A}_t^{(n)}  (\ma{A}_t^{(n)})^{\T}\ma{A}_*^{(n)} [\te{G}_*]_{(n)} \ma{M}_{*,n}^{\T} \ma{M}_{t,t,n} \ma{M}_{t,t,n}^{\T}\|_{\max}\big)
\notag\\
&\stackrel{(i)}\le \frac{6\sqrt{\alpha J_n}}{\sqrt{p}} g_{\max} \big(
\prod_{k\ne n} \sqrt{\frac{\mu_k r_k}{I_k}} \sin\theta_t + \sqrt{\frac{\mu_n r_n}{I_n}}
\|\sin\Theta(\ma{M}_{t,t,n},\ma{M}_{*,n})\|
\big)\notag\\
&\stackrel{(j)}{\le} \frac{6\sqrt{\alpha J_n}}{\sqrt{p}} g_{\max} \big(
\prod_{k\ne n} \sqrt{\frac{\mu_k r_k}{I_k}} + \sqrt{\frac{\mu_n r_n}{I_n}} 2^{\frac{N-2}{2}}
\big)\sin\theta_t,\label{f2}
\end{align}
where (h) uses {\bf A3}, (i) uses {\bf A3}, {\bf A3}, Lemma~\ref{lem:sin}, and (j) uses Lemma~\ref{lem:kron}.

\vspace{0.1in}

Then we can rewrite \eqref{poar} as $\ma{A}_t^{(n)} \te{Y}_{(n)} \ma{M}_{t,t,n}^{\T} + \ma{E} = \ma{R} +\ma{F}$,
from which we can obtain
\begin{align}\label{teyn}
\te{Y}_{(n)} = (\ma{A}_t^{(n)})^{\T}(\ma{R}+\ma{F}-\ma{E})\ma{M}_{t,t,n} = (\ma{A}_t^{(n)})^{\T}(\ma{F}-\ma{E})\ma{M}_{t,t,n}.
\end{align}
It follows from \eqref{e2}, \eqref{f2} and \eqref{teyn} that
\begin{align}
\|\te{Y}_{(n)} \ma{M}_{t,t,n}^{\T}\|
\le \frac{1}{1-\frac{6\sqrt{\alpha J_n}}{\sqrt{p}}  \sqrt{\frac{\mu_n r_n}{I_n}} }\|\ma{F}\|
\le \frac{\frac{6\sqrt{\alpha J_n}}{\sqrt{p}} g_{\max} \big(
\prod_{k\ne n} \sqrt{\frac{\mu_k r_k}{I_k}} + \sqrt{\frac{\mu_n r_n}{I_n}} 2^{\frac{N-2}{2}}
\big)}{1-\frac{6\sqrt{\alpha J_n}}{\sqrt{p}}  \sqrt{\frac{\mu_n r_n}{I_n}} } \sin\theta_t.\label{ymttn}
\end{align}

Then we get
\begin{align}
&\|\sin\Theta( \ma{M}_{t,t,n} [\wtd{\te{X}}_{\opt}]_{(n)}^{\T},  \ma{M}_{t,t,n} [\te{X}_{\opt}]_{(n)}^{\T})\|\notag\\
\stackrel{(k)}{\le}& \frac{\|(\ma{I}-\ma{M}_{t,t,n} [\wtd{\te{X}}_{\opt}]_{(n)}^{\T}(\ma{M}_{t,t,n} [\wtd{\te{X}}_{\opt}]_{(n)}^{\T})^{\dagger}) \ma{M}_{t,t,n} [\te{X}_{\opt}]_{(n)}^{\T}\|}{\sigma_{\min}(\ma{M}_{t,t,n} [\te{X}_{\opt}]_{(n)}^{\T})}\notag\\
\stackrel{(l)}{\le} &\frac{\|(\ma{I}-\ma{M}_{t,t,n} [\wtd{\te{X}}_{\opt}]_{(n)}^{\T}(\ma{M}_{t,t,n} [\wtd{\te{X}}_{\opt}]_{(n)}^{\T})^{\dagger}) (\ma{M}_{t,t,n} [\te{X}_{\opt}]_{(n)}^{\T} -   \ma{M}_{t,t,n} [\wtd{\te{X}}_{\opt}]_{(n)}^{\T})\|}{g_{\min} \cos\theta_t - 2^{\frac{N-2}{2}} g_{\max} \sin\theta_t \cos\theta_t}\notag\\
\stackrel{(m)}\le & \frac{\frac{6\sqrt{\alpha J_n}}{\sqrt{p}} g_{\max} \big(
\prod_{k\ne n} \sqrt{\frac{\mu_k r_k}{I_k}} + \sqrt{\frac{\mu_n r_n}{I_n}} 2^{\frac{N-2}{2}}
\big)\sin\theta_t}
{(1-\frac{6\sqrt{\alpha J_n}}{\sqrt{p}}  \sqrt{\frac{\mu_n r_n}{I_n}} )(g_{\min} - 2^{\frac{N-2}{2}} g_{\max} \sin\theta_t)},\label{sin2}
\end{align}
where (k) uses Lemma~\ref{lem:sin},
(l) uses~\ref{mxmg}, and
(m) uses \eqref{ymttn}.

\vspace{0.1in}

Finally, combining \eqref{sin1} and \eqref{sin2}, we obtain
\begin{align*}
&\|\sin\Theta( \ma{M}_{t,t,n} [\wtd{\te{X}}_{\opt}]_{(n)}^{\T},  \ma{M}_{*,n}[\te{G}_*]_{(n)}^{\T})\|\\
\le &\|\sin\Theta( \ma{M}_{t,t,n} [\wtd{\te{X}}_{\opt}]_{(n)}^{\T},  \ma{M}_{t,t,n} [\te{X}_{\opt}]_{(n)}^{\T})\|+ \|\sin\Theta( \ma{M}_{t,t,n} [\te{X}_{\opt}]_{(n)}^{\T},  \ma{M}_{*,n}[\te{G}_*]_{(n)}^{\T})\|\\
\le & \frac{2^{\frac{N-2}{2}} g_{\max} \sin\theta_t }{g_{\min} - 2^{\frac{N-2}{2}} g_{\max} \sin\theta_t}
+ \frac{\frac{6\sqrt{\alpha J_n}}{\sqrt{p}} g_{\max} \big(
\prod_{k\ne n} \sqrt{\frac{\mu_k r_k}{I_k}} + \sqrt{\frac{\mu_n r_n}{I_n}} 2^{\frac{N-2}{2}}
\big)\sin\theta_t}
{(1-\frac{6\sqrt{\alpha J_n}}{\sqrt{p}}  \sqrt{\frac{\mu_n r_n}{I_n}} )(g_{\min} - 2^{\frac{N-2}{2}} g_{\max} \sin\theta_t)},
\end{align*}
completing the proof.
\end{proof}

\end{document}